\documentclass[11pt]{article}

\usepackage[margin=1in]{geometry}
\usepackage[utf8]{inputenc}
\usepackage[T1]{fontenc}
\usepackage{authblk} 
\usepackage{hyperref}
\hypersetup{colorlinks=true, linkcolor=blue, citecolor=blue, urlcolor=blue}

\usepackage{microtype}
\usepackage{graphicx}
\usepackage{subfigure}
\usepackage{booktabs}
\usepackage{hyperref}
\usepackage{amsmath}
\usepackage{amssymb}
\usepackage{mathtools}
\usepackage{amsthm}
\usepackage[capitalize,noabbrev]{cleveref}
\usepackage{enumitem}
\usepackage{bm}
\usepackage{tikz}
\usetikzlibrary{positioning,calc,arrows.meta,shapes.geometric,fit}
\usepackage{natbib}

\usepackage{thm-restate}

\DeclareMathOperator*{\argmin}{arg\,min}

\renewcommand{\lim}{\mathop{\mathrm{lim}}\limits}

\newcommand{\bbR}{\mathbb{R}}

\newcommand{\cA}{\mathcal{A}}

\newcommand{\cH}{\mathcal{H}}

\newcommand{\cS}{\mathcal{S}}
\newcommand{\cU}{\mathcal{U}}

\DeclareMathOperator{\indep}{\perp\!\!\!\perp}
\DeclareMathOperator{\dep}{\not \perp\!\!\!\perp}
\newcommand{\given}{\,\big|\,}

\newcommand{\deq}{\stackrel{d}{=}}
\newcommand{\dneq}{\stackrel{d}{\neq}}
\newcommand{\dto}{\stackrel{d}{\longrightarrow}}
\newcommand{\pto}{\stackrel{p}{\longrightarrow}}

\newcommand{\bbE}{\mathbb{E}}
\newcommand{\bbV}{\mathbb{V}}
\newcommand{\cov}{\text{Cov}}

\newcommand{\cN}{\mathcal{N}}

\theoremstyle{definition}
\newtheorem{definition}{Definition}[section]

\newtheorem{theorem}{Theorem}[section]

\newtheorem{lemma}{Lemma}[section]
\newtheorem{setting}{Setting}[section]
\newtheorem{assumption}{Assumption}[section]
\newtheorem*{notation}{Notation}

\theoremstyle{plain}
\newtheorem{example}{Example}[section]
\theoremstyle{remark}
\newtheorem{remark}{Remark}[section]

\newcommand{\bY}{\mathbf{Y}}

\newcommand{\bD}{\mathbf{D}}
\newcommand{\bW}{\mathbf{W}}
\newcommand{\bV}{\mathbf{V}}
\newcommand{\bH}{\mathbf{H}}

\newcommand{\cD}{\mathcal{D}}

\newcommand{\cL}{\mathcal{L}}

\newcommand{\cZ}{\mathcal{Z}}

\title{\textbf{Addressing Instrument-Outcome Confounding in Mendelian Randomization through \\ Representation Learning}}

\author{Shimeng Huang\thanks{shimeng.huang@ist.ac.at}}
\author{Matthew Robinson}
\author{Francesco Locatello}
\affil{Institute of Science and Technology Austria (ISTA)}

\date{February 23, 2026}

\begin{document}

\maketitle

\begin{abstract}
Mendelian Randomization (MR) is a prominent observational epidemiological research method designed to address unobserved confounding when estimating causal effects. However, core assumptions---particularly the independence between instruments and unobserved confounders---are often violated due to population stratification or assortative mating. Leveraging the increasing availability of multi-environment data, we propose a representation learning framework that exploits cross-environment invariance to recover latent exogenous components of genetic instruments. We provide theoretical guarantees for identifying these latent instruments under various mixing mechanisms and demonstrate the effectiveness of our approach through simulations and semi-synthetic experiments using data from the All of Us Research Hub.
\end{abstract}

\section{Introduction} \label{sec:intro}

Understanding cause-and-effect relationships between risk factors and disease outcomes is central to epidemiology and genetic research. When randomized controlled trials are infeasible or unethical, researchers must rely on observational data, where unobserved confounding poses a major challenge. Mendelian Randomization (MR), an instrumental variable (IV) approach \citep{angrist1996identification,anderson1949estimation} that uses genetic variants as instruments, has therefore become a prominent tool for causal inference in genetic epidemiology \citep{sanderson2022mendelian}.

Valid IV estimation requires instruments that satisfy relevance, exchangeability, and exclusion restriction (see Definition~\ref{def:valid_iv}). In MR, while relevance could be empirically assessed, exchangeability and exclusion restriction are fundamentally untestable and frequently violated in practice \citep{mason2025mendelian,swerdlow2016selecting}. A primary source of such violations is population stratification, where genetic variants and phenotypes share common ancestry- or environment-driven influences, biasing causal estimates in MR studies \citep{sanderson2022mendelian}.

Crucially, multiple large-scale biobanks such as All of Us \citep{allofus2019} are now available that could be utilised if there were a way of conducting MR reliably across populations. 
Although the underlying causal variants for traits are likely to be shared across populations, the correlations (linkage disequilibrium) and allele frequencies among genetic markers vary significantly due to distinct demographic histories \citep{wang2020theoretical,wang2023polygenic}.
This contrast suggests that observed genetic variants can be viewed as mixtures of invariant biological signals and environment-specific confounding components. In this work, we leverage this structure and use representation learning to exploit cross-environment invariance \citep{ahuja24multi, yao2025unifying}, with the goal of isolating the invariant component of confounded genetic instruments for valid causal inference.

\subsection{Contributions and Overview}

Motivated by environmental confounding and the availability of multi-environment data suitable for MR, we propose a representation learning framework for IV estimation in which both instruments and outcomes may be confounded by environment-related factors. Our approach exploits invariance across environments to identify latent components of a confounded instrument that remain stable under environmental shifts. We show that recovering this invariant component up to a transformation---consistent with typical guarantees in representation learning---is sufficient for valid downstream causal inference, effectively mitigating violations of the exchangeability assumption. While some recent works have explored learned representations for IV estimation \citep{cheng2024conditional, cheng2024disentangled}, they generally lack theoretical guarantees on recovering the latent valid instruments.

Our contributions are as follows. We introduce a measure-theoretic notion of identifiability and connect it to existing definitions in representation learning. We adapt multi-environment identifiability theory to MR (Section~\ref{sec:setup}) and establish guarantees for recovering latent instruments under various mixing mechanisms (Section~\ref{sec:identificaiton}), including settings with misspecified latent dimensions. We further analyze how learned representations affect the validity and efficiency of downstream causal estimation (Section~\ref{sec:goodbaduse}). Finally, we present synthetic and semi-synthetic experiments demonstrating effective bias correction under environmental confounding (Section~\ref{sec:experiment}). Proofs are deferred to Appendix~\ref{app:proofs}.

\subsection{Other Related Works}

\textbf{Estimation with Invalid Instruments.}
There exists a significant body of work that addresses IV estimation when the instruments are invalid. Popular methods include those that assume validity holds for the majority of instruments (e.g., median-based estimators) or enforce sparsity on invalid effects~\citep{kang2024identification}. In the context of Mendelian Randomization, specific methods like MR-Egger~\citep{bowden2015mendelian,rees2017extending} and GENIUS~\citep{tchetgen2021genius} allow for invalid instruments under specific parametric or structural assumptions. In contrast, our framework addresses systematic confounding in the instruments by explicitly modeling and disentangling these factors via invariance without imposing specific assumptions on the form of causal effects. 

\textbf{Identifiability in Representation Learning.}
Our identification strategy draws upon recent advances in causal representation learning. The challenge of identifying latent causal variables from high-dimensional observations without supervision is generally ill-posed \citep{hyvarinen2019nonlinear, locatello2019challenging}. However, foundational theoretical results have established that identifiability is achievable given auxiliary information, such as multiple environments \citep{hyvarinen2016unsupervised, khemakhem2020variational}. Building on this, a series of works have explored the identification of latent variables based on soft-intervention or multi-environment data \citep{zhang2023identifiability, ahuja2023interventional, ahuja24multi}. \citet{yao2025unifying} provides a unifying view of different RL methods using invariance constraints. While these were initially proposed without clear applications, we discovered that the insights extend to the MR setting, specifically tailoring the identification results to separate valid instrumental components from environmental confounders.

%%%%%%%%%%%%%%%%%%%%%%%%%%%%%%%%%%%%%%%%%%%%%%%%%%%%%%%%%%%%%%%%%%%%%%%%%%%%%%%%
\section{Motivation and Problem Setup} \label{sec:setup}
%-------------------------------------------------------------------------------

\begin{notation}
We let $[K] \coloneqq \{1,\ldots, K\}$. For a vector $x\in\bbR^n$, $x^{:k}$ and $x^{k:}$ denote the first $k$ elements and remaining elements, respectively. Similarly, for $X\in\bbR^{m\times n}$, $X^{:k}$ denotes the first $k$ columns, $X_{k:}$ the rows after the first $k$ rows, and $X_i^j$ the $(i,j)$-th entry. Superscripts with brackets denote environments (e.g., $X^{(k)}$). Finally, $X \deq Y$ (resp. $X \dneq Y$) indicates equality (resp. inequality) in distribution.
\end{notation}

\subsection{Existing Issues in Mendelian Randomization}

There are three classic conditions on an instrumental variable (IV) that allow valid testing for the null hypothesis of no causal effect. An IV is commonly referred to as a \emph{valid} instrument if it satisfies these conditions.

\begin{definition}[Valid instruments] \label{def:valid_iv} Suppose $H$ taking values in $\cH$ is an unobserved confounder between an exposure $D \in \bbR^d$ and a response $Y \in \bbR$. A variable $Z \in \cZ$ is a \emph{valid instrument} for $D$ w.r.t. $Y$ if it satisfies:
(1) \textbf{Relevance:} $Z\dep D$, 
(2) \textbf{Exchangeability:} $Z \indep H$, and 
(3) \textbf{Exclusion Restriction:} $Z\indep Y \given H, D$.
\end{definition}

As discussed in Section~\ref{sec:intro}, in MR studies, genetic variants are often confounded with phenotypes through population structure or assortative mating, violating the exchangeability condition in Definition~\ref{def:valid_iv}. An illustrative example is given below.

\begin{figure}[!hbt]
\centering 
\resizebox{0.28\textwidth}{!}{\begin{tikzpicture}
    % Style definitions
    \tikzstyle{var}=[circle, draw, thick, minimum size=6mm, font=\small, inner sep=0]
    \tikzstyle{arrow}=[-latex, thick]
    \tikzstyle{doublearrow}=[latex-latex, thick]
    \tikzstyle{dashedarrow}=[-latex, thick, dashed]

    % Latent variables
    \node[var, fill=gray!30] (U1) at (0, 3.25) {$U_1$};
    \node[var, fill=gray!30] (U2) at (0, 2) {$U_2$};
    \node[var, fill=gray!30] (U3) at (0, 0.75) {$U_3$};

    % Nodes for I_1 to I_J
    \node[var] (I1) at (-1.5, 3.25) {$Z_1$};
    \node[var] (I2) at (-1.5, 2) {$Z_2$};
    \node[var] (I3) at (-1.5, 0.75) {$Z_3$};

    % Nodes for D_1 to D_R
    \node[var] (D) at (1.5, 2) {$D$};

    % Node for H
    \node[var, fill=gray!30] (H) at (2.25, 3.25) {$H$};

    % Node for Y
    \node[var] (Y) at (3, 2) {$Y$};

    % Arrows from latent to instruments
    \draw[arrow] (U1) -- (I1);
    \draw[arrow] (U1) -- (I2);
    \draw[arrow] (U1) -- (I3);
    \draw[arrow] (U2) -- (I1);
    \draw[arrow] (U2) -- (I2);
    \draw[arrow] (U2) -- (I3);
    \draw[arrow] (U3) -- (I1);  
    \draw[arrow] (U3) -- (I2);
    \draw[arrow] (U3) -- (I3);
    % \draw[latex-latex, thick] (U1) -- (I1);
    % \draw[latex-latex, thick] (U1) -- (I2);
    % \draw[latex-latex, thick] (U1) -- (I3);
    % \draw[latex-latex, thick] (U2) -- (I1);
    % \draw[latex-latex, thick] (U2) -- (I2);
    % \draw[latex-latex, thick] (U2) -- (I3);
    % \draw[latex-latex, thick] (U3) -- (I1);  
    % \draw[latex-latex, thick] (U3) -- (I2);
    % \draw[latex-latex, thick] (U3) -- (I3);

    % Arrows from some I variables to some D variables
    \draw[arrow] (U1) -- (D);
    \draw[arrow] (U2) -- (D);
    \draw[arrow] (U3) -- (D);

    % Hidden variable
    \draw[arrow] (H) -- (D);
    \draw[arrow] (H) -- (Y);
    % \draw[latex-latex, thick] (U1) to [out=35,in=145] (H);
    \draw[-latex, thick] (H) to [out=145,in=30] (U1);

    % Some D points to Y
    \draw[arrow] (D) -- (Y);
\end{tikzpicture}}
\hspace{3pt}
\resizebox{0.25\textwidth}{!}{\begin{tikzpicture}
    % Style definitions
    \tikzstyle{var}=[circle, draw, thick, minimum size=6mm, font=\small, inner sep=0]
    \tikzstyle{arrow}=[-latex, thick]
    \tikzstyle{doublearrow}=[latex-latex, thick]
    \tikzstyle{dashedarrow}=[-latex, thick, dashed]

    % Nodes for I_1 to I_J
    \node[var] (I1) at (0, 3.25) {$Z_1$};
    \node[var] (I2) at (0, 2) {$Z_2$};
    \node[var] (I3) at (0, 0.75) {$Z_3$};

    % Nodes for D_1 to D_R
    \node[var] (D) at (1.5, 2) {$D$};

    % Node for H
    \node[var, fill=gray!30] (H) at (2.25, 3.25) {$H$};

    % Node for Y
    \node[var] (Y) at (3, 2) {$Y$};

    % Arrows from some I variables to some D variables
    \draw[latex-latex, thick] (I1) -- (D);
    \draw[latex-latex, thick] (I2) -- (D);
    \draw[latex-latex, thick] (I3) -- (D);
    \draw[latex-latex, thick] (I1) to [out=210,in=150] (I2);
    \draw[latex-latex, thick] (I2) to [out=210,in=150] (I3);
    % \draw[latex-latex, thick] (I1) -- (I2);
    % \draw[latex-latex, thick] (I2) -- (I3);
    \draw[latex-latex, thick] (I1) to [out=200,in=160] (I3);

    % Hidden variable
    \draw[arrow] (H) -- (D);
    \draw[arrow] (H) -- (Y);
    % \draw[latex-latex, thick] (I1) to [out=30,in=150] (H);
    % \draw[latex-latex, thick] (I2) to [out=45,in=180] (H);
    % \draw[latex-latex, thick] (I3) to [out=60,in=210] (H);
    \draw[-latex, thick] (H) to [out=150,in=30] (I1);
    \draw[-latex, thick] (H) to [out=180,in=60] (I2);
    \draw[-latex, thick] (H) to [out=200,in=70] (I3);

    % Some D points to Y
    \draw[arrow] (D) -- (Y);
\end{tikzpicture}}
\caption{Left: DAG with disentangled latent variables that generate $Z$, where $U_2$ and $U_3$, if observed, are valid instruments for $D$ with respect to $Y$. 
Right: ADMG without considering the disentangled latent variables, $Z_1$ to $Z_3$ are all invalid instruments due to $Z_1$'s violation of exchangability.\protect\footnotemark}
\label{fig:simple_examples}
\end{figure}
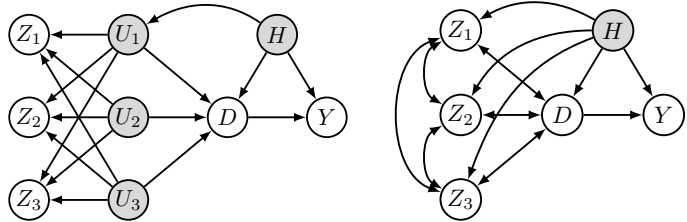
\footnotetext{DAG: directly acyclic graph. ADMG: acyclic directed mixed graph.}

\begin{example} \label{ex:two_simple_examples}
Suppose we are interested in estimating the average causal effect (ACE) of having type II diabetes ($D$) on developing coronary artery disease ($Y$) from observational data, where both are confounded by unobserved population-level factors ($H$) arising from ancestry and social mating patterns.
Let $Z \coloneqq (Z_1, Z_2, Z_3)$ be genetic variants selected as candidate instruments. Suppose each is a combination of latent variables $U \coloneqq (U_1, U_2, U_3)$. Among these, $U_2$ and $U_3$ represent stochastic genetic recombination governed by Mendel's law (valid), while $U_1$ is influenced by population factors $H$ (invalid).
Figure~\ref{fig:simple_examples} (left) illustrates this. However, when only the mixed variants $Z$ are observed, the influence of $U_1$ induces confounding with $Y$, rendering the observed instruments invalid (Figure~\ref{fig:simple_examples}, right).
\end{example}

In Example~\ref{ex:two_simple_examples}, if we were able to recover the latent variables $U$ from the observed variables $Z$, we would obtain two valid instruments for the treatment $D$. This motivates the following problem setup considered throughout this work.

\subsection{Problem Setup}

We consider a multi-environment Mendelian Randomization (MR) setup formally described below. 

\begin{figure}[!thb]
\centering
\resizebox{0.45\linewidth}{!}{\begin{tikzpicture}[loose/.style={inner sep=.7em},
    oval/.style={ellipse,draw}]
    % Style definitions
    \tikzstyle{var}=[circle, draw, minimum size=6mm, font=\small, inner sep=0.3em]
    \tikzstyle{svar}=[rectangle, draw, minimum size=6mm, font=\small, inner sep=0.3em]
    \tikzstyle{arrow}=[-latex, thick]
    \tikzstyle{doublearrow}=[latex-latex, thick]
    \tikzstyle{dashedarrow}=[-latex, thick, dashed]

    % Node Z
    \node (V1) at (0, 3.5) {};
    \node (V2) at (0, 3) {};
    % \node[var, fill=gray!20] (V) at (0, 3.25) {$V$};
    \node[oval, fill=gray!20, fit = (V1) (V2), inner ysep=0pt, inner xsep=5pt, label=center:$V^{(k)}$] (V) {};
    \node (W1) at (0, 2.25) {};
    \node (W2) at (0, 1.75) {}; 
    % \node[var, fill=gray!20] (W) at (0, 2) {$W$};
    \node[oval, fill=gray!20, fit = (W1) (W2), inner ysep=0pt, inner xsep=5pt, label=center:$W^{(k)}$] (W) {};
    \node (Z1) at (-1.75, 3.25) {};
    \node (Z2) at (-1.75, 2) {};
    \node[oval, fit = (Z1) (Z2), inner ysep=2.5pt, inner xsep=4pt, label=center:$Z^{(k)}$] (Z) {};

    % Nodes for D and X
    \node[var] (D) at (2, 2) {$D^{(k)}$};
    % \node[var] (X) at (3, 0.5) {$X$};

    % Node for H
    \node[var, fill=gray!20] (H) at (3, 3.5) {$H^{(k)}$};

    % Note for K
    % \node[svar] (I) at (4.5, 3.5) {$I$};
    \node at (5.5, 3) {$\forall k\in[K]$};

    % Node for Y
    \node[var] (Y) at (4, 2) {$Y^{(k)}$};

    % Arrows from some I variables to some D variables
    \draw[arrow] (V) -- (Z);
    \draw[arrow] (W) -- (Z);
    \draw[arrow] (V) -- (D);
    \draw[arrow] (W) -- (D);
    % \draw[arrow] ([yshift=-0.5cm]Z.east) to (D);
    % \draw[latex-latex, thick] (X) -- (D);
    % \draw[arrow] (X) -- (Y);

    % Hidden variable
    \draw[arrow] (H) -- (D);
    \draw[arrow] (H) -- (Y);
    \draw[latex-, thick, dashed] (V) to [out=45,in=145] (H);
    \draw[arrow, thick, dashed] (V) to [out=0,in=150] (Y);
    % \draw[arrow] (I) to (H);

    % Some D points to Y
    \draw[arrow] (D) -- (Y);
\end{tikzpicture}}
\caption{Illustration of our general setup where $Z$ is a complex, entangled instrument containing some valid information (represented by $W$) as IV for $D$ with respect to $Y$, and some invalid information (represented by $V$). $V$ and $W$ are not directly observed and are not necessarily subvectors of $Z$.}
\label{fig:general_setup}
\end{figure}
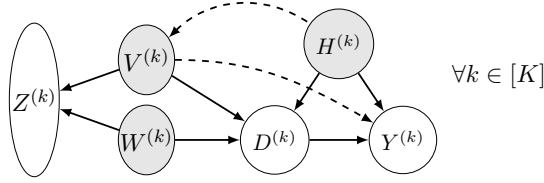

\begin{setting} \label{set:multi_env}
For each of $K$ environments, we observe i.i.d. data $\cD_k \coloneqq \{(Z_i^{(k)}, D_i^{(k)}, Y_i^{(k)})\}_{i=1}^{n_k}$ comprising an exposure $D^{(k)}\in\bbR^d$, a response $Y^{(k)}\in\bbR$ confounded by unobserved $H^{(k)}$, and a high-dimensional candidate instrument $Z^{(k)} \in \cZ$. We assume $Z^{(k)}$ is generated as $Z^{(k)} = f(W^{(k)}, V^{(k)})$ via an invertible mixing function $f$, where the latent components $W^{(k)}\in\bbR^p$ and $V^{(k)}\in\bbR^q$ satisfy:
\begin{enumerate}[label=(\arabic*),itemsep=5pt]
\item $W^{(k)}$ and $V^{(k)}$ are independent in each environment, i.e., $W^{(k)} \indep V^{(k)}$ for all $k\in[K]$.
\item For all $k\in[K]$, $W^{(k)}$ is a valid instrument whereas $V^{(k)}$ is invalid. In particular, $W^{(k)} \indep H^{(k)}$ while $V^{(k)} \dep H^{(k)}$.
\item The distribution of $W^{(k)}$ is invariant across environments, i.e., $W^{(k)} \deq W^{(k')}$ for all $k,k'\in[K]$, while the distribution of $V^{(k)}$ varies across environments driven by shifts in $H^{(k)}$.
\end{enumerate}
Thus, while $Z^{(k)}$ is an invalid instrument due to $V^{(k)}$, there exists a function $\varphi: \cZ \to \bbR^p$ (corresponding to the inversion of $f$ onto the $W$ coordinates) such that $\varphi(Z^{(k)})$ constitutes a valid instrument.
\end{setting}

\begin{remark}
Setting~\ref{set:multi_env} describes a flexible data-generating process motivated by challenges in MR. 
In particular, we model the sources of variation in the genetic variants via $W$ and $V$, respectively, where $W$ captures invariant variations across environments while $V$ captures variant variations driven by environmental heterogeneity.
Our goal is to disentangle $W$ from $V$ to eliminate the environmental confounding affecting the instrument $Z$. Importantly, this disentanglement does not remove the unobserved confounding between the exposure $D$ and the outcome $Y$; rather, it recovers a valid instrument $W$ so that IV methods can be correctly applied.
Furthermore, this setting accommodates effect heterogeneity: the functional relationships (including the causal effect of $D$ on $Y$) are permitted to vary across environments. This flexibility allows for the causal effect to be estimated either jointly or separately for each environment, depending on the specific application.
\end{remark}

In Setting~\ref{set:multi_env}, we assume that the latent variables $W$ and $V$ satisfy $W\indep V$. This assumption is necessary for the following reason: if $Z$ cannot be expressed as a function of two independent components, one of which satisfies all conditions in Definition~\ref{def:valid_iv}, then no transformation of 
$Z$ can serve as a valid instrument for $X$ with respect to $Y$.
This impossibility result is formalized in Proposition~\ref{prop:non_identif_ate}. Conversely, if 
$Z$ can be mapped to a variable that constitutes a valid instrument, then $Z$ may be viewed as being 
generated from this variable and an additional independent variable. 

\begin{restatable}{proposition}{propIndepDecompose} \label{prop:non_identif_ate}
Given an arbitrary random variable $Z\in\cZ$, if there does not exist a function $\ell$ such that $Z = \ell(A, B)$ where $A\indep B$ and A satisfies all conditons in Defintion~\ref{def:valid_iv}, then there is no function $\varphi$ such that $\varphi(Z)$ satisfies all three conditions in Defintion~\ref{def:valid_iv}, and thus the ACE is not identifiable given $Z$, $D$, and $Y$.  
\end{restatable}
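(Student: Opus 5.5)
The natural route is the contrapositive. Suppose some measurable $\varphi$ makes $A \coloneqq \varphi(Z)$ satisfy all three conditions of Definition~\ref{def:valid_iv}. We will then construct $B$ and $\ell$ with $Z = \ell(A,B)$, $A \indep B$, and $A$ valid, which contradicts the hypothesis.

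The construction is a conditional quantile (randomization) transform. Work on a standard Borel space $\cZ$, so that $\cZ$ is Borel-isomorphic to a subset of $\bbR$ and regular conditional distributions exist. Let $F(\cdot \mid a)$ denote the conditional CDF of the (embedded) $Z$ given $A=a$. Take an auxiliary $U \sim \mathrm{Unif}(0,1)$ independent of everything else, which we may adjoin to the probability space, and set
\[
B \coloneqq F(Z^- \mid A) + U\bigl(F(Z\mid A) - F(Z^-\mid A)\bigr).
\]
This is the standard randomized probability integral transform. Conditionally on $A=a$, $B$ is $\mathrm{Unif}(0,1)$. Hence $B \indep A$, and $Z = F^{-1}(B \mid A)$ almost surely.

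This gives $Z = \ell(A,B)$ with $\ell(a,b) \coloneqq F^{-1}(b\mid a)$, mapped back through the Borel isomorphism. Here $A = \varphi(Z)$ is valid by assumption, so we have produced exactly the decomposition the hypothesis rules out.

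For the final clause, the non-identifiability of the ACE, the argument is short. The only route to identification from $(Z,D,Y)$ that the statement contemplates is via some valid instrument that is a function of $Z$. Since none exists, the statement follows in the sense intended. If a rigorous version were wanted, one would exhibit two structural models compatible with the same joint law of $(Z,D,Y)$ but with different ACEs. I would treat this as following from the standard IV non-identifiability once exchangeability fails, and not belabor it.

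The main obstacle is that the decomposition may require extra randomness $U$. Then $B$ is not a function of $Z$ alone, and whether the statement allows such $B$ depends on how ``$Z=\ell(A,B)$'' is read. I would read it as equality almost surely on a possibly enlarged probability space, which is the standard convention in functional representations (e.g., the noise-outsourcing lemma). If $Z$ has a continuous conditional law given $A$, then $U$ is unnecessary and $B = F(Z\mid A)$ is itself a function of $Z$.
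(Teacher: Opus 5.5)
Your proposal is correct and follows the paper's proof. Both argue by contrapositive, set $A \coloneqq \varphi(Z)$, and obtain $B$ with $A \indep B$ and $Z = \ell(A,B)$; the paper cites the functional representation lemma, while you re-derive it via the randomized conditional probability integral transform on a standard Borel space (reading $Z=\ell(A,B)$ on an enlarged probability space, as that lemma implicitly does), and, as in the paper, you leave the final ACE non-identifiability clause informal.
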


Theoretical results in representation learning show that, under general conditions, one can recover the latent variables only up to a bijective transformation~\citep[see e.g.,][]{ahuja24multi}. Fortunately, this is sufficient for our purpose. As shown in Proposition~\ref{prop:valid_transform}, a key property of valid instruments is that any bijective transformation of a valid instrument remains valid.

\begin{restatable}[Bijective transformations of a valid instrument are also valid]{proposition}{propValidTransform}
\label{prop:valid_transform}
If $Z$ is a valid instrument for $D$ with respect to a response $Y$ (Definition~\ref{def:valid_iv}) and $\kappa$ is a bijective measurable function $\kappa: \cZ \to \cZ', z\mapsto \kappa(z)$. Then $\kappa(Z)$ is also a valid instrument for $D$ with respect to $Y$.
\end{restatable}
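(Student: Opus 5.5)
The plan is to verify the three conditions of Definition~\ref{def:valid_iv} for $\kappa(Z)$ one at a time. The single tool is that $\kappa$ is a bijective measurable map whose inverse $\kappa^{-1}:\cZ'\to\cZ$ is also measurable, so that $\sigma(\kappa(Z)) = \sigma(Z)$. This holds, for example, when $\cZ,\cZ'$ are standard Borel spaces, where a measurable bijection has a measurable inverse by the Lusin--Souslin theorem. Otherwise I will read ``bijective measurable'' as ``bimeasurable.''

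\textbf{Step 1: the sigma-algebra identity.} Measurability of $\kappa$ gives $\sigma(\kappa(Z))\subseteq\sigma(Z)$. Writing $Z=\kappa^{-1}(\kappa(Z))$ and using measurability of $\kappa^{-1}$ gives the reverse inclusion. Independence and conditional independence statements depend on a random variable only through the sigma-algebra it generates. Hence every (conditional) independence statement involving $Z$ transfers verbatim to $\kappa(Z)$. I expect this step, and in particular justifying measurability of the inverse, to be the only genuinely delicate point.

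\textbf{Step 2: exchangeability and exclusion restriction.} From $Z\indep H$ we get $\kappa(Z)\indep H$. This direction needs only measurability of $\kappa$, since a measurable function of an independent variable stays independent. Similarly, $Z\indep Y\given H,D$ gives $\kappa(Z)\indep Y\given H,D$, because conditional independence is preserved under measurable functions of one argument. Again only the forward map is needed here.

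\textbf{Step 3: relevance.} Relevance is where invertibility is essential. I argue by contraposition: if $\kappa(Z)\indep D$, then $Z=\kappa^{-1}(\kappa(Z))$ is a measurable function of a variable independent of $D$, so $Z\indep D$. This contradicts $Z\dep D$, and therefore $\kappa(Z)\dep D$.

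Combining Steps 2 and 3 shows that $\kappa(Z)$ satisfies all three conditions of Definition~\ref{def:valid_iv}.
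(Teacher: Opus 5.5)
Your proof is correct and follows the same route as the paper, which likewise reduces everything to $\sigma(\kappa(Z))=\sigma(Z)$ and then transfers the marginal and conditional (in)dependence statements. Your version is more careful: the paper attributes this $\sigma$-algebra equality to the Doob--Dynkin lemma, whereas you note that it requires measurability of $\kappa^{-1}$ (automatic for standard Borel spaces by Lusin--Souslin) and that only the relevance condition actually uses that direction.
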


In the rest of this section, we omit the superscript ``$(k)$" for clarity, as the statements apply to any $k\in[K]$. 
Let $f^{-1}$ denote the inverse function of the mixing function $f$. 
Without loss of generality, we assume that the first $p$ coordinates of $f^{-1}(Z)$ correspond to $W$, and we denote this subvector of $f^{-1}(Z)$ as $\varphi_0(Z) \coloneqq \left(f^{-1}(Z)\right)^{:p} = W$, 
and the last $q$ coordinates of $f^{-1}(Z)$ correspond to $V$.
Under a partially linear structural causal model (PLSCM, see Definition~\ref{def:memr_plscm} in Appendix~\ref{app:additional_defs}), with $\varphi_0$ and 
the true causal parameter $\theta_0$, it holds that
\begin{equation}  \label{eq:moment_restriction}
    \bbE[\varphi_0(Z)(Y - D^\top \theta_0)] = 0.
\end{equation}
The condition in \eqref{eq:moment_restriction} is often referred to as the \emph{moment restriction} in IV \citep[see e.g.,][]{wooldridge2010econometric}. 
A sufficient and necessary condition of using \eqref{eq:moment_restriction} to identify $\theta_0$ is the following \emph{rank condition}:
\begin{equation} \label{eq:rank_condition}
\bbE[\varphi_0(Z) D^\top]v = 0 \quad \Longrightarrow \quad v = 0.
\end{equation}

Under Setting~\ref{set:multi_env} and the rank condition in \eqref{eq:rank_condition}, we have that $\varphi_0(Z)$ is a valid instrument (Definition~\ref{def:valid_iv}), and if $\varphi_0$ is known, $\theta_0$ is uniquely identified by \eqref{eq:moment_restriction}. 
When $\varphi_0$ is unknown, $\varphi_0$ and $\theta_0$ are in general not jointly identifiable based on the moment restriction alone. This is due to the following reason: 
If \eqref{eq:moment_restriction} holds, then for any measurable function $m$, it also holds that
$\bbE[m(\varphi_0(Z))(Y-\theta_0 D)] = 0$. 
However, as our interests lie in testing and estimating the Average Causal Effect (ACE) of $D$ on $Y$, we do not need to identify $\varphi_0$ itself. In fact, any bijective transformation of $\varphi_0(Z)$ that still satisfies the rank condition \eqref{eq:rank_condition} is not only a valid instrument (as shown in Proposition~\ref{prop:valid_transform}), but also sufficient to identify $\theta_0$ as well. 
Corollary~\ref{cor:rank_preserve} provides one sufficient condition of such transformations. 

\begin{restatable}[Rank perserving bijective transformations of an identifying IV]{corollary}{corRankPreserve}
\label{cor:rank_preserve}
Suppose $\varphi(Z)$ which takes values in $\bbR^p$ is a valid instrument (Definition~\ref{def:valid_iv}) 
and satisfies the rank condition \eqref{eq:rank_condition}. 
If $h$ is an invertible affine transformation, i.e., there exists a constant full rank matrix 
$B\in\bbR^{p\times p}$ and a constant vector $c\in\bbR^p$ satisfying
\begin{equation*}
h(\varphi(Z)) = B\varphi(Z) + c, 
\end{equation*}
then $h(\varphi(Z))$ is also a valid instrument and 
\begin{equation*}
    \bbE[h(\varphi(Z))(Y - D^\top\theta)] = 0
\end{equation*}
identifies the causal effect $\theta_0$.
\end{restatable}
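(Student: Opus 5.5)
We split the claim into its two parts: validity of $h(\varphi(Z))$, and identification of $\theta_0$ through the transformed moment restriction.

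\emph{Validity.} An invertible affine map $h(x)=Bx+c$ with $B$ full rank is a bijective measurable function from $\bbR^p$ to $\bbR^p$. Its inverse is $x\mapsto B^{-1}(x-c)$, and both maps are continuous, hence Borel measurable. Proposition~\ref{prop:valid_transform} therefore applies directly and shows that $h(\varphi(Z))$ satisfies relevance, exchangeability and exclusion restriction of Definition~\ref{def:valid_iv}.

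\emph{Moment restriction at $\theta_0$.} By hypothesis $\varphi(Z)$ satisfies \eqref{eq:moment_restriction}, i.e.\ $\bbE[\varphi(Z)(Y-D^\top\theta_0)]=0$, under the PLSCM. By linearity,
\begin{equation*}
\bbE[h(\varphi(Z))(Y-D^\top\theta_0)] = B\,\bbE[\varphi(Z)(Y-D^\top\theta_0)] + c\,\bbE[Y-D^\top\theta_0].
\end{equation*}
The first term vanishes. The second vanishes provided $\bbE[Y-D^\top\theta_0]=0$, which I expect to be the one delicate point. I would first try to obtain it from the PLSCM structure, where the residual $Y-D^\top\theta_0$ is a mean-zero noise term (any intercept being absorbed or centered). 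If that is not available, I would note that the moment restriction is conventionally stated with an intercept or constant instrument included, or with centered instruments, so that the constant shift $c$ contributes nothing. I would state this centering convention explicitly in the proof.

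\emph{Uniqueness.} Suppose $\theta$ also solves the transformed restriction, and set $v=\theta-\theta_0$. Subtracting the two equations gives
\begin{equation*}
\bbE[h(\varphi(Z))D^\top]v=0,
\quad\text{i.e.}\quad
B\,\bbE[\varphi(Z)D^\top]v + c\,\bbE[D]^\top v = 0.
\end{equation*}
Under the same centering (either $\varphi$ is centered or an intercept is included), the $c$-term is removed by the intercept equation. Since $B$ is invertible, this leaves $\bbE[\varphi(Z)D^\top]v=0$. The rank condition \eqref{eq:rank_condition} then forces $v=0$, so $\theta=\theta_0$ is the unique solution. In other words, an invertible $B$ preserves the rank condition, and this is the crux of the identification step.
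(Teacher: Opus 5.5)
Your validity step is correct: $x\mapsto Bx+c$ is a bimeasurable bijection, so Proposition~\ref{prop:valid_transform} applies. Your moment step at $\theta_0$ is also correct, and it needs no convention. Definition~\ref{def:memr_plscm} assumes $\bbE[g^{(k)}(V,H,\epsilon)]=0$, which is exactly $\bbE[Y-D^\top\theta_0]=0$.

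The genuine gap is in the uniqueness step, and it cannot be closed under the stated hypotheses. You need $\bbE[h(\varphi(Z))D^\top]=B\,\bbE[\varphi(Z)D^\top]+c\,\bbE[D]^\top$ to have trivial kernel. A rank-one perturbation can destroy this. Centering $\varphi$ does not help, because the offending term $c\,\bbE[D]^\top v$ involves $\bbE[D]$, not $\bbE[\varphi(Z)]$.

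Here is a concrete counterexample in the PLSCM with $p=d=1$:
\begin{itemize}
\item take $W,H,\epsilon$ independent and centered with $\bbE[W^2]=1$;
\item set $D=W+H+1$ and $Y=\theta_0 D+H+\epsilon$;
\item let $\varphi(Z)=W$, which is centered and has $\bbE[WD]=1\neq 0$;
\item let $h(w)=w-1$.
\end{itemize}
Then $\bbE[h(W)D]=\bbE[WD]-\bbE[D]=1-1=0$. Hence $\bbE[h(W)(Y-D\theta)]=0$ for every $\theta\in\bbR$, even though $h(W)$ is a valid instrument. So the corollary, read literally with $c\neq 0$, is false, and no convention can rescue the proof without changing the statement.

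The paper's own proof is the single sentence that an invertible affine map preserves the rank condition \eqref{eq:rank_condition}. That is precisely the step that fails, so you were right to isolate the $c$-term. What is missing is recognising that this term forces an extra hypothesis, which you should state explicitly rather than as a convention. Any one of the following completes your argument:
\begin{itemize}
\item $c=0$; then invertibility of $B$ finishes it, as you say.
\item $\bbE[D]=0$.
\item Append the constant instrument, i.e.\ the equation $\bbE[Y-D^\top\theta]=0$. This is legitimate because $\bbE[g]=0$. It yields $\bbE[D]^\top v=0$, hence $B\,\bbE[\varphi(Z)D^\top]v=0$ and so $v=0$.
\item Use the centered instrument $h(\varphi(Z))-\bbE[h(\varphi(Z))]=B\bigl(\varphi(Z)-\bbE[\varphi(Z)]\bigr)$ in the moment. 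Then $c$ disappears, and the rank condition is needed for $\cov(\varphi(Z),D)$, which equals $\bbE[\varphi(Z)D^\top]$ when $\varphi(Z)$ is centered.
\end{itemize}
Your sentence ``the $c$-term is removed by the intercept equation'' is correct only for the third option. The alternative ``$\varphi$ is centered'' does not remove the term on its own.
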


The problem of recovering a latent component from observed data is often referred to as the identification problem in representation learning. This is discussed in Section~\ref{sec:identificaiton}. 

\section{Identifying Latent Components of Confounded Instruments} \label{sec:identificaiton}

We first provide a measure-theoretic definition of the identification of latent components.
In Lemma~\ref{lem:func_identif} we show that it is equivalent to a functional relationship 
between the representation and the latent components---a notion commonly used in the 
representation learning literature.

\begin{definition}[Identification of latent components] \label{def:identification}
Let $U\in\bbR^{m}$ be the set of latent variables that generate the observed variable $Z\in\cZ$ via a mixing function. 
Let $\eta:\cZ \to \bbR^{d}$ be a measurable function and suppose both $U$ and $\eta(Z)$ admit densities. 
For any given $S \subseteq [m]$, we say that $U^{S}$ is \emph{perfectly identified} by $\eta$ if\footnote{We interpret all $\sigma$-algebra equalities as equalities of their $P$-completions, i.e., up to null sets.} 
\begin{equation}
    \sigma\left(\eta(Z)\right) = \sigma(U^S),
\end{equation} 
and we say that that $U^{S}$ is \emph{partially identified} by $\eta$ if 
\begin{equation}
    \emptyset \subsetneq \sigma\left(\eta(Z)\right) \subsetneq  \sigma(U^S) .
\end{equation} 
\end{definition}

\begin{restatable}[Functional characterization of identification]{lemma}{lemInden} \label{lem:func_identif}
In Definition~\ref{def:identification}, perfect identification is satisfied if and only if there exists a measurable bijection $\delta: \text{supp}(U^S) \to \text{supp}(\eta(Z))$ such that $\eta(Z) = \delta(U^S)$ a.s.; partial identification is satisfied if and only if there exists a measurable function $\delta$ such that $\eta(Z) = \delta(U^S)$ a.s., but there exists no measurable function $\omega$ such that $U^S = \omega(\eta(Z))$ a.s..
\end{restatable}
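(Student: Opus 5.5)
The proof rests on the Doob--Dynkin factorization lemma. If $X$ and $T$ are random variables with $T$ real-vector-valued (more generally, valued in a standard Borel space), then $\sigma(T)\subseteq\sigma(X)$ (up to null sets) if and only if $T=g(X)$ a.s. for some measurable $g$. Here $\eta(Z)\in\bbR^d$ and $U^S\in\bbR^{|S|}$ are both Euclidean, so the lemma applies in both directions. Because the $\sigma$-algebras are taken to be $P$-completed, we work throughout with a.s. equalities and complete the argument with the usual modification on null sets.

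\emph{Perfect identification.} Suppose first that $\sigma(\eta(Z))=\sigma(U^S)$. The inclusion $\sigma(\eta(Z))\subseteq\sigma(U^S)$ gives, via Doob--Dynkin, a measurable $\delta$ with $\eta(Z)=\delta(U^S)$ a.s. The reverse inclusion gives a measurable $\omega$ with $U^S=\omega(\eta(Z))$ a.s. It follows that $\omega\circ\delta=\mathrm{id}$ $P_{U^S}$-a.e. and $\delta\circ\omega=\mathrm{id}$ $P_{\eta(Z)}$-a.e.

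To obtain a genuine bijection between the supports, we restrict to full-measure Borel sets. Let $A=\{u:\omega(\delta(u))=u\}\cap\text{supp}(U^S)$ and $B=\{x:\delta(\omega(x))=x\}\cap\text{supp}(\eta(Z))$. On $A\cap\delta^{-1}(B)$ the map $\delta$ is injective with inverse $\omega$. By Lusin--Souslin, a measurable injection on a Borel set has Borel image and Borel inverse, so this restriction is a Borel isomorphism onto its image, and the image has full measure. Finally, redefine $\delta$ on the remaining null set so that it becomes a bijection onto $\text{supp}(\eta(Z))$; this is where measure-theoretic care is needed, since one must match null complements of equal cardinality. Because the definition is stated up to null sets and equalities hold a.s., we read ``bijection'' as ``bijection modulo null sets'', and this is how we would phrase it.

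For the converse, suppose $\eta(Z)=\delta(U^S)$ a.s. with $\delta$ a measurable bijection. Then $\sigma(\eta(Z))\subseteq\sigma(U^S)$ immediately. For the reverse inclusion we need $\delta^{-1}$ to be measurable, which again follows from Lusin--Souslin since a Borel bijection between Borel subsets of Polish spaces has a Borel inverse. This gives $U^S=\delta^{-1}(\eta(Z))$ a.s., hence $\sigma(U^S)\subseteq\sigma(\eta(Z))$.

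\emph{Partial identification.} The strict inclusion $\sigma(\eta(Z))\subsetneq\sigma(U^S)$ splits into two parts. The inclusion $\sigma(\eta(Z))\subseteq\sigma(U^S)$ is equivalent, by Doob--Dynkin, to the existence of $\delta$ with $\eta(Z)=\delta(U^S)$ a.s. Strictness, i.e. $\sigma(U^S)\not\subseteq\sigma(\eta(Z))$, is equivalent, by the contrapositive of Doob--Dynkin, to the nonexistence of a measurable $\omega$ with $U^S=\omega(\eta(Z))$ a.s. The lower condition $\emptyset\subsetneq\sigma(\eta(Z))$ holds trivially because every $\sigma$-algebra contains $\Omega$, so it imposes nothing (presumably it is meant to exclude the trivial $\sigma$-algebra). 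Putting these together yields the characterization.

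The main obstacle is the perfect-identification case: upgrading the two a.e. inverse relations to an honest measurable bijection between the supports, and establishing measurability of $\delta^{-1}$ in the converse. Both steps rely on Lusin--Souslin together with a careful handling of null sets.
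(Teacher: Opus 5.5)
Your proposal is correct and follows the paper's proof essentially step for step. Both use Doob--Dynkin on each $\sigma$-algebra inclusion, use $\omega\circ\delta=\mathrm{id}$ a.e.\ to get injectivity with the bijection read modulo null sets (exactly the paper's conclusion), and run the same two-inclusion argument for partial identification; the one real addition is your use of Lusin--Souslin in the converse of the perfect case, which derives measurability of $\delta^{-1}$ rather than assuming it as the paper does, so your version matches the lemma's stated hypothesis of a measurable bijection more faithfully.
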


If $W^{(k)}$ is a valid instrument in environment $k\in[K]$ and it is identified at least partially by $\varphi(Z^{(k)})$, then the ACE in this environment can be identified by the independence restriction \eqref{eq:moment_restriction}; if $V$ is also identified at least partially, one may be able to improve the efficiency of the estimation by using $V$. We discuss this and the caveat when using the learned latent components of $Z$ in Section~\ref{sec:goodbaduse}. 

\subsection{Identifying $W$ via Distributional Invariance} \label{sec:dist_inv}

From Definition~\ref{def:identification}, we see that if $\varphi(Z)$ identifies $W$, it must preserve the information contained in $W$. While this condition cannot be enforced directly, under certain conditions, we can enforce it by preserving the information of all $Z$ using an autoencoder, that is, a pair of parameterized functions $(m_{\text{en}}, m_{\text{de}})$ such that the observable $Z$ can be (ideally) perfectly reconstructed. Concretely, 
\begin{equation} \label{eq:recon_ident}
    m_{\text{de}}\circ m_{\text{en}} (Z) = Z.
\end{equation}
Under Setting~\ref{set:multi_env}, we have that $W^{(k)} \sim Q_\omega$ for all $k\in [K]$. 
We call this relationship a \emph{distributional invariance}, which can be imposed as an invariance constraint on the autoencoder: for a given dimension $\hat{p}$ such that (without loss of generality) the first $\hat{p}$-dimensions of the encoder output serve as the representation of $W$, the distribution of this representation is required to remain stable across environments, i.e., 
\begin{equation} \label{eq:dist_inv}
    \forall\ k, k' \in [K], \ \ 
    m_{\text{en}}(Z^{(k)})^{:{\hat{p}}} \deq m_{\text{en}}(Z^{(k')})^{:{\hat{p}}}.
\end{equation}
In practice, a common nonparametric approach to quantify the level of invariance between the 
representations of $W$ across different environments is to estimate the Maximum Mean Discrepancy 
\citep[MMD,][]{gretton2012kernel} using samples of the representations. 

We first consider the case where the mixing function $f$ is an injective polynomial of a finite degree (Assumption~\ref{ass:poly_mix}). 
Under this assumption, if the changes in the distribution of $V$ are modular and sufficiently variable (Assumption~\ref{ass:sufficient_variability}), then $W$ can be identified up to an affine transformation.

\begin{assumption}[Common polynomial mixing]\label{ass:poly_mix}
For all $k\in[K]$, the mixing function $f^{(k)}: \bbR^{p}\times \bbR^q \to\cZ$ where 
$\cZ = \bbR^{d_z}$, is an injective polynomial of degree $L$ (see Definition~\ref{def:injective_poly} 
in Appendix~\ref{app:additional_defs}). 
\end{assumption}

\begin{assumption}[Modular and sufficient variability under common polynomial mixing]\label{ass:sufficient_variability}
There exists a collection $\cS \coloneqq \{S_i\}_{i\in[m]}$ such that $\forall i\in[m]$, $S_i\subseteq[q]$, 
and $\bigcup_i S_i = [q]$, satisfying:
\begin{enumerate}[label=(\roman*)]
    \item For each $S\in\cS$, there exists $k_1, k_2 \in [K]$ such that $V^{S, (k_1)} \dneq V^{S, (k_2)}$,
    $V^{-S, (k_1)} \deq V^{-S, (k_2)}$, and $V^{S, (k)}\indep V^{-S, (k)}$ for $k\in\{k_1,k_2\}$.
    \item For any $S\in\cS$ and $k_1, k_2 \in [K]$ satisfying (i), if there exists $u\in\bbR^{|S|}$
    such that $u^\top V^{S,(k_1)} \deq u^\top V^{S, (k_2)}$, then $u = \bm{0}$.
\end{enumerate}
\end{assumption}

\begin{remark}\label{rmk:ass_variability}
\citet{ahuja24multi} also considers common polynomial mixing functions. However, Assumption~\ref{ass:sufficient_variability} does not rely on their restrictive assumption of 
an additive noise SCM over the latent variables or that interventions act solely on the 
exogenous noise terms. Furthermore, unlike the SCM framework where distribution shifts are 
constrained to propagate from independent noise terms, Assumption~\ref{ass:sufficient_variability} 
allows for arbitrary changes in the joint distribution (including the dependence structure) 
of the block $V^S$, provided it is transiently modular. 
\end{remark}

See Example~\ref{ex:nondeg_shift} for an example where Assumption~\ref{ass:sufficient_variability} is satisfied. 
Theorem~\ref{thm:identfy_w} considers the identification of $W$ under polynomial mixing. 

\begin{restatable}[Identification of $W$ under polynomial mixing]{theorem}{thmIdentifyW}
\label{thm:identfy_w}
Consider Setting~\ref{set:multi_env} and assuming Assumption~\ref{ass:poly_mix} and Assumption~\ref{ass:sufficient_variability} hold. An autoencoder $(m_{\text{en}}, m_{\text{de}})$ where $m_{\text{de}}$ is also an injective polynomial of degree $L$ (Definition~\ref{def:injective_poly}), which satisfies the reconstruction identity \eqref{eq:recon_ident} and the invariance constraint \eqref{eq:dist_inv} with $\hat{p}$, satisfies that $m_{\text{en}}(Z)^{:\hat{p}}$ perfectly identifies $W$ if $\hat{p} \geq p$. More specifically, $m_{\text{en}}(Z)^{:\hat{p}}$ identifies $W$ up to an affine transformation. That is, there exist a constant matrix $A\in\bbR^{\hat{p}\times p}$ with full column rank and a constant vector $a\in\bbR^{\hat{p}}$ such that 
\begin{equation*}
m_{\text{en}}(Z^{(k)})^{:\hat{p}} = A W^{(k)} + a 
\end{equation*}
for all $k\in[K]$. Moreover, $m_{\text{en}}(Z)^{:\hat{p}}$ partially identifies $W$ if $\hat{p} < p$ 
(provided the invariant latent dimensions are non-degenerate). Specifically, there exist a constant matrix $A'\in\bbR^{\hat{p}\times p}$ and a constant vector $a'\in\bbR^{\hat{p}}$ such that 
\begin{equation*}
m_{\text{en}}(Z^{(k)})^{:\hat{p}} = A' W^{(k)} + a' 
\end{equation*}
for all $k\in[K]$.
\end{restatable}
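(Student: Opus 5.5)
Our approach follows the standard two-stage argument for polynomial mixing (as in \citet{ahuja24multi}). First we show that the learned encoder is an affine function of the latents. Then we use the invariance constraint together with Assumption~\ref{ass:sufficient_variability} to show that the $V$-coordinates drop out of the first $\hat{p}$ outputs.

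\textbf{Stage 1: affine identification of $U=(W,V)$.} Write $\hat U \coloneqq m_{\text{en}}(Z)$. The reconstruction identity \eqref{eq:recon_ident} gives $m_{\text{de}}(\hat U) = Z = f(U)$ on the support. Both $f$ and $m_{\text{de}}$ are injective polynomials of degree $L$, so each can be written as $G\,\mathrm{poly}_L(\cdot)$, where $G$ has full column rank (Definition~\ref{def:injective_poly}). Hence $\hat U = m_{\text{de}}^{-1}\circ f(U)$ on the image, and $\mathrm{poly}_L(\hat U)$ lies in the column span of $\mathrm{poly}_L(U)$. We then use the standard degree-counting argument: $\hat U$ is a polynomial in $U$, and $U$ is a polynomial in $\hat U$, since $f^{-1}\circ m_{\text{de}}$ is also a polynomial map restricted to a set with nonempty interior because $U$ has a density. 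A composition of polynomials that equals the identity forces both maps to have degree one. This gives $\hat U = \Gamma U + c$ with $\Gamma$ invertible. We will state this stage as an invocation of the known lemma and verify only that its hypotheses (an open support from the density, and injectivity) hold. In particular, $\hat U^{:\hat p} = A W + B V + a$ for constant matrices $A$, $B$ and a constant vector $a$, and these constants do not depend on $k$ because $f$ and $m_{\text{en}}$ are common to all environments.

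\textbf{Stage 2: eliminating $V$.} This is the main obstacle. Suppose some row $j$ has $B_j\neq 0$. Since $\bigcup_i S_i=[q]$, there is an $S\in\cS$ with $B_j^{S}\neq 0$. Take the pair $k_1,k_2$ given by Assumption~\ref{ass:sufficient_variability}(i), and write
\[
\hat U_j^{(k)} = \bigl(A_jW^{(k)} + B_j^{-S}V^{-S,(k)} + a_j\bigr) + B_j^{S}V^{S,(k)} .
\]
The two summands are independent: $W\indep V$ by Setting~\ref{set:multi_env}(1), and $V^S\indep V^{-S}$ by Assumption~\ref{ass:sufficient_variability}(i). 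The first summand has the same law under $k_1$ and $k_2$, because the law of $W$ is invariant and $V^{-S}$ is unchanged. The invariance constraint \eqref{eq:dist_inv} gives $\hat U_j^{(k_1)}\deq \hat U_j^{(k_2)}$. Writing the characteristic functions, $\phi_{R}(t)\phi_{B_j^SV^{S,(k_1)}}(t)=\phi_{R}(t)\phi_{B_j^SV^{S,(k_2)}}(t)$, where $R$ denotes the common first summand. To cancel $\phi_R$ we need it to be nonvanishing at least near $0$. This gives agreement of the characteristic functions in a neighbourhood of $0$, which determines the laws if we assume moment determinacy (for example, all moments finite). If that fails, we would instead work with the full vector of coordinates in which $V^S$ appears, or impose analytic characteristic functions; this is the step we expect to need care. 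The conclusion $(B_j^S)^\top V^{S,(k_1)}\deq (B_j^S)^\top V^{S,(k_2)}$ contradicts Assumption~\ref{ass:sufficient_variability}(ii). Hence $B=0$ and $\hat U^{:\hat p}=AW+a$.

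\textbf{Rank and conclusion.} If $\hat p\ge p$, suppose $A$ lacked full column rank. Then the first $\hat p$ rows of $\Gamma=[A\;B]$ with $B=0$ give fewer than $p$ independent directions in $W$, while the remaining $d-\hat p$ rows would have to span all of $V$ plus the missing $W$ directions. This is impossible when the encoder's latent dimension equals $p+q$, or more generally when $\Gamma$ is injective, since the $W$-block would then not be spanned. So $A$ has full column rank, and $\delta(w)=Aw+a$ is a bijection onto its image. Lemma~\ref{lem:func_identif} then gives perfect identification. If $\hat p<p$, the same argument yields $A'W+a'$ with $\operatorname{rank}A'\le\hat p<p$. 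Then $W$ is not a measurable function of $A'W+a'$, since the nondegeneracy of $W$ rules out its being supported on the affine fibres. Injectivity of $\Gamma$ forces $A'\neq 0$, so $\sigma(A'W+a')$ is nontrivial, and Lemma~\ref{lem:func_identif} gives partial identification.
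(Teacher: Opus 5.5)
You follow the paper's route. You cite Theorem~1 of \citet{ahuja24multi} for $m_{\text{en}}(Z)=\Gamma U+c$, then factor characteristic functions and invoke Assumption~\ref{ass:sufficient_variability}(ii) to kill the $V$-columns of the first $\hat p$ rows. Cancelling $R$ row by row is equivalent to the paper's cancelling $B_WW$ and then $B_D^{-S}V^{-S}$.

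The step you flagged is a genuine gap, and the paper's proof has it too. It divides out $\phi_{B_WW}$ on a neighbourhood of $0$ and then asserts $B_DV^{(k_1)}\deq B_DV^{(k_2)}$, which does not follow. Without an extra hypothesis the statement is in fact false. Take $K=2$ and $p=q=\hat p=1$. Let $W$ have density $(1-\cos w)/(\pi w^2)$; its characteristic function $(1-|t|)_+$ vanishes off $[-1,1]$. Let $V^{(1)},V^{(2)}$ be absolutely continuous with distinct P\'olya-type characteristic functions (even, convex on $(0,\infty)$, decreasing to $0$) that coincide on $[-1,1]$. Then $W+V^{(1)}\deq W+V^{(2)}$. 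Yet $uV^{(1)}\dneq uV^{(2)}$ for every $u\neq0$, since $\phi_{uV}(t)=\phi_V(ut)$. Now set $Z=f(W,V)=(1,W,V)$, $m_{\text{en}}(z)=(z_2+z_3,z_3)$ and $m_{\text{de}}(\hat u)=(1,\hat u_1-\hat u_2,\hat u_2)$; both $f$ and $m_{\text{de}}$ are injective polynomials of degree $1$. Complete the model with, e.g., $H^{(k)}=V^{(k)}$, $D=W+H+\eta$, $Y=\theta D+H+\epsilon$. All hypotheses hold, but $m_{\text{en}}(Z)^{:1}=W+V$.

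So you must add and name an assumption. One option is that the characteristic functions of $W$ and $V^{-S}$ vanish nowhere, which allows global cancellation. Your analytic option also works: assume the projections $u^\top V^{S,(k)}$ have moment generating functions finite near $0$. Finiteness of all moments does not give moment determinacy (the lognormal is a counterexample). Passing to the full vector of coordinates does not help either, since the counterexample has a single coordinate.

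Your rank paragraph also overreaches. Injectivity of $\Gamma$ forces neither full column rank of $A$ nor $A'\neq0$. With encoder dimension $d>p+q$, all of $W$ can sit in the lower rows. For example, a zero first row gives a constant, trivially invariant $m_{\text{en}}(Z)^{:\hat p}$, and this is compatible with an injective polynomial decoder once $d_z$ is large enough. You need $d=p+q$. Then, with $B=0$, $\Gamma$ is block lower-triangular with diagonal blocks $A$ and some $E$:
\begin{itemize}
\item For $\hat p=p$, $\det\Gamma=\det A\cdot\det E$, so $A$ is invertible.
\item For $\hat p<p$, invertibility makes the rows $(A'\ \ 0)$ independent, so $\operatorname{rank}A'=\hat p$.
\item For $\hat p>p$, those $\hat p$ rows have rank at most $p$, so no admissible autoencoder exists and the case is vacuous.
\end{itemize}
The paper's ``since $B$ is invertible'', with $B\in\bbR^{d_{\text{en}}\times(p+q)}$, glosses over the same point.

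Finally, your Stage~1 heuristic is false for $p+q\geq2$: $(x,y)\mapsto(x,y+x^2)$ is a polynomial map with a polynomial inverse. The correct degree count runs as follows. Since $\mathrm{poly}_L(\hat U)$ is a linear image of $\mathrm{poly}_L(U)$, the degree-$L$ monomials of $\hat U$ have degree at most $L$ in $U$. That is impossible if some $\hat U_i$ has degree at least $2$. This slip is harmless, since, like the paper, you cite the lemma.
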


Theorem~\ref{thm:identfy_w} tells us that if one chooses $\hat{p} < p$, the reconstruction identity \eqref{eq:recon_ident} and the invariance constraint \eqref{eq:dist_inv} may not be sufficient to ensure that the information of $W$ is preserved by $m_{\text{en}}(Z)^{:{\hat{p}}}$, in which case perfect identification of $W$ is not guaranteed. Example~\ref{ex:insufficient} illustrates this. 

\begin{remark}
The identification result in Theorem~\ref{thm:identfy_w} is similar to Theorem 2 in \citet{ahuja24multi} but under different assumptions (see Remark~\ref{rmk:ass_variability}). Moreover, compared to \citet{ahuja24multi}, we also clarify the impact of mis-specifying the dimension of the latent variable $W$. 
\end{remark}

\begin{restatable}[Representations of $W$ identify causal effects under polynomial mixing]{corollary}{corIdentifyW} \label{cor:identfy_w}
For any $k\in[K]$, representation $\widehat{W}^{(k)}$ obtained by Theorem~\ref{thm:identfy_w} satisfies the rank condition~\eqref{eq:rank_condition} as long as $W^{(k)}$ satisfies this condition. 
\end{restatable}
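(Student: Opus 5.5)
The plan is to reduce the claim to Corollary~\ref{cor:rank_preserve} using the affine representation from Theorem~\ref{thm:identfy_w}. The nontrivial case is $\hat{p}\ge p$, where $\widehat{W}^{(k)} \coloneqq m_{\text{en}}(Z^{(k)})^{:\hat{p}} = AW^{(k)} + a$ with $A\in\bbR^{\hat p\times p}$ of full column rank. When $\hat p = p$, $A$ is square and invertible, so Corollary~\ref{cor:rank_preserve} applies directly. It then gives that $\widehat{W}^{(k)}$ is a valid instrument satisfying the rank condition and identifying $\theta_0$.

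For $\hat p > p$ the map $w\mapsto Aw+a$ is injective but not onto $\bbR^{\hat p}$, so I will verify the rank condition by hand. Compute
\[
\bbE[\widehat{W}^{(k)} D^{(k)\top}] = A\,\bbE[W^{(k)}D^{(k)\top}] + a\,\bbE[D^{(k)}]^\top .
\]
The intercept term is what needs care. The cleanest route is to treat the instrument as including a constant (equivalently, to center both the instrument and $D$). I will state the rank condition for the centered covariance $\cov(\widehat W^{(k)}, D^{(k)}) = A\,\cov(W^{(k)},D^{(k)})$, or append an intercept, where the augmented matrix $\begin{pmatrix}A & a\\ 0 & 1\end{pmatrix}$ has full column rank. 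Then suppose $\cov(\widehat W^{(k)},D^{(k)})v=0$. Since $A$ has trivial kernel, this gives $\cov(W^{(k)},D^{(k)})v=0$, and the rank condition for $W^{(k)}$ forces $v=0$.

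Validity of $\widehat W^{(k)}$ follows from Proposition~\ref{prop:valid_transform}, because an injective affine map is a measurable bijection onto its image.

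The main obstacle is this intercept/centering issue. The uncentered matrix $A\,\bbE[WD^\top]+a\,\bbE[D]^\top$ could in principle lose rank through cancellation with the $a$ term. I will resolve it by adopting the centered (or intercept-augmented) form of \eqref{eq:rank_condition}, which is the standard convention, and note this explicitly. If the paper insists on the uncentered form, the fallback is to observe that $a$ is an arbitrary constant not pinned down by the autoencoder, so one may always recenter $\widehat W$ to have the mean of $AW$. This makes the claim hold for the recentered representation.
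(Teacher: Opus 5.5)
Your proposal is correct and is essentially the argument the paper intends. The paper writes out no separate proof; it treats the corollary as immediate from the full-column-rank affine form $\widehat{W}^{(k)} = AW^{(k)} + a$ of Theorem~\ref{thm:identfy_w} together with Corollary~\ref{cor:rank_preserve}, and your kernel step $A\,\cov(W^{(k)},D^{(k)})v = 0 \Rightarrow \cov(W^{(k)},D^{(k)})v = 0 \Rightarrow v = 0$ makes that reasoning explicit, also covering $\hat p > p$, which Corollary~\ref{cor:rank_preserve} (stated for square $B$) does not. Your centering caveat is needed, not cosmetic: the paper's one-line justification ignores $a\,\bbE[D^{(k)}]^\top$, and with $p=\hat p=d=1$, $\bbE[W]=0$, $\bbE[WD]=\bbE[D]=1$, the admissible output $\widehat W = W-1$ gives $\bbE[\widehat W D]=0$. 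So run the centered kernel argument uniformly, including for $\hat p = p$, rather than citing Corollary~\ref{cor:rank_preserve}, and note that the claim can fail for $\hat p<p$ (e.g.\ $\widehat W = W^1$ while $D$ loads only on $W^2$), so restricting to $\hat p\ge p$ is necessary, not merely ``the nontrivial case.''
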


The above identification results based on polynomial mixing (Assumption~\ref{ass:poly_mix}) can be generalized to general diffeomorphic mixing functions (Assumption~\ref{ass:diffeo_mix}). Under this more general setting, we describe a stronger version of Assumption~\ref{ass:sufficient_variability} regarding the degree of changes in $V^{(k)}$'s distribution, in Assumption~\ref{ass:sufficient_variability_general}. 

\begin{assumption}[Diffeomorphic mixing]\label{ass:diffeo_mix}
The mixing function $f: \bbR^{p+q}\to\cZ$ such that $Z^{(k)} = f(W^{(k)}, V^{(k)})$ for all $k \in [K]$, is a $C^1$ diffeomorphism.
\end{assumption}

\begin{assumption}[Sufficient variability under general diffeomorphic mixing] \label{ass:sufficient_variability_general}
The distribution of $V^{(k)}$ satisfies Assumption~\ref{ass:sufficient_variability} condition (i) and the following condition (ii):
\begin{enumerate}[label=(\roman*)]
    \setcounter{enumi}{1} 
    \item Given $S \in \cS$ and environments $k_1, k_2$ satisfying Condition (i), for any smooth function $h: \bbR^{p+q} \to \bbR^{\hat{p}}$, if the distribution of the output is invariant, i.e.,
    \begin{equation*}
        h(W^{(k_1)}, V^{(k_1)}) \deq h(W^{(k_2)}, V^{(k_2)}),
    \end{equation*} 
    then $h$ must be locally constant with respect to $S$, i.e., 
    \begin{equation*}
        \frac{\partial h}{\partial v_j}(w, v) = \bm{0} \quad \forall j \in S, \forall (w, v) \in \bbR^p \times \bbR^q.
    \end{equation*}
\end{enumerate}
\end{assumption}

The following theorem states how $W$ can be identified given general diffeomorphic mixing using diffeomorphic encoders. This can be achieved by neural networks where diffeomorphic functions are enforced or by specific networks such as normalizing flows \citep{kobyzev2020normalizing, rezende2015variational}. We provide two additional identification results under more general assumptions in Appendix~\ref{app:additional_results}.

\begin{restatable}[Identification of $W$ given general diffeomorphic mixings]{theorem}{thmIdentifyWdiffeo}
\label{thm:identfy_w_diffeo}
Consider Setting~\ref{set:multi_env} and suppose Assumptions~\ref{ass:diffeo_mix} and~\ref{ass:sufficient_variability_general} hold. Assume further that $W^{(k} \indep V^{(k)}$ for all $k\in[K]$ and $W^{(k)}$ is invariant across environments.
Let $\cA(\hat p, \hat q)$ denote the class of autoencoders $(m_{\mathrm{en}}, m_{\mathrm{de}})$ whose encoder $m_{\mathrm{en}}: \cZ\to\bbR^{\hat p + \hat q}$ is a diffeomorphism and whose decoder $m_{\mathrm{de}}$ is a smooth function.
If an encoder $m_{\mathrm{en}} \in \cA(\hat p, \hat q)$ satisfies the invariance constraint~\eqref{eq:dist_inv} with its first $\hat{p}$ dimensions, then the first $\hat p$ components of the latent representation depend only on $W$. Specifically, there exists a smooth function $\psi: \bbR^p \to \bbR^{\hat p}$ such that:
\begin{equation*}
m_{\mathrm{en}}(Z)^{:\hat p} = \psi(W) \quad \text{almost surely.}
\end{equation*}
If $\hat p \geq p$, $W$ is identified up to a diffeomorphism ($\psi$ is an embedding). If $\hat p < p$, $W$ is partially identified ($\psi$ is a projection).
\end{restatable}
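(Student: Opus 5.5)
The plan is to reduce everything to a statement about the composite map $h \coloneqq m_{\mathrm{en}}\circ f:\bbR^{p+q}\to\bbR^{\hat p+\hat q}$. Since $f$ (Assumption~\ref{ass:diffeo_mix}) and $m_{\mathrm{en}}$ are diffeomorphisms, $h$ is a diffeomorphism, and $m_{\mathrm{en}}(Z^{(k)}) = h(W^{(k)},V^{(k)})$ in every environment. In particular $\hat p+\hat q = p+q$. Write $h_1 \coloneqq h^{:\hat p}$, which is smooth. The invariance constraint \eqref{eq:dist_inv} then says exactly that
\begin{equation*}
h_1(W^{(k)},V^{(k)}) \deq h_1(W^{(k')},V^{(k')}) \quad \text{for all } k,k'\in[K].
\end{equation*}

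\textbf{Step 1 (local constancy in each block).} Fix $S\in\cS$ and let $k_1,k_2$ be the environments from Assumption~\ref{ass:sufficient_variability}(i). Invariance holds in particular between $k_1$ and $k_2$. Condition (ii) of Assumption~\ref{ass:sufficient_variability_general}, applied to $h_1$, then gives $\partial h_1/\partial v_j \equiv 0$ on $\bbR^{p}\times\bbR^q$ for every $j\in S$. Because $\bigcup_i S_i=[q]$, we get $\partial_v h_1\equiv 0$ on all of $\bbR^{p+q}$.

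\textbf{Step 2 (factorization through $w$).} The domain $\bbR^p\times\bbR^q$ has convex fibers $\{w\}\times\bbR^q$. By the mean value theorem along the segment joining $(w,v)$ and $(w,v')$, we get $h_1(w,v)=h_1(w,v')$. Setting $\psi(w)\coloneqq h_1(w,0)$ gives a smooth map with $h_1(w,v)=\psi(w)$ everywhere. Hence $m_{\mathrm{en}}(Z)^{:\hat p}=\psi(W)$; this holds surely, so in particular almost surely. By Lemma~\ref{lem:func_identif}, $\sigma(m_{\mathrm{en}}(Z)^{:\hat p})\subseteq\sigma(W)$.

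\textbf{Step 3 (rank and the two regimes).} Since $h$ is a diffeomorphism, $Dh$ is invertible everywhere, so its first $\hat p$ rows $D h_1=[D\psi(w)\;\;0]$ have full row rank $\hat p$. Thus $D\psi(w)$ has rank $\hat p$ at every $w$, which forces $\hat p\le p$.

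If $\hat p\ge p$, then $\hat p=p$ and $\psi$ is a local diffeomorphism. It is also injective: if $\psi(w)=\psi(w')$, then $h(w,v)$ and $h(w',v')$ share their first block for all $v,v'$. The second block of $h$ restricted to $\{w\}\times\bbR^q$ maps into a fiber, and injectivity of $h$ applied fiberwise shows $\psi$ is injective on the image. Hence $\psi$ is an embedding, and Lemma~\ref{lem:func_identif} gives perfect identification.

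If $\hat p<p$, $\psi$ is a submersion. It cannot be injective, because $\bbR^p$ does not embed continuously into $\bbR^{\hat p}$ (invariance of domain). Given that $W$ has a density, no measurable $\omega$ recovers $W$, so $W$ is only partially identified. The $\sigma$-algebra is nontrivial because $\psi$ is nonconstant.

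\textbf{Main obstacle.} I expect the main difficulty to lie in the dimension bookkeeping of Step 3. The diffeomorphism hypothesis forces $\hat p+\hat q=p+q$, and the rank argument then forces $\hat p\le p$. So the case $\hat p>p$ must either be read as vacuous, or handled by relaxing the encoder to an immersion. In that case "embedding" would be argued from injectivity of $h$ together with $\sigma(\psi(W))=\sigma(W)$. I would first try the fiber argument of Step 3 and otherwise fall back on the injectivity of $h$ plus the independence $W\indep V$ to show that $\psi(W)$ determines $W$ almost surely.

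Steps 1–2 are routine given Assumption~\ref{ass:sufficient_variability_general}(ii), which does the heavy lifting.
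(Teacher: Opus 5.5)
Your Steps 1 and 2 match the paper's proof: compose with $f$, apply Assumption~\ref{ass:sufficient_variability_general}(ii) on each block of the cover, and integrate $\partial_v h_1\equiv 0$ along the fibers. Your Step 3 is sharper than the paper's, which only asserts that ``$\psi$ must have rank $p$'' by appeal to the reconstruction loss. Your observation is correct: the top $\hat p$ rows of the invertible matrix $Dh$ force $\operatorname{rank} D\psi(w)=\hat p$. Hence $\hat p\le p$, so the case $\hat p>p$ is vacuous, and $\psi$ is a local diffeomorphism when $\hat p=p$.

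\textbf{The gap: injectivity of $\psi$ when $\hat p=p$.} Write $h(w,v)=(\psi(w),g(w,v))$. Injectivity of $h$ applied fiberwise only tells you that $g(w,\bbR^q)$ and $g(w',\bbR^q)$ are disjoint whenever $\psi(w)=\psi(w')$ with $w\neq w'$. That is not a contradiction.

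Here is a concrete counterexample with $p=2$, $q=1$. The map $h(x,y,v)=(e^x\cos y,\,e^x\sin y,\,y+\arctan v)$ is injective and has Jacobian determinant $e^{2x}/(1+v^2)>0$, so it is a diffeomorphism onto its open image. Yet its top block is the non-injective complex exponential, and $\psi(W)$ does not determine $W$ for, say, Gaussian $W$. Your fallback via $W\indep V$ cannot repair this, since $\psi$ does not involve $V$ at all.

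The missing ingredient is surjectivity of $h$ onto $\bbR^{p+q}$, combined with connectedness of $\bbR^q$. The hypothesis that $m_{\mathrm{en}}:\cZ\to\bbR^{\hat p+\hat q}$ is a diffeomorphism does provide this surjectivity. The argument runs as follows:
\begin{enumerate}
\item Fix $c\in\bbR^p$. The map $u\mapsto (h^{-1}(c,u))^{:p}$ is continuous from $\bbR^q$ into $\psi^{-1}(c)$.
\item The set $\psi^{-1}(c)$ is discrete because $\psi$ is a local diffeomorphism, so this map is constant.
\item It attains every $w\in\psi^{-1}(c)$ (take $u=g(w,v)$), and by surjectivity $\psi^{-1}(c)$ is nonempty.
\end{enumerate}
Hence $\psi^{-1}(c)$ is a single point for every $c$. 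So $\psi$ is a diffeomorphism of $\bbR^p$, with inverse $c\mapsto (h^{-1}(c,0))^{:p}$. The paper's proof does not supply this step either.

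\textbf{A smaller point for $\hat p<p$.} Non-injectivity of $\psi$ does not by itself rule out $W=\omega(\psi(W))$ a.s. You also need that a submersion is injective on no set of positive Lebesgue measure, which follows from local product coordinates plus Fubini. Together with the density of $W$, this gives the strict inclusion $\sigma(\psi(W))\subsetneq\sigma(W)$.
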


\subsection{Identification of $V$} \label{sec:indetify_v}

The identification of the non-invariant component $V$ is, in general, not achievable without 
further constraints (also pointed out by \citet{yao2025unifying}). 
In fact, even if $W$ is perfectly identified by $m_{\text{en}}(Z)^{:p}$ under the reconstruction identity and the distributional invariance constraints, the complement $m_{\text{en}}(Z)^{p:}$ need not identify $V$. 
An example is given in Example~\ref{ex:noniden_v_linear}. 

Nevertheless, under the same assumptions as in Theorem~\ref{thm:identfy_w} (polynomial mixing),  
if the autoencoder satisfies an additional independence constraint, $V$ is also identified. 
This is formally stated in Corollary~\ref{cor:identfy_v}. 

\begin{restatable}[Identification of $W$ and $V$ under common polynomial mixing]{corollary}{corIdentifyV}
\label{cor:identfy_v}
Assume Setting~\ref{set:multi_env} holds and $W$ has a finite second moment with a positive definite covariance matrix. Suppose Assumptions~\ref{ass:poly_mix} and~\ref{ass:sufficient_variability} hold.
If an autoencoder $(m_{\text{en}}, m_{\text{de}})$ with $m_{\text{de}}$ being an injective polynomial satisfying the conditions of Theorem~\ref{thm:identfy_w} (with $\hat{p} \geq p$) additionally satisfies the \emph{independence constraint}:
\begin{equation}
m_\text{en}(Z^{(k)})^{:\hat{p}} \indep m_\text{en}(Z^{(k)})^{\hat{p}:},
\end{equation}
then it identifies both $W^{(k)}$ and $V^{(k)}$ up to affine transformations. Specifically:
\begin{align*}
m_\text{en}(Z^{(k)})^{:\hat{p}} = A W^{(k)} + a, \ \ 
m_\text{en}(Z^{(k)})^{\hat{p}:} = B V^{(k)} + b,
\end{align*}
where $A$ and $B$ are constant matrices with full column rank.
\end{restatable}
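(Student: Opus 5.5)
The plan is to reduce everything to a linear-algebra statement about the composite map $g \coloneqq m_{\text{en}}\circ f$, which sends the true latents $(W,V)$ to the learned latents $\widehat U \coloneqq m_{\text{en}}(Z)$. Since $f$ and $m_{\text{de}}$ are injective polynomials of degree $L$ and the reconstruction identity $m_{\text{de}}(\widehat U)=Z=f(W,V)$ holds, the standard argument behind Theorem~\ref{thm:identfy_w} (and Theorem 2 of \citet{ahuja24multi}) shows that $g$ is affine on the support. Concretely, write $\widehat U = G\,(W^\top,V^\top)^\top + g_0$ with $G$ of full column rank.

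Theorem~\ref{thm:identfy_w} already yields the first block, $\widehat U^{:\hat p}=AW+a$ with $A$ of full column rank. Equivalently, the rows of $G$ indexed by $[\hat p]$ have zero $V$-block. For the second block, write
\begin{equation*}
\widehat U^{\hat p:} = C W + B V + b.
\end{equation*}
It then suffices to prove two facts: $C=0$, and $B$ has full column rank.

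\textbf{Step 1: $C=0$.} I would use the independence constraint $\widehat U^{:\hat p}\indep \widehat U^{\hat p:}$. Because $A$ has full column rank, $W=A^{+}(\widehat U^{:\hat p}-a)$ is a measurable function of $\widehat U^{:\hat p}$. Hence $W\indep CW+BV$. Combined with $W\indep V$ from Setting~\ref{set:multi_env}, we can compute
\begin{equation*}
0=\cov\!\left(W,\,CW+BV+b\right)=\Sigma_W C^\top + \cov(W,V)B^\top = \Sigma_W C^\top,
\end{equation*}
using finite second moments and $\cov(W,V)=0$. Since $\Sigma_W$ is positive definite, $C=0$. Finite second moments of $V$ are needed for this covariance to exist. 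If they are not assumed, I would instead argue via characteristic functions: independence gives, for every fixed $u$, a factorisation $\bbE[e^{i s^\top W + i u^\top (CW+BV)}]$ that forces $u^\top CW$ to be independent of $W$, hence a.s. constant, hence $u^\top C=0$ because $W$ is non-degenerate. The covariance route is the one the stated assumption on $W$ suggests.

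\textbf{Step 2: $B$ has full column rank.} This follows from the block structure of $G$:
\begin{equation*}
G=\begin{pmatrix} A & 0\\ 0 & B\end{pmatrix}.
\end{equation*}
Full column rank of $G$, which comes from injectivity of $g$ as a composition of injective maps on a support with nonempty interior, forces $B$ to have full column rank. The identity holds for every $k$ because $g$ is environment-independent.

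\textbf{Main obstacle.} The delicate point is the affinity of $g$ together with its full column rank. Theorem~\ref{thm:identfy_w} as stated only describes the first $\hat p$ coordinates, so I would need to extract from its proof that the entire map $g$ is affine. This is the step where injectivity of both polynomials and a support with nonempty interior are required, so that a polynomial identity on the support becomes a global one. If that extraction is not immediate, I would re-derive it by comparing monomial coefficients in $m_{\text{de}}\circ g=f$, following \citet{ahuja24multi}.
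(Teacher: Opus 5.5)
Your proposal is correct and follows the paper's proof. Like the paper, you take the full affine representation $m_{\text{en}}(Z)=GU+g_0$ with vanishing $W$-row/$V$-column block from the proof of Theorem~\ref{thm:identfy_w}, use the independence constraint in a covariance computation with $\Sigma_W$ positive definite to force the leakage block to zero, and read off full column rank of the $V$-block from full column rank of the whole matrix. The one cosmetic difference is that you first invert $A$ via $A^{+}$ to get $W\indep \widehat U^{\hat p:}$, whereas the paper computes $\cov(\widehat W,\widehat V)=B_W\Sigma_W B_C^\top$ and left-multiplies by $B_W^\top$; your remark that the covariance step tacitly needs integrability of $V$ applies equally to the paper's argument.
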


In the case of general diffeomorphic mixing, satisfying the additional independence constraint is not sufficient to identify $V$. An example is provided in Example~\ref{ex:noniden_v_general}. 

\subsection{Identification in Practice} \label{sec:implementation}

In this section, we describe how identification of the latent components of the
observed instruments can be achieved in practice given multi-environment data. 

Following the theory developed in Section~\ref{sec:dist_inv} and
\ref{sec:indetify_v}, the practical implementation is to train an autoencoder
subject to the invariance constraint to identify the invariant component $W$,
and additionally an independence constraint to identify $V$. To prevent the
representations to collapse during training, we add a loss term to discourage
diminishing determinant of the learned representations (see Appendix~\ref{app:exp_details}). 
Given a set of observed instruments $\left\{Z^{(k)}\right\}_{k=1}^K$, the autoencoder is trained by
minimizing a weighted sum of losses:
\begin{equation}
(\hat{m}_{\text{en}}, \hat{m}_{\text{de}} ) \coloneqq \argmin_{(m_{\text{en}}, m_{\text{de}})} \ 
\cL_{\text{rec}}  
+ \lambda_1 \cL_{\text{inv}}
+ \lambda_2 \cL_{\text{ind}},
\end{equation}
where $\cL_{\text{rec}}$ and $\cL_{\text{inv}}$ are
reconstruction, invariance, and independence losses, respectively, defined as 
\begin{align*}
    \cL_{\text{rec}} &\coloneqq \sum_{k\in[K]}\bbE\left[\left(m_{\text{de}}\circ m_{\text{en}}(Z^{(k)})- Z^{(k)}\right)^2\right],\\
    \cL_{\text{inv}} &\coloneqq \sum_{\substack{j,k \in [K] \\ j\neq k}}\text{Inv}\left(m_{\text{en}}(Z^{(j)})^{:\hat{p}}, m_{\text{en}}(Z^{(k)})^{:\hat{p}}\right),\\
    \cL_{\text{ind}} &\coloneqq \sum_{k\in[K]}\text{Ind}\left(m_{\text{en}}(Z^{(k)})^{:\hat{p}}, m_{\text{en}}(Z^{(k)})^{\hat{p}:}\right),
\end{align*}
and $\lambda_1$, $\lambda_2$, and $\delta$ are tuning parameters. In our
numerical experiments in Section~\ref{sec:experiment}, we employ the
kernel-based Maximum Mean Discrepancy \citep[MMD,][]{gretton2012kernel} for the
invariance loss and the Hilbert-Schmidt independence criterion
\citep[HSIC,][]{gretton2005measuring} for the independence loss. We compare
using different kernels for the MMD and HSIC losses, as well as other simpler,
non-kernel based losses in Appendix~\ref{app:additional_ablation}. 

\section{Good and Bad Use of Learned Representations} \label{sec:goodbaduse}

Having established that the latent component $W$ serves as a valid instrument for identifying the ACE, a natural question arises: Does the invalid component $V$ hold any statistical value? Specifically, can incorporating $V$ into the estimation procedure improve statistical efficiency (reduce variance) without compromising consistency?

\textbf{Efficiency Gains in the Ideal Scenario.} Consider the ideal scenario where the true latent factors $W$ and $V$ are directly observed (see Figure~\ref{fig:general_setup}). Under Setting~\ref{set:multi_env}, while $W$ is the instrument, $V$ acts as a proxy for the unobserved confounders $H$. Since $V$ is correlated with the outcome $Y$ (via $H$) but independent of the instrument $W$, it qualifies as a valid adjustment covariate. In this setting, we can employ the Covariate-Adjusted Two-Stage Least Squares (2SLS) estimator, where $V$ is included in the adjustment set (i.e., partialled out from $D$, $Y$, and $W$). Theoretical frameworks for variance reduction in IV models \citep{davidson1993estimation, vansteelandt2018improving} and recent graphical criteria for optimal adjustment sets \citep{henckel2024graphical} confirm that conditioning on such variables generally reduces the asymptotic variance of the estimator. We formally state this result in the linear setting in Proposition~\ref{prop:2sls_efficiency_vw}.

\textbf{Risks with Learned Representations.} In practice, however, we need to rely on learned representations $\widehat{W}$ and $\widehat{V}$. Using these imperfect proxies for adjustment introduces two specific risks. The first is an \textit{efficiency loss via information leakage}. If the disentanglement is imperfect, $\widehat{V}$ may capture some information belonging to $W$. In this case, partialling out $\widehat{V}$ inadvertently removes valid instrumental variation from $\widehat{W}$, thereby weakening the first stage and inflating the variance of the ACE estimator (potentially worse than using no adjustment).
The second is a \textit{bias via collider structures}. If the learned representation fails to satisfy the independence constraint, $\widehat{V}$ may even act as a collider (e.g., if it becomes a function of both $W$ and $H$). Conditioning on such a variable opens a backdoor path, rendering common estimators such as 2SLS biasd. We illustrate these two failure modes explicitly in Examples~\ref{ex:reduced_eff} and \ref{ex:collider_bias} in Appendix~\ref{app:add_exmaples}.

%%%%%%%%%%%%%%%%%%%%%%%%%%%%%%%%%%%%%%%%%%%%%%%%%%%%%%%%%%%%%%%%%%%%%%%%%%%%%%%%
\section{Experiments} \label{sec:experiment}
%-------------------------------------------------------------------------------

\subsection{Deconfounding Genetic Variants from All of Us} \label{sec:exp_aou}

We conduct a semi-synthetic experiment using real genetic variants from the All of Us (AoU) Research Hub to demonstrate the efficacy of our framework in removing population confounding for Mendelian Randomization tasks.

\textbf{Genotype data generation.} We extract $652$ genetic variants in the GLP1R region from individuals of East Asian and African predicted ancestry, sampling $8000$ individuals from each population (see Appendix~\ref{app:details_semisyn}). To construct a semi-synthetic dataset that preserves realistic linkage disequilibrium (LD) while allowing for controlled confounding, we apply Independent Component Analysis (ICA) to the pooled genotype matrix, reducing the data to $10$ latent components. We choose the component with the strongest distributional shift across populations to serve as the population-dependent confounder $V$. The remaining components are resampled to ensure invariance, creating the latent component $W$. These representations are then mapped back to the original SNP space, yielding genotypes that exhibit realistic LD structures but possess controlled population stratification profiles.

\textbf{Exposures and outcome data.} We generate a two-dimensional exposure $D$ and a univariate outcome $Y$ based on a linear structural causal model (details in Appendix~\ref{app:details_semisyn}). We introduce an unobserved confounder $H$, generated as a direct copy of $V$, which confounds both the exposure and the outcome. Since the observed instrument $Z$ is generated by $V$, it is naturally confounded by $H$, violating the standard independence assumption. We repeat the sampling of $W$ and all noise variables over $20$ random seeds to assess estimator stability.

\begin{figure}
    \centering
    \includegraphics[width=0.5\linewidth]{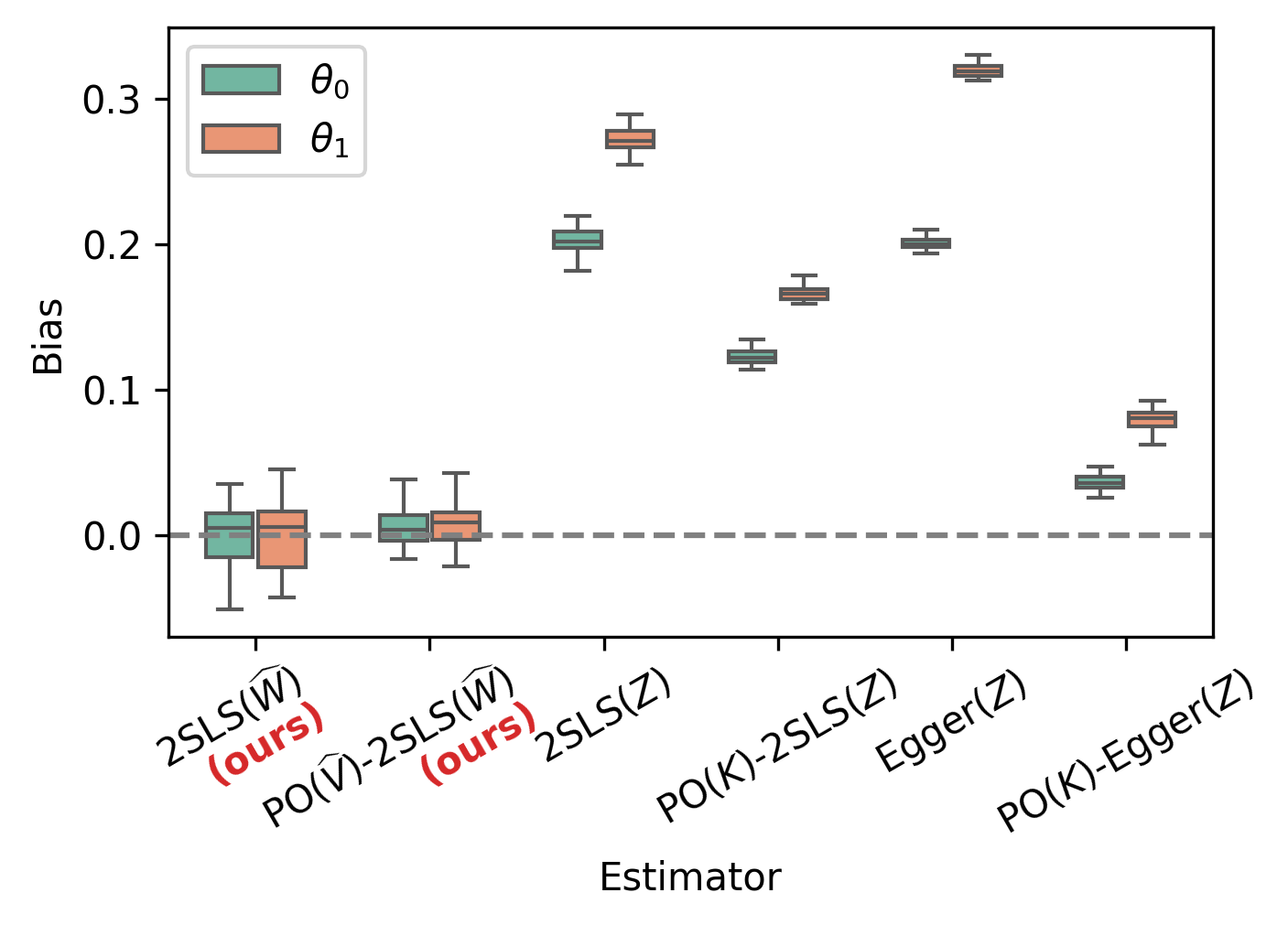}
    \caption{Bias of estimated ACE based on different estimators based on semi-synthetic experiments using genetic variants from AoU biobank.}
    \label{fig:aou_est}
\end{figure}

\textbf{Results.} We train our proposed autoencoder with hyperparameters $\lambda_1 = \lambda_2 = 10$ to recover the invariant representation $\widehat{W}$ and the variant component $\widehat{V}$. We evaluate two estimators (see Definition~\ref{def:2sls}) based on our learned representations:
\begin{enumerate}
\item $\text{2SLS}(\widehat{W})$: A standard 2SLS estimator using the learned invariant component as the instrument.
\item $\text{PO}(\widehat{V})
\text{-2SLS}(\widehat{W})$: The Partialling-Out 2SLS estimator which uses $\widehat{W}$ as the instrument while adjusting for the learned variant component $\widehat{V}$ to improve precision.
\end{enumerate}

We compare these against four baselines: standard 2SLS using the observed $Z$ ($\text{2SLS}(Z)$), MR Egger ($\text{Egger}(Z)$), and their population-adjusted counterparts ($\text{PO}(K)\text{-2SLS}(Z)$ and $\text{PO}(K)\text{-Egger}(Z)$) which partial out the discrete population indicator $K$.

Figure~\ref{fig:aou_est} reports the estimation bias for the two causal parameters, $\theta_0$ and $\theta_1$. We see that both of our proposed estimators ($\text{2SLS}(\widehat{W})$ and $\text{PO}(\widehat{V})\text{-2SLS}(\widehat{W})$) achieve a bias centered near zero for both parameters, effectively removing the genetic confounding. The naive estimators $\text{2SLS}(Z)$ and $\text{Egger}(Z)$ exhibit significant bias (up to $0.3$). Notably, partialling out the population label ($\text{PO}(K)$) reduces this bias but fails to eliminate it, suggesting that the confounding structure is more complex than a simple mean shift between populations. Moreover, as anticipated by our theoretical analysis in Section~\ref{sec:goodbaduse}, our partialling-out estimator $\text{PO}(\widehat{V})\text{-2SLS}(\widehat{W})$ exhibits visibly tighter variance compared to $\text{2SLS}(\widehat{W})$, confirming that $\widehat{V}$ serves as a useful covariate for variance reduction.

\subsection{Theory Verification and Ablation Studies} \label{sec:exp_ablation}

We verify our theoretical findings and evaluate the robustness of our framework using fully simulated data. The data generation process follows the Structural Causal Model (SCM) in \eqref{eq:general_scm} with linear structural equations, while the mixing function generating $Z$ varies. We fix the true latent dimensions to $p=q=2$, with noise variables sampled from multivariate Gaussian distributions with non-diagonal covariance matrices.
Hyperparameters $\lambda_1$ (invariance) and $\lambda_2$ (independence) are selected via cross-validation from a grid of 9 values. All experiments are repeated over 20 random seeds. Further implementation details are provided in Appendix~\ref{app:details_ablation}.

\textbf{Different mixing mechanisms.} We simulate the observed instrument $Z$ using various mixing functions: polynomials of degree $L \in \{1, 2, 3\}$ and an invertible Multi-Layer Perceptron (MLP). The dimension of $Z$ expands with the degree ($d_Z \in \{5, 15, 35\}$ for polynomial degrees 1, 2, and 3, respectively) and is set to $d_Z=4$ for the invertible MLP. The learned latent dimensions are correctly specified as $\hat{p}=\hat{q}=2$. The results are summarized in Figure~\ref{fig:polymix}. We observe that our proposed estimators ($\text{2SLS}(\widehat{W})$ and $\text{PO}(\widehat{V})\text{-2SLS}(\widehat{W})$) consistently achieve low or near-zero bias across all mixing mechanisms. In contrast, all other methods exhibit significant bias. This confirms that standard IV methods and their population-adjusted variants fail when the mixing is non-linear or when the invalidity is driven by latent environmental factors rather than simple discrete indicators.

\begin{figure}
\centering\includegraphics[width=0.5\linewidth]{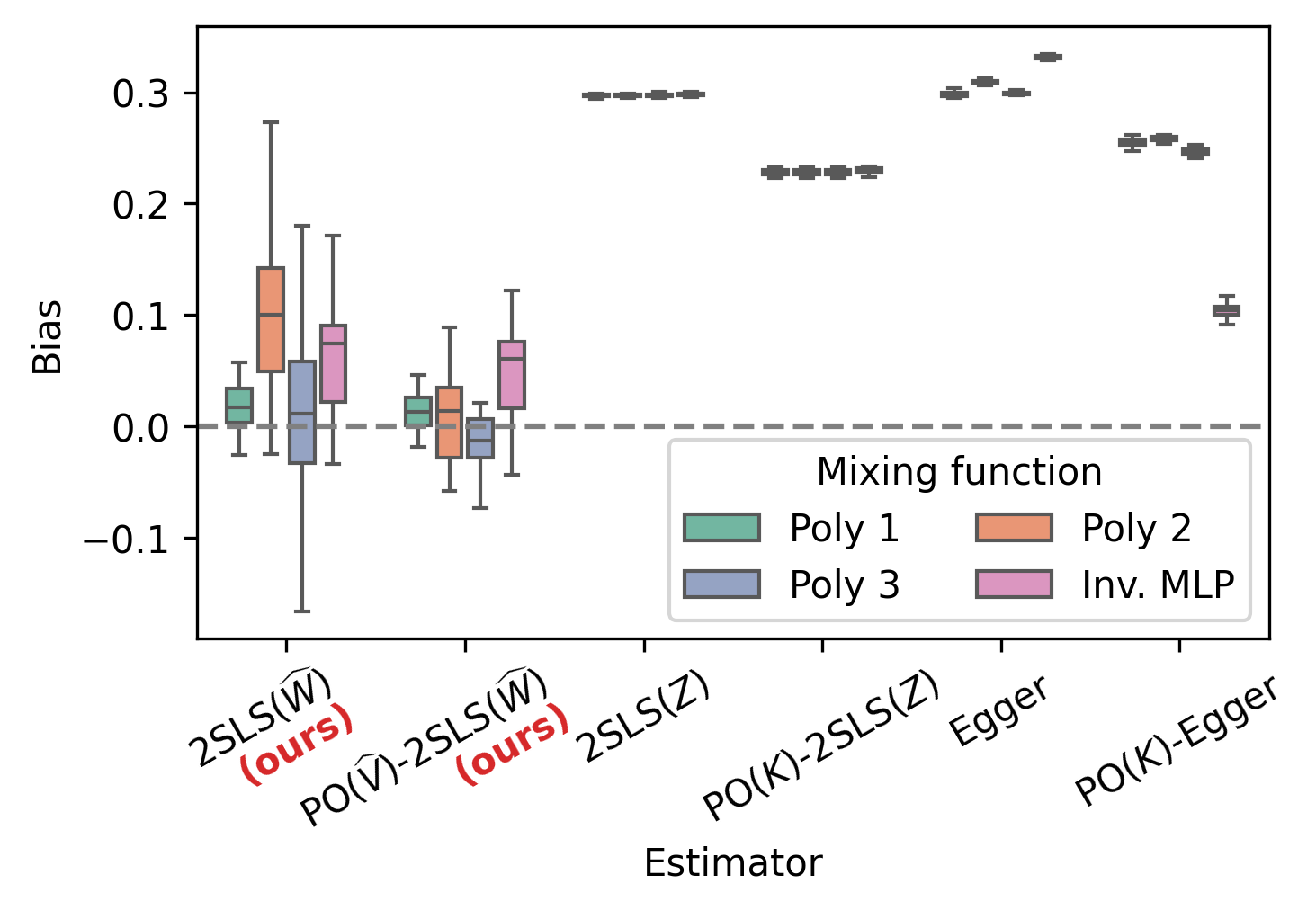}
\caption{Bias comparison under different mixing functions (Polynomials of degree $1$-$3$ and invertible MLP). Our methods remain unbiased while all other methods exhibit large bias.}
\label{fig:polymix}
\end{figure}

\textbf{Mis-specified latent dimensions.} We investigate the practical challenge where the true dimension of the valid instrument $W$ ($p=2$) is unknown. We train models specifying $\hat{p} \in \{1, 2, 3, 4\}$ under a  polynomial mixing of degree $L=3$. 
As shown in Figure~\ref{fig:misspec_dimhw}, with \emph{under-specification} ($\hat{p} < p$), the bias remains small. This aligns with our theorem on partial identification—recovering a linear projection of $W$ is sufficient for valid IV estimation in just- or over-identified linear IV models. However, \emph{over-specification} ($\hat{p}=4$) leads to increased variance and bias, likely because the excess capacity allows the model to encode noise or leak information from the variant component $V$. This suggests that in practice, a conservative estimate of the latent dimension is preferable if the learned instrument is still strong (e.g., in the linear case, it satisfies the rank condition).

\begin{figure}
\centering
\includegraphics[width=0.5\linewidth]{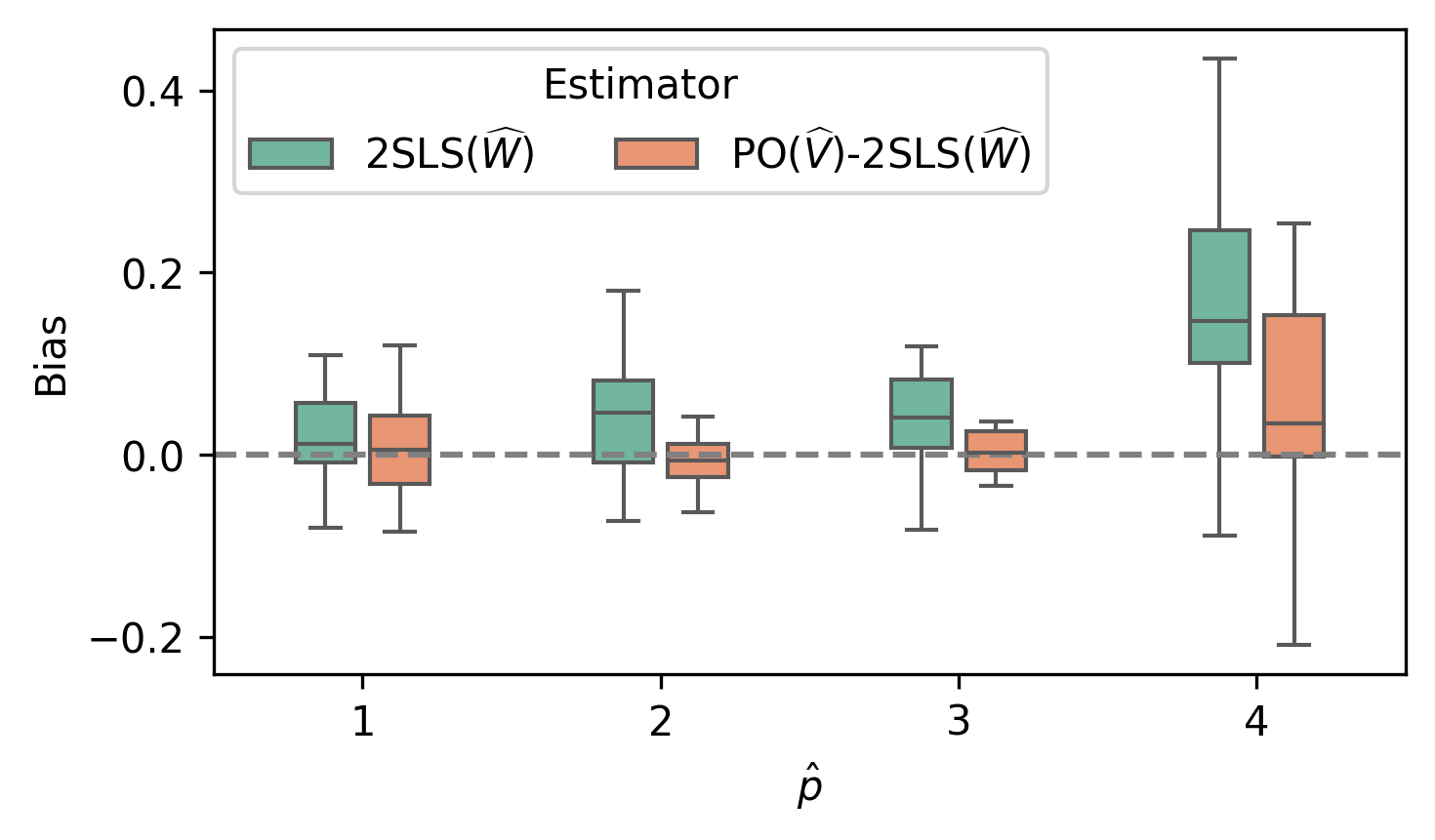}
\caption{Estimation bias given misspecified latent dimensions ($\hat{p}$) when the true dimension is $p=2$. Our methods remain unbiased for under-specified and moderately over-specified dimensions of $\widehat{W}$.}
\label{fig:misspec_dimhw}
\end{figure}

\paragraph{With and without independence loss} Finally, we validate the theoretical necessity of the independence constraint (Assumption~\ref{ass:sufficient_variability}). We compare models trained with and without the independence loss ($\lambda_2=0$). Figure~\ref{fig:no_ind_loss} presents the results. In the upper panel (with independence), we see that both estimators are unbiased, confirming successful disentanglement. 
In the lower panel (without independence), while $\text{2SLS}(\widehat{W})$ remains largely unbiased, the partialling-out estimator $\text{PO}(\widehat{V})\text{-2SLS}(\widehat{W})$ suffers from significant bias, particularly under complex mixings (Poly 3, MLP).
This empirically confirms our theoretical distinction: recovering $W$ (for standard IV) primarily relies on invariance, but safely using $V$ for variance reduction (via partialling out) strictly requires the independence constraint to prevent collider bias or information leakage.

\begin{figure}
\centering
\includegraphics[width=0.5\linewidth]{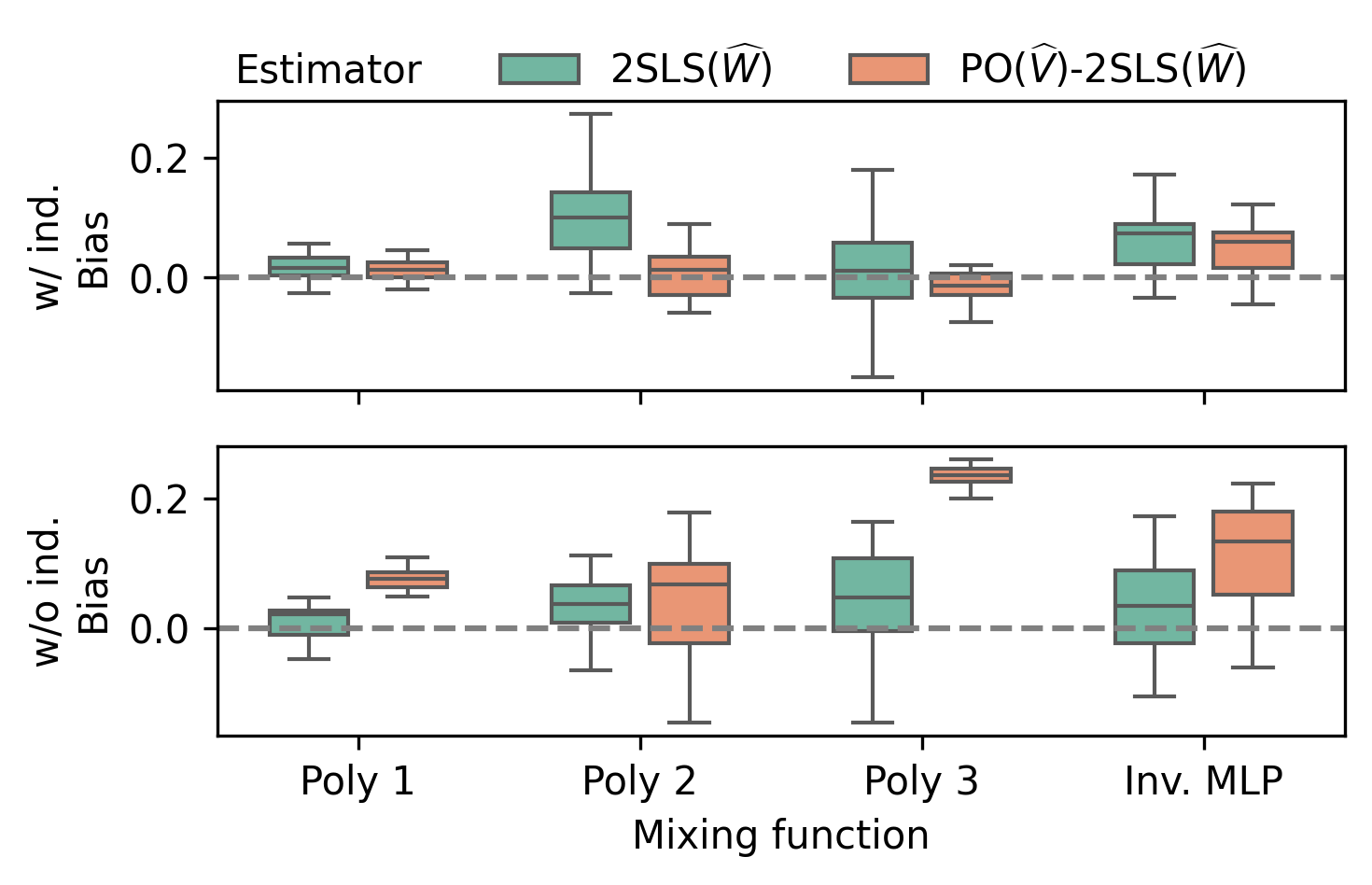}
\caption{Estimation bias with and without including an independence loss. 
Both 2SLS and PO-2SLS estimators are unbiased when independence loss is included, while PO-2SLS is not unbiased when independence loss is excluded.}
\label{fig:no_ind_loss}
\end{figure}

%%%%%%%%%%%%%%%%%%%%%%%%%%%%%%%%%%%%%%%%%%%%%%%%%%%%%%%%%%%%%%%%%%%%%%%%%%%%%%%%
\section{Discussion} \label{sec:disuss}
%-------------------------------------------------------------------------------

In this work, we presented a representation learning framework to address an important challenge in Mendelian Randomization: the presence of environmental confounding that renders genetic instruments invalid. By leveraging multi-environment data, we proved that the invariant latent component of a confounded instrument can be recovered up to a transformation sufficient for valid causal inference. We further analyzed the conditions under which the variant component can be safely used to improve estimation efficiency. Moreover, while our framework focuses on multi-environment data (distributional invariance), similar principles apply to multi-view data (paired observations). In such settings, objective functions based on contrastive learning, such as InfoNCE \citep{oord2018representation}, could replace the MMD loss to exploit sample-level co-occurrence rather than population-level invariance. 

\textbf{Limitations and Future Work.} Our theoretical guarantees rely on the assumption of sufficient variability (Assumption~\ref{ass:sufficient_variability}), which requires that the distribution of the environmental confounder $V$ shifts in a modular fashion across environments. Developing tests to empirically validate the sufficient variability assumption on finite samples is an important direction for future research.
Furthermore, in high-dimensional genetic applications, the mapping from latent population structure to observed SNPs is often sparse. Incorporating sparsity constraints into the encoder (e.g., via sparse coding or regularizers as in \citet{ziyin2023spred}) could improve both interpretability and sample efficiency.

\section*{Acknowledgements}

The semi-synthetic experiments used data from the All of Us Research Program. We gratefully acknowledge the participants for their contributions.

\bibliography{bibliography}
\bibliographystyle{plainnat}

%%%%%%%%%%%%%%%%%%%%%%%%%%%%%%%%%%%%%%%%%%%%%%%%%%%%%%%%%%%%%%%%%%%%%%%%%%%%%%%
%%%%%%%%%%%%%%%%%%%%%%%%%%%%%%%%%%%%%%%%%%%%%%%%%%%%%%%%%%%%%%%%%%%%%%%%%%%%%%%
% APPENDIX
%%%%%%%%%%%%%%%%%%%%%%%%%%%%%%%%%%%%%%%%%%%%%%%%%%%%%%%%%%%%%%%%%%%%%%%%%%%%%%%
%%%%%%%%%%%%%%%%%%%%%%%%%%%%%%%%%%%%%%%%%%%%%%%%%%%%%%%%%%%%%%%%%%%%%%%%%%%%%%%
\newpage
\appendix
\onecolumn

%%%%%%%%%%%%%%%%%%%%%%%%%%%%%%%%%%%%%%%%%%%%%%%%%%%%%%%%%%%%%%%%%%%%%%%%%%%%%%%%
\section{Additional definitions} \label{app:additional_defs}
%-------------------------------------------------------------------------------

We define a Partially Linear Structural Causal Model (PLSCM) for Multi-Environment Mendelian Randomization (MPMR). 
\begin{definition}[PLSCM-MEMR] \label{def:memr_plscm}
For any given $k \in [K]$ and for all $i\in [n_k]$
% \shimeng{[This currently assumes no violation of exclusion restriction, which is (3) in Def 1.1]}
\begin{equation} \label{eq:general_scm}
\begin{split}
    Y_i^{(k)} &\coloneqq \left(\theta_0^{(k)}\right)^\top D_i^{(k)} + g^{(k)}(V_i^{(k)}, H_i^{(k)}, \epsilon_i^{(k)}) \\
    D_i^{(k)} &\coloneqq h^{(k)}(W_i^{(k)}, V_i^{(k)}, H_i^{(k)}, \eta_i^{(k)}) \\
    W_i^{(k)} &\coloneqq \omega_i^{(k)} \\
    V_i^{(k)} &\coloneqq l^{(k)}(H_i^{(k)}, \nu_i^{(k)}) \\
    Z_i^{(k)} &\coloneqq f^{(k)}(W_i^{(k)}, V_i^{(k)}), %, \xi_i^{(k)}
\end{split}
\end{equation}
where $(\omega_i^{(k)}, \epsilon_i^{(k)}, \eta_i^{(k)}, \nu_i^{(k)}) \sim Q^{(k)}$ 
and the joint law $Q^{(k)}$ factorizes into the product of the marginal laws of the four variables, i.e., $Q^{(k)} = Q_w^{(k)} \otimes Q^{(k)}_\epsilon \otimes Q^{(k)}_\eta \otimes Q^{(k)}_\nu$. We assume that $Q_w^{(k)} = Q_w$ for all $k \in [K]$, while $Q^{(k)}_\epsilon$, $Q^{(k)}_\eta$, and $Q^{(k)}_\nu$ can be different across $k\in[K]$. $H_i^{(k)} \sim Q_H^{(k)}$ are unobserved variables confounding $V_i^{(k)}$, $D_i^{(k)}$, and $Y_i^{(k)}$. $W_i^{(k)}$ and $V_i^{(k)}$, taking values in $\bbR^p$ and $\bbR^q$ respectively, are latent (unobserved) variables which generate the observed variable $Z_i^{(k)}$ via $f^{(k)}$.  $f^{(k)}$ is a smooth\footnote{Unless otherwise specified, in this work, smooth functions refer to $C^1$ functions.} invertible measurable function, referred to as the \emph{mixing function}, and $g^{(k)}, h^{(k)}$ are arbitrary measurable functions. Without loss of generality, we assume that $\bbE[w_i^{(k)}] = \bm{0}$ and $\bbE[g^{(k)}(V_i^{(k)}, H_i^{(k)}, \epsilon_i^{(k)})] = 0$. 
\end{definition}

The structural equation for $Z$ is without an additional independent noise, which exists for all other
structural equations. This is because any independent noise can be seen as part
of $W$, which is not a child of any other variables in the SCM. 
In essence, in a set of MEMR-PLSCMs, only the distribution of the latent component $W$ is assumed to be unchanged across the environments, while all other variables and parameters may have a changing distribution. Our interest lies in the causal parameters $\theta^{(k)}$ for all $k\in[K]$. 

\begin{definition}[Injective polynomial function] \label{def:injective_poly}
An injective polynomial function of degree $L$, $f: \bbR^{d_u}\to\bbR^{d_z}$, is given by 
\begin{equation}\label{eq:poly}
    f(u) = A \left[1, u^\top, \left(u^{\bar\otimes\,2}\right)^\top, \cdots, \left(u^{\bar\otimes\,L}\right)^\top\right]^\top,
\end{equation}
where $u^{\bar\otimes\,l}$ denotes the vector of all distinct degree-$l$ monomials in the 
entries of $u$, and $A\in\bbR^{d_z \times {L + d_u \choose L}}$ has full column rank. 

\end{definition}

\begin{definition}[Linear 2SLS estimators] \label{def:2sls}
Given i.i.d.~obsevations $\{W_i, V_i, D_i, Y_i\}_{i\in[m]}$ where $W_i\in\bbR^p$, $V_i\in\bbR^q$, 
$D_i\in\bbR^d$, and $Y_i\in\bbR$, denoted as $\bW\in\bbR^{n\times p}$, $\bV\in\bbR^{n\times q}$, 
$\bD\in\bbR^{n\times d}$, and $\bY\in\bbR^{n\times 1}$ in matrix form respectively, 
the linear two-stage least squares (2SLS) estimator is defined as 
\begin{equation*}
\hat\theta^{\text{2SLS}} \coloneqq (\bD^\top P_W\bD)^{-1}(\bD^\top P_W\bY),
\end{equation*}
where $P_W \coloneqq \bW (\bW^\top\bW)^{-1}\bW^\top$ 
is the orthogonal projection onto the column space of $\bW$. 
Moreover, let $\widetilde{\bW}$, $\widetilde{\bD}$, and $\widetilde{\bY}$ denote the residuals of 
$\bW$, $\bD$, and $\bY$, respectively, after linearly partialling out $\bV$\footnote{Alternatively, 
one may partial out $V$ nonlinearly by estimating $\bbE[T \mid V]$ with a flexible function $\hat{f}$ 
and setting $\widetilde{\bD} = \bD - \hat{f}(\bV)$. In this case, sample splitting is typically required. 
This approach is known as double machine learning \citep{chernozhukov2018double}.}. 
For instance, $\widetilde{\bD} = (I_n - P_V)\bD$, where $P_V$ denotes the orthogonal projection onto the 
column space of $\bV$ and $I_n$ is the identity matrix of dimension $n$. 
We then define the linearly partialled-out 2SLS (PO-2SLS) estimator as
\begin{equation*}
\hat\theta^{\text{PO-2SLS}} \coloneqq (\widetilde\bD^\top P_{\widetilde W}\widetilde\bD)^{-1}(\widetilde\bD^\top P_{\widetilde W}\widetilde\bY),
\end{equation*}
where $P_{\widetilde W}$ is the orthogonal projection onto the column space of $\widetilde\bW$. 
\end{definition}

\section{Additional theoretical results} \label{app:additional_results}

\begin{theorem}[Identification of $W$ given general diffeomorphic mixings and bounded encoder]
\label{thm:identfy_w_bounded}
Consider Setting~\ref{set:multi_env} and assume Assumption~\ref{ass:sufficient_variability_general} holds. Let $W$ admit a strictly positive probability density function.
Let $\cA(\hat d_U, \hat d_Z)$ be the set of autoencoders $(m_{\text{en}}, m_{\text{de}})$ where $m_{\text{en}}: \cZ\to (a,b)^{\hat d_U}$ is a smooth function mapping to an open hypercube, satisfying the reconstruction identity \eqref{eq:recon_ident} and the invariance constraint \eqref{eq:dist_inv} for a dimension $\hat p \leq \hat d_U$.
If an autoencoder $(m_{\text{en}}, m_{\text{de}}) \in \cA(\hat d_U, \hat d_Z)$ maximizes the differential entropy $H(m_{\text{en}}(Z)^{:\hat{p}})$, then $m_{\text{en}}(Z)^{:\hat{p}}$ depends only on $W$. Moreover,
\begin{enumerate}[label=\arabic*),itemsep=5pt]
\item If $\hat{p} = p$, it perfectly identifies $W$ (up to a scalar coordinate-wise diffeomorphism).
\item If $\hat{p} < p$, it partially identifies $W$ (extracts maximal information for that dimension).
\end{enumerate}
\end{theorem}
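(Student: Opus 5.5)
The plan has three stages: show the first $\hat p$ encoder coordinates depend only on $W$, then use entropy maximization to pin down how much of $W$ they retain. Write the composite map $h \coloneqq m_{\text{en}}\circ f:\bbR^{p+q}\to(a,b)^{\hat d_U}$, which is smooth because $f$ is $C^1$ and $m_{\text{en}}$ is smooth. The reconstruction identity \eqref{eq:recon_ident} makes $m_{\text{en}}$ injective on the support of $Z$, so $h$ is injective as well.

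\textbf{Step 1: dependence on $W$ only.} Let $h^{:\hat p}$ be the first $\hat p$ output coordinates of $h$. The invariance constraint \eqref{eq:dist_inv} says
\[
h^{:\hat p}(W^{(k_1)},V^{(k_1)})\deq h^{:\hat p}(W^{(k_2)},V^{(k_2)})
\]
for every pair of environments, in particular for each pair $(k_1,k_2)$ attached to some $S\in\cS$ by condition (i). Assumption~\ref{ass:sufficient_variability_general}(ii) then gives $\partial h^{:\hat p}/\partial v_j\equiv 0$ for all $j\in S$. Since $\bigcup_i S_i=[q]$, the whole $v$-gradient vanishes. Because $\bbR^q$ is connected, $h^{:\hat p}(w,v)=\psi(w)$ for a smooth $\psi:\bbR^p\to(a,b)^{\hat p}$, and hence $m_{\text{en}}(Z)^{:\hat p}=\psi(W)$ a.s.

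\textbf{Step 2: entropy maximization.} The differential entropy of a variable supported on $(a,b)^{\hat p}$ is at most $\hat p\log(b-a)$, with equality iff the variable is uniform. This bound is attained inside the class. Take $\psi$ to be the Knothe--Rosenblatt or CDF map of $W$: its coordinates are conditional CDFs rescaled to $(a,b)$, and it is a diffeomorphism because the density is strictly positive. Extend the encoder injectively with the remaining coordinates, using a coordinatewise rescaled version of the $V$-part of $f^{-1}$, so that reconstruction holds. Invariance holds automatically since $W$ is invariant. So any maximizer must have $\psi(W)$ uniform on $(a,b)^{\hat p}$.

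For $\hat p=p$: $\psi$ is smooth and pushes a positive density onto the uniform law on a set of full dimension. Combining this with injectivity of $h$ (so $\psi$ is injective in $w$) and invariance of domain, $\psi$ is a diffeomorphism onto its image. Lemma~\ref{lem:func_identif} then gives $\sigma(\psi(W))=\sigma(W)$, i.e.\ perfect identification. The paper's phrase ``scalar coordinate-wise diffeomorphism'' is stronger than this. I would either read it as ``up to a diffeomorphism'' or argue it only for the case where $W$ has independent coordinates, where the product of marginal CDFs is a maximizer.

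For $\hat p<p$: $\psi$ cannot be injective by dimension count. The $\sigma$-algebra $\sigma(\psi(W))$ is nontrivial, since uniformity forces nonconstancy, and strictly smaller than $\sigma(W)$. This is partial identification, and maximal entropy is the sense in which it ``extracts maximal information.''

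\textbf{Main obstacle.} The delicate step is the $\hat p=p$ case. First, injectivity of $\psi$ has to come from injectivity of $h$ jointly with the other coordinates; on its own that fails, since other coordinates could carry the $w$-information. Second, I need a Jacobian argument. A smooth non-injective map reaching a uniform law would need $\det D\psi\neq0$ wherever the density is positive, and must not fold. Where $\det D\psi=0$ the image measure acquires singular or boosted parts, so I would try to show that any folding inflates density somewhere and hence strictly lowers entropy. If that direct argument stalls, I would rely only on $\sigma(\psi(W))\subseteq\sigma(W)$ together with the equality case of the entropy bound, and accept a weaker conclusion.
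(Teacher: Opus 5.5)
The gap is in case 1), at the step that carries all the weight: showing that $\psi$ is injective. Your main-line reason (``injectivity of $h$, so $\psi$ is injective in $w$'') fails, as you note yourself. The map $h(w,v)=(\psi(w),h^{\hat p:}(w,v))$ can be injective because the trailing coordinates carry whatever $w$-information $\psi$ discards.

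Your folding idea is sound but only local. Suppose $\det J_\psi(w_0)=0$. Then $\mathrm{Leb}(\psi(B_r(w_0)))\le\int_{B_r(w_0)}|\det J_\psi|=o(r^p)$. On the other hand, $\Pr(\psi(W)\in\psi(B_r(w_0)))\ge\Pr(W\in B_r(w_0))\ge c\,r^p$ when $p_W$ is bounded below near $w_0$. This contradicts $\psi(W)$ having constant density $(b-a)^{-p}$, so $\psi$ is a local diffeomorphism. That derivation is cleaner than the paper's, which writes the injective change-of-variables formula $p_W=p_U(\psi)\,|\det J_\psi|$ and so presupposes injectivity.

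What is missing is the passage from local to global injectivity, and entropy cannot supply it. Once $\psi(W)$ is uniform, the entropy is already at its maximum $p\log(b-a)$, so a fold-free map with several sheets whose densities add up to a constant is not penalized. The paper bridges this by citing Zimmermann et al.\ (2021, Prop.~5): a smooth map between simply connected domains that pushes a strictly positive density to a strictly positive density is a bijection.

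This bridge does not hold as the paper uses it, so your instinct to weaken the conclusion is well founded. Entropy maximization gives uniformity only in law. A covering-space argument of this kind needs the pushforward to be regular at every point, including surjectivity and properness. Here is a counterexample with $p=2$ and $W\sim\cN(0,I_2)$:
\begin{itemize}
\item Let $R(w)=(F(w_1),4\pi F(w_2))$, where $F$ is the standard normal CDF.
\item Let $\rho(s,\theta)=(\sqrt{s}\cos\theta,\sqrt{s}\sin\theta)$ on $(0,1)\times(0,4\pi)$. It has Jacobian $1/2$, and every point of the unit disk off the origin and one ray has exactly two preimages.
\item Let $T$ be the Knothe--Rosenblatt map from the uniform law on the disk to that on $(0,1)^2$; it has constant Jacobian $1/\pi$.
\end{itemize}
Then $\psi(w)=a\mathbf{1}+(b-a)\,T(\rho(R(w)))$ is a smooth local diffeomorphism with $\psi(W)$ exactly uniform on $(a,b)^2$. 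Yet it is two-to-one almost everywhere, since $\Pr(W_2<0\mid\psi(W))=1/2$, so $\sigma(\psi(W))\subsetneq\sigma(W)$. Appending a bounded coordinatewise rescaling of $f^{-1}(z)$ turns it into an encoder that satisfies reconstruction and invariance. The implication ``uniform $\Rightarrow$ injective'' used in case 1) therefore needs an extra hypothesis, such as properness of $\psi$. Neither your plan nor the paper's proof supplies one.

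Otherwise your plan tracks the paper. Step 1 is the same argument. Your explicit existence construction is more careful than the paper's appeal to a ``rich enough'' class. Your doubt about ``coordinate-wise'' is justified, since the paper's argument gives at most a global diffeomorphism.

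For $\hat p<p$, non-injectivity of a continuous $\psi$ does not by itself give $\sigma(\psi(W))\subsetneq\sigma(W)$ modulo null sets. You need $\psi$ to be non-injective on every full-measure set, for example via the coarea formula on the set where $D\psi$ has rank $\hat p$.
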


\begin{proof}
Let $h(z) \coloneqq m_{\text{en}}(z)^{:\hat{p}}$ denote the relevant part of the encoder output, and let $\widehat{W} \coloneqq h(Z)$. The range of $h$ is the bounded open hypercube $\mathcal{C} = (a, b)^{\hat{p}}$.

By Assumption~\ref{ass:sufficient_variability_general} (i), there exists a cover $\cS$ of $[q]$ where $V$ changes modularly while $W$ and $V^{-S}$ remain invariant. 
Since the encoder satisfies the invariance constraint \eqref{eq:dist_inv}, we have 
$h(Z^{(k_1)}) \deq h(Z^{(k_2)})$ for all environments.
Applying Assumption~\ref{ass:sufficient_variability_general} (ii), any smooth function of $(W, V)$ that is invariant in distribution across these modular shifts must have a zero gradient with respect to $V$. 
Thus, $h$ depends only on $W$. We can write $\widehat{W} = \psi(W)$ for some smooth function $\psi: \bbR^p \to \mathcal{C}$.

The autoencoder also maximizes the differential entropy $H(\widehat{W})$ subject to the support constraint $\text{supp}(\widehat{W}) \subseteq (a, b)^{\hat{p}}$. By the result in information theory that the distribution maximizing differential entropy over a bounded region is the Uniform distribution over that region \citep[e.g.,][]{cover1999elements}, the maximum entropy is achieved if and only if:
\begin{equation} \label{eq:uniformity}
\psi(W) \sim \text{Uniform}((a, b)^{\hat{p}}).
\end{equation}
Assume the class $\cA$ is rich enough. By the Darmois-Sklar theorem \citep{rosenblatt1952remarks}, there exists a smooth triangular map transforming any strictly positive density on $\bbR^p$ to the uniform density on $(0,1)^p$, so a solution exists for $\hat{p} \le p$.

Let $p_W(w)$ be the density of $W$ and $p_U(u)$ be the uniform density on $\mathcal{C}$. Since $\widehat{W} = \psi(W)$, by the change of variables formula:
\begin{equation*}p_W(w) = p_U(\psi(w)) \cdot |\det J_{\psi}(w)|,
\end{equation*}
where $J_{\psi} \in \bbR^{\hat{p} \times p}$ is the Jacobian of $\psi$.

\textbf{Case $\hat{p} = p$ (Perfect Identification):}

Since $\psi(W)$ is Uniform, $p_U(\cdot)$ is a non-zero constant. Thus:\begin{equation*}|\det J_{\psi}(w)| \propto p_W(w).\end{equation*}Since $p_W(w) > 0$ everywhere (by assumption), the Jacobian determinant is non-zero everywhere.
By the Inverse Function Theorem, $\psi$ is a local diffeomorphism.
Furthermore, following \citet[Prop. 5]{zimmermann2021contrastive}, a smooth map from a simply connected domain (like $\bbR^p$) to a bounded simply connected domain (like $(a,b)^p$) that pushes a strictly positive density to a strictly positive density (Uniform) must be a global diffeomorphism (bijective). Since $\psi$ is invertible, $\sigma(\widehat{W}) = \sigma(W)$, satisfying Definition~\ref{def:identification} for perfect identification.

\textbf{Case $\hat{p} < p$ (Partial Identification):}

In this case, $\psi$ projects $\bbR^p$ onto a lower-dimensional cube. The entropy maximization ensures that $\widehat{W}$ fills the cube uniformly, implying that the encoder does not collapse information unnecessarily (i.e., it has full row rank $\hat{p}$ almost everywhere). Thus, it captures a subspace of $W$ of dimension $\hat{p}$.

\end{proof}

\begin{remark}
If $\hat{p} > p$, the image of $\psi$ is a $p$-dimensional manifold embedded in $\bbR^{\hat{p}}$. The differential entropy with respect to the $\hat{p}$-dimensional Lebesgue measure is $-\infty$. Thus, the maximization problem is ill-posed or degenerate unless constrained to the manifold, which prevents achieving the Uniform distribution on the full cube. Consequently, we generally assume $\hat{p} \leq p$ for this entropy objective to be meaningful.
\end{remark}

\begin{lemma}[Strict Subadditivity of Concave Functions]\label{lem:logsum_ineq}
Let $f:[0,\infty)\to\bbR$ be a concave function satisfying $f(0) = 0$. Then for any finite set of nonnegative numbers $\{x_i\}_{i=1}^n$, it holds that
\begin{equation*}
\sum_{i=1}^n f(x_i) \geq f\left(\sum_{i=1}^{n} x_i\right).
\end{equation*}
Moreover, if $f$ is strictly concave over the interval $(0, \sum x_i]$ and there exist at least two indices $j,k$ such that $x_j, x_k > 0$, then the inequality is strict.
\end{lemma}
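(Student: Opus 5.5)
The plan is to reduce to the two-term case and then induct on $n$. The key tool is the standard fact that a concave $f$ with $f(0)=0$ is subadditive, i.e.\ $f(x)+f(y)\ge f(x+y)$ for $x,y\ge 0$. To prove this I would use the chord (scaling) inequality: for $t\in[0,1]$ and $s\ge 0$, concavity together with $f(0)=0$ gives
\begin{equation*}
f(ts) = f\bigl(ts + (1-t)\cdot 0\bigr) \ge t f(s) + (1-t) f(0) = t f(s).
\end{equation*}
Now take $s=x+y>0$ (if $s=0$ both sides vanish) and set $t=x/s$ and $1-t=y/s$. Then
\begin{equation*}
f(x)\ge \tfrac{x}{s}f(s), \qquad f(y)\ge \tfrac{y}{s}f(s),
\end{equation*}
and adding the two gives $f(x)+f(y)\ge f(s)$.

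For general $n$ I would induct: $\sum_{i=1}^n f(x_i) \ge f\bigl(\sum_{i<n}x_i\bigr) + f(x_n) \ge f\bigl(\sum_i x_i\bigr)$. The first step is the inductive hypothesis and the second is the two-term case applied to $x=\sum_{i<n}x_i$ and $y=x_n$. Alternatively, one can skip the induction by applying the scaling inequality directly to each $x_i$ with $s=\sum_i x_i$ and summing the weights $x_i/s$, which add to $1$. I would use this direct route, since it also makes the strictness argument cleaner.

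For strictness, assume $x_j,x_k>0$ and that $f$ is strictly concave on $(0,s]$. Then $s>x_j>0$, so $t_j=x_j/s\in(0,1)$. The key step is to upgrade the scaling inequality to a strict one: $f(t s) > t f(s)$ for $t\in(0,1)$. This is the main subtle point, because strict concavity is only assumed on $(0,s]$ and the endpoint $0$ is excluded. I would handle it as follows. Choose a small $\epsilon\in(0, ts)$ and write $ts$ as a proper convex combination of $\epsilon$ and $s$, both of which lie in $(0,s]$. Strict concavity gives
\begin{equation*}
f(ts) > \lambda f(\epsilon)+(1-\lambda)f(s), \qquad \lambda = \frac{s-ts}{s-\epsilon}.
\end{equation*}
Next, apply the non-strict scaling inequality $f(\epsilon)\ge (\epsilon/s) f(s)$, which is valid on $[0,s]$ by plain concavity. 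Then check that $\lambda\epsilon/s + (1-\lambda) = t$, which is a direct computation. Combining these yields $f(ts) > t f(s)$.

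Applying this strict inequality at index $j$, and the weak inequalities at every other index, the sum $\sum_i f(x_i)$ strictly exceeds $\sum_i (x_i/s) f(s) = f(s)$. This proves the strict claim. Only one strictly positive index with $x_j<s$ is really needed here; that situation is guaranteed exactly by having two positive entries.
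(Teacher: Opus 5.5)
Your proposal is correct and follows essentially the same route as the paper, which likewise sets $\lambda_i = x_i/S$, uses $f(\lambda_i S) \ge \lambda_i f(S)$ (from concavity and $f(0)=0$), and sums over $i$. Your $\epsilon$-interpolation in the strict case is a genuine refinement. The paper simply asserts that the chord inequality between $0$ and $S$ becomes strict, even though $0$ lies outside the interval $(0,S]$ on which strict concavity is assumed. Your argument closes exactly that gap: strict concavity on the points $\epsilon, s \in (0,s]$, combined with the weak bound $f(\epsilon) \ge (\epsilon/s) f(s)$, gives $f(ts) > t f(s)$.
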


\begin{proof}
Let $S = \sum_{i=1}^n x_i$. If $S=0$, the inequality holds trivially as $0 \geq 0$. Assume $S > 0$.
Let $\lambda_i = \frac{x_i}{S} \in [0,1]$. Note that $\sum \lambda_i = 1$.
Using the concavity of $f$ and the property $f(0)=0$:
\begin{align*}
f(x_i) &= f(\lambda_i S + (1-\lambda_i)0) \\
&\geq \lambda_i f(S) + (1-\lambda_i)f(0) \quad \text{(Jensen's inequality)} \\
&= \lambda_i f(S).
\end{align*}
Summing this inequality over $i \in [n]$ yields:
\begin{equation*}
\sum_{i=1}^n f(x_i) \geq \sum_{i=1}^n \lambda_i f(S) = \left(\sum_{i=1}^n \lambda_i\right) f(S) = f(S).
\end{equation*}
If $x_j, x_k > 0$, then $\lambda_j, \lambda_k \in (0,1)$. If $f$ is strictly concave, then Jensen's inequality $f(\lambda S + (1-\lambda)0) \geq \lambda f(S)$ becomes strict. Consequently, the final sum satisfies the strict inequality.
\end{proof}

\begin{lemma}[Injectivity and Differential Entropy]\label{lem:entropy_injectivity}
Let $W$ be a random variable in $\bbR^p$ with an absolutely continuous distribution and finite differential entropy $H(W)$.
Let $\cU$ be a class of smooth functions $u: \bbR^p \to \bbR^p$ with bounded derivatives. 
Assume $\cU$ contains at least one diffeomorphism $u^*$ satisfying a \emph{Jacobian dominance} condition:
\begin{equation} \label{eq:jacobian_dominance}
    \bbE_W\left[\log|\det \nabla u^*(W)|\right] \geq \sup_{u \in \cU_{\text{ni}}} \bbE_W\left[\log|\det \nabla u(W)|\right],
\end{equation}
where $\cU_{\text{ni}}$ is the subset of functions in $\cU$ that are not injective on the support of $W$.
Then, for any $u \in \cU_{\text{ni}}$, it holds that:
\begin{equation*}
    H(u^*(W)) > H(u(W)).
\end{equation*}
\end{lemma}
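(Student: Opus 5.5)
The natural route is to compare the two entropies through the change-of-variables formula and then use the subadditivity of $\log$ from Lemma~\ref{lem:logsum_ineq} to penalize non-injectivity.

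\textbf{Step 1: the diffeomorphism.} Since $u^*$ is a diffeomorphism, the change of variables formula gives the exact identity
\begin{equation*}
H(u^*(W)) = H(W) + \bbE_W\left[\log|\det \nabla u^*(W)|\right].
\end{equation*}

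\textbf{Step 2: a non-injective $u \in \cU_{\text{ni}}$.} If $u(W)$ has no density, its differential entropy is $-\infty$ and the claim is immediate. Otherwise, the set where $\det\nabla u = 0$ maps to a null set by Sard's theorem, so we may restrict to regular points. On these points $u$ is a local diffeomorphism. The density of $Y=u(W)$ is then
\begin{equation*}
p_Y(y)=\sum_{w\in u^{-1}(y)} \frac{p_W(w)}{|\det\nabla u(w)|}.
\end{equation*}
We write
\begin{equation*}
H(Y) = -\bbE_W[\log p_Y(u(W))].
\end{equation*}
For each preimage term, $p_Y(u(w)) \ge p_W(w)/|\det \nabla u(w)|$. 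The cleaner way to make this quantitative is through Lemma~\ref{lem:logsum_ineq} applied to $f=\log(1+\cdot)$, or directly through the log-sum inequality. Both yield
\begin{equation*}
-\log p_Y(u(w)) \le -\log p_W(w) + \log|\det\nabla u(w)|,
\end{equation*}
with strict inequality whenever $u(w)$ has at least two preimages with positive density. Taking expectations gives
\begin{equation*}
H(u(W)) \le H(W) + \bbE_W\left[\log|\det\nabla u(W)|\right].
\end{equation*}
The inequality is strict provided the set of $w$ whose image has at least two preimages carries positive $P_W$-mass.

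\textbf{Step 3: conclusion.} Combining Steps 1 and 2 with the Jacobian dominance condition \eqref{eq:jacobian_dominance} gives
\begin{equation*}
H(u(W)) < H(W)+\bbE_W\left[\log|\det\nabla u(W)|\right] \le H(W)+\bbE_W\left[\log|\det\nabla u^*(W)|\right]=H(u^*(W)).
\end{equation*}
Finiteness of $H(W)$ and the bounded derivatives in $\cU$ ensure that all these expectations are well defined and bounded above, so no $\infty-\infty$ issues arise.

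\textbf{Main obstacle.} The crux is the strictness in Step 2. Non-injectivity on the support of $W$ must be upgraded to a positive-probability set of points having multiple preimages with positive density. Since $u$ is smooth, I would take two distinct points $w_1\ne w_2$ in the support with $u(w_1)=u(w_2)$. If both are regular points, the inverse function theorem gives neighborhoods of each mapping diffeomorphically onto a common open set around $u(w_1)$. Their preimages under the local inverses then have positive $P_W$-mass, since $w_1,w_2$ lie in the support.

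The degenerate case is when every collision occurs only at critical points. I would handle this by arguing that such collisions contribute a null set, which would make $u$ essentially injective. That case can either be excluded by interpreting ``not injective on the support'' modulo null sets, or treated by noting that $\log|\det\nabla u|=-\infty$ there, which again forces strict inequality.
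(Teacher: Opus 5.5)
Your route is the paper's route with cleaner bookkeeping. The paper integrates over $y$ and applies strict subadditivity of $-x\log x$ (Lemma~\ref{lem:logsum_ineq}) to $\sum_{w\in u^{-1}(y)}\alpha_w(y)$. You integrate over $w$ and need only $p_Y(u(w))\ge p_W(w)/|\det\nabla u(w)|$ together with monotonicity of $\log$, so the detour through $\log(1+\cdot)$ is unnecessary. After the area formula the two computations coincide. Yours exhibits the deficit as $\bbE_W\bigl[\log\bigl(p_Y(u(W))\,|\det\nabla u(W)|/p_W(W)\bigr)\bigr]\ge 0$, which is zero exactly when $W$ is a.s.\ a measurable function of $u(W)$.

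The genuine gap is the strictness step you single out as the crux. Your resolution does not close it, and with the usual (closed) support the statement is actually false. Two colliding regular support points $w_1\neq w_2$ give neighborhoods $N_1,N_2$ of positive $P_W$-mass mapped onto a common open set. But you need a positive-measure set of $y$ at which \emph{both} local preimages carry positive density, and support membership does not provide this.

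Here is a counterexample. Take $p=2$, $W$ uniform on $(0,1)\times(0,2\pi)$, and $u(x_1,x_2)=\bigl((2+\tanh x_1)\cos x_2,\,(2+\tanh x_1)\sin x_2\bigr)$. This map has bounded derivatives and Jacobian determinant $(2+\tanh x_1)(1-\tanh^2 x_1)>0$. It sends the regular support points $(x_1,0)$ and $(x_1,2\pi)$ to the same point, yet the mass near them lands on opposite sides of a slit. Since $u$ is injective on the open rectangle, $H(u(W))=H(W)+\bbE_W[\log|\det\nabla u(W)|]$ exactly. Now let $\cU=\{u,u^*\}$ with $u^*=c\,\mathrm{id}$ and $2\log c=\bbE_W[\log|\det\nabla u(W)|]$. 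Every hypothesis holds, with equality in \eqref{eq:jacobian_dominance}, while $H(u^*(W))=H(u(W))$.

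Your fallback for collisions at critical points fails in a similar way. If those points are $P_W$-null, then $\log|\det\nabla u|=-\infty$ there leaves the expectation unchanged.

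The workable repair is your first option, applied to the whole statement. Assume $u$ is not injective on any Borel set of full $P_W$-measure, equivalently that $W$ is not a.s.\ a function of $u(W)$. Then your pointwise inequality is strict on a set of positive probability and the proof goes through. Your neighborhood argument also works if $p_W>0$ a.e.\ near both colliding regular points, because the local inverses map null sets to null sets. That positivity is exactly what the paper's proof claims ``by absolute continuity'', so the paper has the same hole.

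Finally, your ``no density'' case, and Step~3 whenever $H(u(W))=-\infty$, need $\bbE_W[\log|\det\nabla u^*(W)|]>-\infty$. Bounded derivatives do not supply this, since they only bound such expectations from above.
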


\begin{proof}
We analyze the differential entropy of the pushforward distribution under both the injective and non-injective maps.

Since $u^* \in \cU$ is a diffeomorphism, the change of variables formula gives the density of $\widetilde{W}^* \coloneqq u^*(W)$ as:
\begin{equation*}
    p_{\widetilde{W}^*}(y) = p_W(w) |\det \nabla u^*(w)|^{-1}, \quad \text{where } w = (u^*)^{-1}(y).
\end{equation*}
The entropy is computed directly:
\begin{align*}
    H(u^*(W)) &= -\int p_{\widetilde{W}^*}(y) \log p_{\widetilde{W}^*}(y) dy \\
    &= -\int p_W(w) \log \left( \frac{p_W(w)}{|\det \nabla u^*(w)|} \right) dw \quad \text{(by change of variables } y=u^*(w)) \\
    &= H(W) + \bbE_W\left[\log |\det \nabla u^*(W)| \right].
\end{align*}
This value is finite by assumption.

Let $u \in \cU_{\text{ni}}$ be a non-injective map. Let $\widetilde{W} \coloneqq u(W)$. Since $u$ is smooth, by Sard's theorem, the set of critical values (images of points where $\det \nabla u(w) = 0$) has Lebesgue measure zero. For regular values $y$, the Area Formula gives the density:
\begin{equation*}
    p_{\widetilde{W}}(y) = \sum_{w \in u^{-1}(y)} \frac{p_W(w)}{|\det \nabla u(w)|}.
\end{equation*}
Define the term for a specific pre-image $w$ as $\alpha_w(y) \coloneqq \frac{p_W(w)}{|\det \nabla u(w)|}$. Then $p_{\widetilde{W}}(y) = \sum_{w \in u^{-1}(y)} \alpha_w(y)$.
Consider the function $g(x) = -x \log x$. We evaluate the entropy:
\begin{equation*}
    H(\widetilde{W}) = \int_{\text{Im}(u)} g\left( \sum_{w \in u^{-1}(y)} \alpha_w(y) \right) dy.
\end{equation*}
Since $u$ is not injective on the support of $W$, there exists a set of $y$ with positive measure where the cardinality $|u^{-1}(y)| \geq 2$. On this set, since $p_W$ is strictly positive (by the absolute continuity assumption), the sum contains at least two positive terms. Because $g(x)$ is strictly subadditive for positive arguments (Lemma~\ref{lem:logsum_ineq}), we have the strict inequality:
\begin{equation*}
    g\left( \sum_{w \in u^{-1}(y)} \alpha_w(y) \right) < \sum_{w \in u^{-1}(y)} g(\alpha_w(y)).
\end{equation*}
Integrating both sides over the codomain:
\begin{align*}
    H(\widetilde{W}) &< \int_{\text{Im}(u)} \sum_{w \in u^{-1}(y)} g(\alpha_w(y)) dy \\
    &= \int_{\text{Im}(u)} \sum_{w \in u^{-1}(y)} -\frac{p_W(w)}{|\det \nabla u(w)|} \log \left( \frac{p_W(w)}{|\det \nabla u(w)|} \right) dy \\
    &= \int_{\text{supp}(W)} -p_W(w) \log \left( \frac{p_W(w)}{|\det \nabla u(w)|} \right) dw \quad \text{(by the Area Formula)} \\
    &= H(W) + \bbE_W\left[\log |\det \nabla u(W)| \right].
\end{align*}

Combining the above with the Jacobian dominance assumption \eqref{eq:jacobian_dominance}:
\begin{align*}
    H(u(W)) &< H(W) + \bbE_W\left[\log |\det \nabla u(W)| \right] \\
    &\leq H(W) + \bbE_W\left[\log |\det \nabla u^*(W)| \right] \\
    &= H(u^*(W)).
\end{align*}
Thus, $H(u^*(W)) > H(u(W))$.
\end{proof}

\begin{remark}
The Jacobian dominance condition \eqref{eq:jacobian_dominance} is an assumption on the neural network that one uses, it essentially requires that the encoder class $\cA$ cannot arbitrarily ``inflate" the volume of the latent space to artificially boost entropy. For example, architectures with bounded Lipschitz constants, such as networks trained with Spectral Normalization \citep{miyato2018spectral}. Since the determinant is the product of singular values, bounding the spectral norm of the weight matrices implies a strict upper bound on the expansion of the volume element \citep{behrmann2019invertible}, thereby preventing non-injective maps from arbitrarily inflating the entropy.
\end{remark}

\begin{theorem}[Identification of $W$ via Entropy Maximization]\label{thm:identfy_w_nonpoly}
Consider Setting~\ref{set:multi_env} and assume Assumption~\ref{ass:sufficient_variability_general} holds.
Let $\cA$ be a class of autoencoders satisfying the reconstruction and invariance constraints with latent dimension $\hat{p}=p$. 
Assume $\cA$ contains at least one encoder $m_{\text{en}}^*$ such that $h^*(w) \coloneqq m_{\text{en}}^*(f(w, v))^{:p}$ is a diffeomorphism on the support of $W$, and that this encoder satisfies the Jacobian dominance condition \eqref{eq:jacobian_dominance} relative to any non-injective candidate in $\cA$. Then, any autoencoder in $\cA$ that maximizes the differential entropy of the valid latent component, $H(m_{\text{en}}(Z)^{:p})$, perfectly identifies $W$. That is, the learned representation is a diffeomorphism of the true $W$.
\end{theorem}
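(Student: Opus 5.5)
The argument has three stages. First, the invariance constraint together with Assumption~\ref{ass:sufficient_variability_general} shows that the encoder output depends only on $W$. Second, Lemma~\ref{lem:entropy_injectivity} shows that entropy maximization rules out non-injective maps of $W$. Third, an injective smooth map with nonvanishing Jacobian is a diffeomorphism onto its image, which gives perfect identification.

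\textbf{Step 1 (dependence only on $W$).} Take any $(m_{\text{en}}, m_{\text{de}})\in\cA$ and set $h(w,v) \coloneqq m_{\text{en}}(f(w,v))^{:p}$. This map is smooth because $f$ and $m_{\text{en}}$ are. The invariance constraint \eqref{eq:dist_inv} gives $h(W^{(k_1)},V^{(k_1)}) \deq h(W^{(k_2)},V^{(k_2)})$ for every pair of environments. In particular this holds for each pair $(k_1,k_2)$ attached to some $S\in\cS$ by Assumption~\ref{ass:sufficient_variability} (i). Assumption~\ref{ass:sufficient_variability_general} (ii) then yields $\partial h/\partial v_j \equiv 0$ for all $j\in S$. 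Since $\bigcup_{S\in\cS} S = [q]$, this holds for every $j \in [q]$. Because $\bbR^q$ is connected, $h(w,v)=u(w)$ for a smooth $u:\bbR^p\to\bbR^p$. Hence $m_{\text{en}}(Z)^{:p} = u(W)$ almost surely in every environment. This reduces the problem to studying the pushforward of the single invariant law of $W$.

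\textbf{Step 2 (entropy comparison).} Each candidate in $\cA$ now corresponds to a map $u$ in the class $\cU \coloneqq \{u : m_{\text{en}}(f(\cdot,v))^{:p} = u\}$. The assumed encoder $m^*_{\text{en}}$ gives $u^* = h^*$, which is a diffeomorphism on $\text{supp}(W)$. The Jacobian dominance hypothesis is exactly \eqref{eq:jacobian_dominance} for this $\cU$. Lemma~\ref{lem:entropy_injectivity} therefore gives $H(u^*(W)) > H(u(W))$ for every $u\in\cU_{\text{ni}}$. So no maximizer of $H(m_{\text{en}}(Z)^{:p})$ can be non-injective on $\text{supp}(W)$: it is beaten strictly by $m^*_{\text{en}}$, and a maximizer exists or is attained at least at the level of $u^*$. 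Thus any maximizer $\hat u$ is injective on $\text{supp}(W)$.

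\textbf{Step 3 (from injective to diffeomorphism).} I expect this to be the main obstacle. An injective smooth map may still have a vanishing Jacobian determinant on a null set (for example $x\mapsto x^3$), and then $\hat u^{-1}$ is not smooth. I would argue in two ways.

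\emph{Entropy argument.} Since $H(\hat u(W))\ge H(u^*(W))>-\infty$, and for injective $\hat u$ the change of variables gives $H(\hat u(W)) = H(W)+\bbE[\log|\det\nabla\hat u(W)|]$, the Jacobian determinant is nonzero almost everywhere on $\text{supp}(W)$.

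\emph{Inverse function theorem.} On the open set where $\det\nabla\hat u\neq 0$, the inverse function theorem makes $\hat u$ a local diffeomorphism. Combined with injectivity, $\hat u$ is a diffeomorphism onto its image there.

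For identification in the sense of Definition~\ref{def:identification}, almost-sure statements are enough. $\hat u$ is a measurable injection, so by Lusin--Souslin its inverse on the image is measurable. This gives $\sigma(\hat u(W)) = \sigma(W)$ up to null sets, and Lemma~\ref{lem:func_identif} then gives perfect identification. If a genuine diffeomorphism is required, I would add the (implicit) assumption that the encoder class has nonvanishing Jacobians, as in Assumption~\ref{ass:diffeo_mix}-type architectures, and then conclude directly from the inverse function theorem.
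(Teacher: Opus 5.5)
Your proposal is correct and follows essentially the same route as the paper: the invariance constraint together with Assumption~\ref{ass:sufficient_variability_general}(ii) reduces the first $p$ encoder outputs to a smooth function of $W$, then Lemma~\ref{lem:entropy_injectivity} with Jacobian dominance rules out a non-injective entropy maximizer by contradiction, and injectivity yields $\sigma(\widehat{W})=\sigma(W)$. Your Step~3 (Lusin--Souslin for a measurable inverse, and finiteness of the maximal entropy forcing $\det\nabla\hat u\neq 0$ almost surely) makes rigorous what the paper covers only by ``standard regularity conditions,'' and you are right that an honest global diffeomorphism claim needs an extra nondegeneracy assumption on the encoder class.
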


\begin{proof}
Let $(m_{\text{en}}, m_{\text{de}}) \in \cA$ be an autoencoder maximizing the differential entropy $H(m_{\text{en}}(Z)^{:\hat{p}})$. Let $h(z) \coloneqq m_{\text{en}}(z)^{:\hat{p}}$ denote the valid latent representation.
Since $Z = f(W, V)$, we can define the composite function $\psi(w, v) \coloneqq h(f(w, v))$.

First, we establish that the learned representation depends only on $W$.
By the invariance constraint \eqref{eq:dist_inv}, the distribution of $\psi(W, V)$ is invariant across all environments.
Assumption~\ref{ass:sufficient_variability_general} guarantees the existence of a collection of subsets covering $[q]$ where $V$ changes in a modular fashion (while $W$ remains invariant).
Applying the Completeness condition of Assumption~\ref{ass:sufficient_variability_general} (ii) (as established in the proof of Theorem~\ref{thm:identfy_w_diffeo}), any smooth function of $(W, V)$ that remains distributionally invariant under these modular shifts must have a zero gradient with respect to $V$.
Consequently, $\psi(w, v)$ is constant with respect to $v$, and we can write $\psi(w, v) = \phi(w)$ for some smooth function $\phi: \bbR^p \to \bbR^{\hat{p}}$.
Thus, the encoder output is a deterministic function of $W$ alone: $\widehat{W} = \phi(W)$.

The optimization problem reduces to finding a function $\phi$ in the induced class of functions $\cU_\phi = \{ m_{\text{en}}(f(\cdot, v))^{:\hat{p}} \mid m_{\text{en}} \in \cA \}$ that maximizes $H(\phi(W))$.
By assumption, the class $\cA$ contains at least one encoder such that the corresponding $\phi^*$ is a diffeomorphism, and this $\phi^*$ satisfies the Jacobian dominance condition \eqref{eq:jacobian_dominance}.
We proceed by contradiction. Assume the entropy-maximizing function $\widehat{\phi}$ is not injective almost everywhere on the support of $W$.
By Lemma~\ref{lem:entropy_injectivity}, there exists an injective function $\phi^* \in \cU_\phi$ (the diffeomorphism guaranteed by assumption) such that:
\begin{equation*}
H(\phi^*(W)) > H(\widehat{\phi}(W)).
\end{equation*}
This contradicts that $\widehat{\phi}$ maximizes the entropy. Therefore, the optimal function $\widehat{\phi}$ must be injective almost everywhere.

Since $\widehat{\phi}: \bbR^p \to \bbR^p$ is a smooth injective map (and under standard regularity conditions for diffeomorphisms), it is a bi-measurable bijection onto its image. Thus, $\sigma(\widehat{W}) = \sigma(\phi(W)) = \sigma(W)$.
This satisfies the definition of perfect identification: $W$ is identified up to an invertible transformation (a diffeomorphism).
\end{proof}

\begin{remark}
The Jacobian dominance condition is necessary because differential entropy can be increased in two ways: by expanding the volume of the support (increasing $\bbE[\log \det J]$) or by preventing the probability mass from overlapping (injectivity). Without a constraint on the volume expansion (e.g., a bounded domain, volume-preserving flows, or a regularizer), a non-injective map could arbitrarily increase entropy by simply scaling up the space. The condition ensures that the maximization objective favors the topological property of injectivity rather than mere geometric expansion.
\end{remark}

\begin{remark}[(Relationship between Theorems \ref{thm:identfy_w_bounded} and \ref{thm:identfy_w_nonpoly})]
Theorem \ref{thm:identfy_w_nonpoly} describes the fundamental geometric principle required for identification via entropy maximization: the encoder class must not allow non-injective maps to artificially inflate the latent volume (and thus entropy) more than an injective diffeomorphism could (the Jacobian Dominance condition).
Theorem \ref{thm:identfy_w_bounded} presents a practical way to enforce this condition by using a specific architecture: a bounded encoder. When the latent space is bounded to a hypercube, the ``volume" is fixed. Consequently, the only way to maximize differential entropy is to maximize uniformity (filling the space). Since any non-injective map creates regions of higher density and thus lower entropy compared to an injective map, the bounded constraint ensures that the global maximum of the entropy objective corresponds to an injective map, thereby identifying $W$.
\end{remark}

\begin{restatable}[Efficiency of 2SLS using oracle $V$ and $W$]{proposition}{propTSLSEffOracle} \label{prop:2sls_efficiency_vw}
Under the SCM in \eqref{eq:general_scm}, the estimator $\hat\theta^{\text{PO-2SLS}}$ which partials out 
$V$ has a lower asymptotic variance than $\hat\theta^{\text{2SLS}}$ which ignores $V$, 
as long as $V$ and $Y$ are marginally dependent.
\end{restatable}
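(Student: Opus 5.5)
We will compare the two asymptotic variances through the standard sandwich form of linear IV estimators, working within a single environment and dropping the superscript $(k)$. Write $Y = D^\top\theta_0 + \varepsilon$ with $\varepsilon \coloneqq g(V,H,\epsilon)$, which has mean zero. Under \eqref{eq:general_scm}, $W \indep (V,H,\epsilon)$, so $\bbE[W\varepsilon]=0$ and the rank condition \eqref{eq:rank_condition} holds for $W$. Standard 2SLS asymptotics then give
\begin{equation*}
\sqrt{n}(\hat\theta^{\text{2SLS}}-\theta_0) \dto \cN\bigl(0, \Pi^{-1}\, \bbE[\varepsilon^2 W W^\top]\, \Pi^{-\top}\bigr),
\end{equation*}
where $\Pi \coloneqq \bbE[DW^\top]\bbE[WW^\top]^{-1}\bbE[WD^\top]$.

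Because $W$ is independent of $\varepsilon$, we have $\bbE[\varepsilon^2 WW^\top] = \bbE[\varepsilon^2]\,\Sigma_W$, and the variance reduces to the homoskedastic form $\sigma_\varepsilon^2\,\Pi^{-1}$.

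For the PO-2SLS estimator, we first use $W\indep V$ and $\bbE[W]=0$ to get that the population linear projection of $W$ on $V$ is zero. Hence $\widetilde W = W$ asymptotically, and the estimation error from partialling out is $o_p(1)$ and does not enter the first-order variance. The residualized outcome is $\widetilde Y = \widetilde D^\top\theta_0 + \tilde\varepsilon$, where $\tilde\varepsilon \coloneqq \varepsilon - \mathrm{L}(\varepsilon\mid V)$ and $\mathrm{L}(\cdot\mid V)$ denotes the linear projection on $(1,V)$. We then need two facts.

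\begin{enumerate}[label=(\roman*)]
\item The first-stage matrix is unchanged: $\bbE[\widetilde D W^\top] = \bbE[DW^\top] - \bbE[\mathrm{L}(D\mid V)W^\top] = \bbE[DW^\top]$, since $\mathrm{L}(D\mid V)$ is a function of $V$ and $W\indep V$ with $\bbE[W]=0$. Thus $\Pi$ is the same for both estimators.
\item The residual $\tilde\varepsilon$ is a function of $(V,H,\epsilon)$, so it is still independent of $W$. The PO-2SLS variance is therefore $\bbE[\tilde\varepsilon^2]\,\Pi^{-1}$.
\end{enumerate}

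The comparison now reduces to the scalar inequality $\bbE[\tilde\varepsilon^2] < \bbE[\varepsilon^2]$. By the Pythagorean identity for projections,
\begin{equation*}
\bbE[\varepsilon^2] = \bbE[\tilde\varepsilon^2] + \bbE[\mathrm{L}(\varepsilon\mid V)^2],
\end{equation*}
so strictness holds exactly when $\cov(V,\varepsilon)\neq 0$. The main obstacle is translating the hypothesis "$V$ and $Y$ marginally dependent" into this condition.

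First, note $\cov(V,Y) = \cov(V,D)^\top\theta_0 + \cov(V,\varepsilon)$, so dependence of $Y$ on $V$ could in principle run through $D$ alone. I would handle this by reading the hypothesis, as intended in the linear setting of Proposition~\ref{prop:2sls_efficiency_vw}, as linear dependence of the structural error on $V$, i.e.\ $\cov(V,\varepsilon)\neq0$; this is the dependence transmitted through $g(V,H,\cdot)$. Second, independence generally does not follow from zero covariance, so mere nonlinear dependence gives only a weak inequality under linear partialling. If the statement is to hold under general marginal dependence, I would switch to nonlinear partialling out using $\bbE[\cdot\mid V]$, as in the double machine learning footnote. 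Then $\bbE[\varepsilon^2]-\bbE[\tilde\varepsilon^2] = \mathrm{Var}(\bbE[\varepsilon\mid V])$, which is positive whenever $\bbE[\varepsilon\mid V]$ is nonconstant.

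Finally, I would conclude by stating the variance ordering in the Loewner order, $\bbE[\tilde\varepsilon^2]\Pi^{-1} \prec \bbE[\varepsilon^2]\Pi^{-1}$, which holds because $\Pi$ is positive definite under the rank condition.
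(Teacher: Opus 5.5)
Your proposal is correct and follows essentially the same route as the paper's proof. Independence of $W$ from $g(V,H,\epsilon)$ gives the homoskedastic limit $\sigma_\varepsilon^2\Pi^{-1}$, and $V\indep W$ leaves the first-stage matrix $\Pi$ unchanged after partialling. Projecting $g$ on $V$ then reduces the error variance, strictly exactly when $\bbE[Vg(V,H,\epsilon)]\neq \bm{0}$. That is precisely the condition the paper's proof ends with, so your reading of the ``marginal dependence'' hypothesis matches what is actually proved there.

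Two small fixes. First, the displayed sandwich is dimensionally inconsistent; it should read $\Pi^{-1}\bbE[DW^\top]\bbE[WW^\top]^{-1}\bbE[\varepsilon^2WW^\top]\bbE[WW^\top]^{-1}\bbE[WD^\top]\Pi^{-1}$, which still collapses to $\sigma_\varepsilon^2\Pi^{-1}$. Second, nonlinear partialling does not rescue general marginal dependence either: if $\varepsilon$ depends on $V$ only through higher moments, $\bbE[\varepsilon\mid V]$ is constant and there is no variance reduction.
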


\begin{proof}

We first show the asymptotic variance of $\hat\theta^{\text{2SLS}} = (\bD^\top P_W\bD)^{-1}(\bD^\top P_W\bY)$, where $P_W = \bW(\bW^\top\bW)^{-1}\bW^\top$.  
From \eqref{eq:general_scm}, we have that $\bY = \bD^\top\theta_0 + g(\bV, \bH, \bm{\epsilon})$, which gives 
\begin{equation} \label{eq:2slsbias}
    \hat\theta^{\text{2SLS}} - \theta_0 = (\bD^\top P_W\bD)^{-1}\bD^\top P_W g(\bV, \bH, \bm{\epsilon}). 
\end{equation}
Assume the regularity conditions 1), 2), and 4) in Assumption~\ref{ass:regularity} hold, we have 
\begin{align*}
    \lim_{n\to\infty} (\frac{1}{n}\bD^\top P_W\bD) &= \lim_{n\to\infty} \left(\frac{1}{n} \bD^\top \bW\right)
    \cdot \lim_{n\to\infty} \left(\frac{1}{n} \bW^\top \bW\right)^{-1} 
    \cdot \lim_{n\to\infty} \left(\frac{1}{n} \bW^\top \bD\right) \\
    &\pto \bbE[D W^\top] \bbE[WW^\top]^{-1} \bbE[WD^\top] 
\end{align*}
which exists, is finite, and is positive definite. Then by the continuous mapping theorem, 
$\lim_{n\to\infty} (\bD^\top P_W\bD)^{-1} \pto \left(\bbE[D^\top W]  \bbE[WW^\top]^{-1} \bbE[WD^\top] \right)^{-1}$. 
Therefore, other than $\bW^\top g(\bV, \bH, \bm{\epsilon})$, all terms in \eqref{eq:2slsbias} 
have a proper probability limit. 

Under the assumption in Setting~\ref{set:multi_env} that $\bbE[g(V, H, \epsilon)] = 0$, we have that 
\begin{align*}
    \bbE[Wg(V, H, \epsilon)] = \bbE[W\given g(V, H, \epsilon)] \bbE[g(V, H, \epsilon)] = 0.
\end{align*}
Given conditions 6)-7) in Assumption~\ref{ass:regularity}, 
we get by the central limit theorem that 
\begin{align*}
    \frac{1}{\sqrt{n}} \bW^\top g(\bV, \bH, \bm{\epsilon}) \dto \cN(\bm{0}, \sigma^2 \bbE[WW^\top]). 
\end{align*}
Therefore, by Slutsky's theorem, 
\begin{align*}
   \sqrt{n}\left(\hat\theta^{\text{2SLS}} - \theta_0\right) &\dto \cN(\bm{0}, \Sigma^{\text{2SLS}}), 
\end{align*}
where 
\begin{align*}
\Sigma^{\text{2SLS}} &= \sigma^2\left(\bbE[D^\top W] \bbE[WW^\top]^{-1} \bbE[WD^\top] \right)^{-1}. 
\end{align*}

With $\hat\theta^{\text{PO-2SLS}} = (\widetilde\bD^\top P_{\widetilde W}\widetilde\bD)^{-1}(\widetilde\bD^\top P_{\widetilde W}\widetilde\bY)$, where $\widetilde{\bD} = \bD - P_V\bD$, $\widetilde{\bW} = \bW - P_V\bW$, and $\widetilde{\bY} = \bY - P_V\bY$, we have 
\begin{align*}
    \hat\theta^{\text{PO-2SLS}} - \theta_0 &= (\widetilde\bD^\top P_{\widetilde W}\widetilde\bD)^{-1}\widetilde\bD^\top P_{\widetilde W} \tilde{g}(\bV, \bH, \bm{\epsilon}) 
\end{align*}
where $\tilde{g}(\bV, \bH, \bm{\epsilon}) = (I_n - P_V)g(\bV, \bH, \bm{\epsilon})$ whose limiting variance satisfies 
\begin{align*}
    \tilde\sigma^2 \coloneqq &\lim_{n\to\infty} \frac{1}{n} \tilde{g}(\bV, \bH, \bm{\epsilon})^\top \tilde{g}(\bV, \bH, \bm{\epsilon}) 
    = \lim_{n\to\infty} \frac{1}{n}g(\bV, \bH, \bm{\epsilon})^\top (I_n - P_V) g(\bV, \bH, \bm{\epsilon}) \\
    &= \bbV(g(V, H, \epsilon)) - \bbE[g(V, H, \epsilon)V^\top] \bbE[VV^\top]^{-1} \bbE[Vg(V, H, \epsilon)] \\
    &\leq \bbV(g(V, H, \epsilon)).
\end{align*}
The last inequality follows from that $\bbE[VV^\top]$ is positive definite, and strict inequality holds if 
$\bbE[Vg(V, H, \epsilon)]\neq \bm{0}$.

Under the regularity conditions 3) and 5) in Assumption~\ref{ass:regularity}, we have 
\begin{align*}
    \lim_{n\to\infty} \left(\frac{1}{n} \widetilde{\bD}^\top \widetilde{\bW}\right) 
    &= \lim_{n\to\infty} \left(\frac{1}{n} \bD^\top\bW\right) - \lim_{n\to\infty} \left(\frac{1}{n} {\bD}^\top P_V {\bW}\right) \\ 
    &\pto \bbE[DW^\top] - \bbE[DV^\top]\bbE[VV^\top]^{-1}\bbE[VW^\top] \\
    &= \bbE[DW^\top]
\end{align*}
The last equality follows from $V\indep W$ by assumption in Setting~\ref{set:multi_env}. 
Similarly, we get 
\begin{align*}
    \lim_{n\to\infty} \left(\frac{1}{n}\widetilde{\bW}^\top\widetilde{\bW}\right)^{-1}
    \pto \bbE[WW^\top]^{-1}
\end{align*}
Thus, $\lim_{n\to\infty}\left(\frac{1}{n}\widetilde{\bD}P_{\widetilde{W}}\widetilde{\bD}\right) \pto \bbE[D W^\top] \bbE[WW^\top]^{-1} \bbE[WD^\top]$ which also exists, is finite, and is positive definite. 
\begin{align*}
   \sqrt{n}\left(\hat\theta^{\text{PO-2SLS}} - \theta_0\right) &\dto \cN(\bm{0}, \Sigma^{\text{PO-2SLS}}), 
\end{align*}
where 
\begin{align*}
\Sigma^{\text{PO-2SLS}} &= \tilde\sigma^2\left(\bbE[D^\top W] \bbE[WW^\top]^{-1} \bbE[WD^\top] \right)^{-1}. 
\end{align*}
Therefore, we have 
$\Sigma^{\text{PO-2SLS}} \preccurlyeq \Sigma^{\text{2SLS}}$ and 
$\Sigma^{\text{PO-2SLS}} \prec \Sigma^{\text{2SLS}}$ if $\bbE[Vg(V, H, \epsilon)]\neq \bm{0}$.

\end{proof}

\section{Additional Examples} \label{app:add_exmaples}

\begin{example} \label{ex:nondeg_shift}
Let $K = 2$. $V^{(1)} \sim \cN(\mu^{(1)}, \Sigma^{(1)})$ and $V^{(2)} \sim \cN(\mu^{(2)}, \Sigma^{(2)})$ 
are both $2$-dimensional Gaussian random vectors, where $\Sigma^{(1)}$ and $\Sigma^{(2)}$ are 
positive definite, and $\Sigma^{(2)} = \Sigma^{(1)} + D$ where $D$ is symmetric and positive definite. 
Then let $u\in\bbR^2$ be an arbitrary non-zero constant vector, we have 
$u^\top V^{(1)} \sim \cN(u^\top\mu^{(1)}, u^\top\Sigma^{(1)} u)$ and 
$u^\top V^{(2)} \sim \cN(u^\top\mu^{(2)}, u^\top\Sigma^{(2)} u)$. 
Then, it holds that 
$u^\top\Sigma^{(1)} u - u^\top\Sigma^{(2)} u = u^\top (\Sigma^{(1)} - \Sigma^{(2)}) u = u^\top D u > 0$. 
Thus, for all $u\in\bbR^2$ such that $u\neq \bm{0}$, $u^\top\Sigma^{(1)} u\neq u^\top\Sigma^{(2)} u$ 
and $u^\top V^{(1)}\dneq u^\top V^{(2)}$, while if $u = \bm{0}$, $u^\top V^{(1)} = u^\top V^{(2)} = 0$.
\end{example}

\begin{example}\label{ex:insufficient}
Suppose we have data from $K$ environments. 
Let $V^{k} \sim Q_{V^k}$ be a random variable taking values in $\bbR$, and let $W \sim Q_W$ be a random variable taking values in $\bbR^2$ where the two coordinates satisfy $W^1 \indep W^2$. The observable is generated as $Z \coloneqq A \left(V, W^1, W^2\right)^\top$ where $A \in \bbR^{3\times 3}$ is full rank. 
Let $\widehat{W} \coloneqq m_{\text{en}}(Z)^{1} = \begin{pmatrix} W^1
\end{pmatrix}$
and $\widehat{V} \coloneqq m_{\text{en}}(Z)^{2:3} = \begin{pmatrix} W^2 \\ V
\end{pmatrix}$. 
Then, let $m_{\text{de}}$ be the identity map, we have $m_{\text{de}} \circ m_{\text{en}}(Z) = Z$, and $\widehat{W}$ satisfies distributional invariance. However, $\widehat{W}$ does not perfectly identify $W$ according to Definition~\ref{def:identification}, although it still partially identifies $W$. 
\end{example}

\begin{example}\label{ex:noniden_v_linear}
Consider the setting in Example~\ref{ex:insufficient}. Let 
$\widehat{V} \coloneqq m_{\text{en}}(Z)^1 =  V + W^1$ and let 
$\widehat{W} \coloneqq m_{\text{en}}(Z)^{2:3} = \left(W^1, W^2\right)^\top$. 
Let $m_{\text{de}}(\widehat{V},\widehat{W}) \coloneqq 
\begin{pmatrix}
    \widehat{V} - \widehat{W}^1 \\ \widehat{W}
\end{pmatrix}$,
then $\widehat{W}$ satisfies distributional invariance, $m_{\text{de}} \circ m_{\text{en}}(Z) = Z$ 
meets the reconstruction identity, and while $\widehat{W}$ perfectly identifes $W$, 
$\widehat{V}$ does not identify $V$. 
\end{example}

\begin{example}[Non-identification of $V$ under general diffeomorphic mixing]\label{ex:noniden_v_general}
Suppose $W, V \sim \mathcal{N}(0, I)$ are standard normal. Let the encoder produce:$\hat{W} = W$, $\hat{V} = R(W) V$, where $R(W)$ is a rotation matrix that changes based on $W$. Here, $\hat{V}$ is marginally $\mathcal{N}(0, I)$ and $\hat{V} \indep \hat{W}$ (because of the spherical symmetry of the Gaussian). However, $\hat{V}$ functionally depends on $W$. Thus, the representations failed to ``disentangle" $V$ from $W$ even though all constraints are met. 
\end{example}

\begin{example}[Reduced efficiency of PO-2SLS using learned $\widehat{W}$ and $\widehat{V}$]\label{ex:reduced_eff}
Consider the following SCM whose induced DAG is given in Figure~\ref{fig:reduced_eff_example}:
\begin{equation*}
  \begin{aligned}
    V &\coloneqq H + \epsilon^V \\
    W^1 &\coloneqq \epsilon^{W^1} \\
    W^2 &\coloneqq W^1 + \epsilon^{W^2} \\
    D &\coloneqq V + W^1 + W^2 + H + \epsilon^D \\
    Y &\coloneqq \theta D + H + \epsilon^Y,
  \end{aligned}
\end{equation*}
where $\epsilon^{j}$ for $j\in\{V, W^1, W^2, D, Y\}$ as well as $H$ are mutually independenet 
univariate noise variables with finite variances.
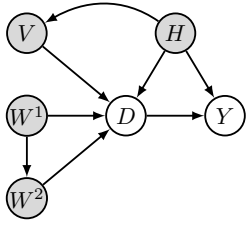
\begin{figure}
\centering
  \resizebox{0.20\textwidth}{!}{\begin{tikzpicture}
    % Style definitions
    \tikzstyle{var}=[circle, draw, thick, minimum size=6mm, font=\small, inner sep=0]
    \tikzstyle{arrow}=[-latex, thick]
    \tikzstyle{doublearrow}=[latex-latex, thick]
    \tikzstyle{dashedarrow}=[-latex, thick, dashed]

    % Latent variables
    \node[var, fill=gray!30] (U1) at (0, 3.25) {$V$};
    \node[var, fill=gray!30] (U2) at (0, 2) {$W^1$};
    \node[var, fill=gray!30] (U3) at (0, 0.75) {$W^2$};

    % Nodes for D_1 to D_R
    \node[var] (D) at (1.5, 2) {$D$};

    % Node for H
    \node[var, fill=gray!30] (H) at (2.25, 3.25) {$H$};

    % Node for Y
    \node[var] (Y) at (3, 2) {$Y$};

    % Arrows from some I variables to some D variables
    \draw[arrow] (U1) -- (D);
    \draw[arrow] (U2) -- (D);
    \draw[arrow] (U3) -- (D);
    \draw[arrow] (U2) -- (U3);

    % Hidden variable
    \draw[arrow] (H) -- (D);
    \draw[arrow] (H) -- (Y);
    % \draw[latex-latex, thick] (U1) to [out=35,in=145] (H);
    \draw[-latex, thick] (H) to [out=145,in=30] (U1);

    % Some D points to Y
    \draw[arrow] (D) -- (Y);
\end{tikzpicture}}
  \caption{DAG induced by the SCM in Example~\ref{ex:reduced_eff}}
  \label{fig:reduced_eff_example}
\end{figure}
Suppose $\widehat{V}\coloneqq m_{\text{en}}(Z)^{1:2} = \left(V, W^1\right)$ and $\widehat{W} \coloneqq m_{\text{en}}(Z)^3 = W^2$ are the learned representations of $V$ and $W$ respectively. 
Let $n$ be the number of observations. The asymptotic variance of $\hat\beta^{\text{2SLS}}$ using $\widehat{W}$ as the instrument can be found as  
\begin{equation*}
    n\cdot \bbV(\hat\beta^{\text{2SLS}}) = \bbV(H + \epsilon^Y)\Big/ 
    \frac{\left(2\bbV(\epsilon^{W^1}) + \bbV(\epsilon^{W^2})\right)^2}{\bbV(\epsilon^{W^1}) + \bbV(\epsilon^{W^2})}.
\end{equation*}
The asymptotic variance of $\hat\beta^{\text{PO-2SLS}}$ using the residual of $\widehat{W}$ 
after regressing out $\widehat{V}$ is given by 
\begin{equation*}
    n\cdot \bbV(\hat\beta^{\text{PO-2SLS}}) = \frac{\bbV(H + \epsilon^Y)}{\bbV(\epsilon^{W^2})}.
\end{equation*}
This gives $\bbV(\hat\beta^{\text{PO-2SLS}}) > \bbV(\hat\beta^{\text{2SLS}})$ for any $n$.
\end{example}

\begin{example} \label{ex:collider_bias}
Consider the following SCM 
\begin{align*}
    H &= \epsilon^H \\
    V &= H + \epsilon^V \\
    W &= \epsilon^W \\
    D &= V + W + H + \epsilon^D \\
    Y &= \theta D + H + \epsilon^Y,
\end{align*}
where $\epsilon^{j}$ for $j\in\{V, W, D, Y\}$ as well as $H$ are mutually independenet 
univariate noise variables with finite variances. 

Suppose $\widehat{W} = W$ and $\widehat{V} = V + W$. Denote the variances of the noise variables 
as $\sigma_{j}^2$ for $j\in\{H,V, W, D,Y\}$ respectively. 
Let $\widetilde{W}$ denote the residual from regressing $\widehat{W}$ on $\widehat{V}$, similarly, 
for $\widetilde{D}$ and $\widetilde{Y}$. 
Then 
\begin{align*}
    \hat\beta^{PO-2SLS} &\pto \frac{\cov(\widetilde{W}, \widetilde{Y})}{\cov(\widetilde{W}, \widetilde{D})} \\
    &= \frac{\sigma_H^2\sigma_W^2 (\theta + 1)}{\sigma_H^2 + \sigma_V^2 + \sigma_W^2}\cdot 
    \frac{\sigma_H^2 + \sigma_V^2 + \sigma_W^2}{\sigma_H^2\sigma_W^2} \\
    &= \theta + 1
\end{align*}
Therefore the asymptotic bias of $\hat\beta^{PO-2SLS}$ is $1$. 
\end{example}

%%%%%%%%%%%%%%%%%%%%%%%%%%%%%%%%%%%%%%%%%%%%%%%%%%%%%%%%%%%%%%%%%%%%%%%%%%%%%%%%
\section{Proofs} \label{app:proofs}
%-------------------------------------------------------------------------------

\begin{assumption}[Regularity conditions] \label{ass:regularity}
Under Setting~\ref{set:multi_env}, for any given $k\in[K]$ (superscript omitted below for clarity), assume the following conditions hold
\begin{enumerate}[label=\arabic*),itemsep=5pt]
    \item $\lim_{n\to\infty} \left(\frac{1}{n} \bW g(\bV,\bH,\bm{\epsilon})\right) \pto \bbE[W g(V, H, \epsilon)] = \bm{0}$
    \item $\lim_{n\to\infty} \left(\frac{1}{n} \bW^\top \bW\right) \pto \bbE[WW^\top]$ exists, is finite, and is positive definite
    \item $\lim_{n\to\infty} \left(\frac{1}{n} \bV^\top \bV\right) \pto \bbE[VV^\top]$ exists, is finite, and is positive definite
    \item $\lim_{n\to\infty} \left(\frac{1}{n} \bW^\top \bD\right) \pto \bbE[WT^\top]$ exists, is finite, and has full column rank 
    \item $\lim_{n\to\infty} \left(\frac{1}{n} \bV^\top \bD\right) \pto \bbE[VT^\top]$ exists and is finite
    \item $\bbV[g(V, H, \epsilon)] = \sigma^2$ exists, is finite, and full column rank
    \item $\bbV[W g(V, H, \epsilon)]$ exists and is finite. 
\end{enumerate}
    
\end{assumption}

\propValidTransform*
\begin{proof}
By Doob–Dynkin lemma \citep[e.g.,][]{billingsley95}, if $\kappa$ is a bijective 
measurable function, then the $\sigma$-algebra generated by $Z$ and the $\sigma$-algebra 
generated by $\kappa(Z)$ are exactly the same. That is, 
\begin{equation*}
    \sigma(Z) = \sigma(\kappa(Z)). 
\end{equation*}
Since the three conditions in Definition~\ref{def:valid_iv} are marginal dependence, 
marginal independence, and conditional independence statements of $Z$ respectivley, they 
also hold for $\kappa(Z)$ \citep[see][]{dawid1980conditional}. 
\end{proof}

\propIndepDecompose*
\begin{proof}
We prove the contrapositive. Assume that there exists a measurable function 
$\varphi$ such that $\varphi(Z)$ is a valid instrument according to Definiton~\ref{def:valid_iv}. 
Let $A = \varphi(Z)$. Then by the functional representation lemma \citep{li2018strong}, 
there exists a random variable $B$ and a funciton $\eta$ such that $A\indep B$ 
and $Z = \eta(A, B)$. This concludes the proof.
\end{proof}

\corRankPreserve*
\begin{proof}
The output of an intertible affine transformation maintains the rank condition \eqref{eq:rank_condition}. 
\end{proof}

\lemInden*

\begin{proof}

This proof relies on the Doob-Dynkin Lemma \citep[e.g.,][]{billingsley95}: For any two random variables $X$ and $Y$, $\sigma(X) \subseteq \sigma(Y)$ if and only if there exists a measurable function $f$ such that $X = f(Y)$ a.s..

\textbf{Part 1: Perfect Identification}

($\Rightarrow$) Assume $\sigma(\eta(Z)) = \sigma(U^S)$. This implies:
\begin{enumerate}[label=(\arabic*)]
\item $\sigma(\eta(Z)) \subseteq \sigma(U^S)$: By the Doob-Dynkin lemma, there exists a measurable function $\delta$ such that $\eta(Z) = \delta(U^S)$ a.s.
\item $\sigma(U^S) \subseteq \sigma(\eta(Z))$: By the Doob-Dynkin lemma, there exists a measurable function $\omega$ such that $U^S = \omega(\eta(Z))$ a.s.
\end{enumerate}

Substituting (1) into (2), we get $U^S = \omega(\delta(U^S))$ a.s. This implies that $\omega \circ \delta$ is the identity map on the support of $U^S$. Therefore, $\delta$ must be injective and $\omega$ must be its inverse. Thus, $\delta$ is a bijection between the supports (up to null sets).

($\Leftarrow$) Assume $\eta(Z) = \delta(U^S)$ where $\delta$ is a measurable bijection with measurable inverse. Since $\eta(Z)$ is a function of $U^S$, $\sigma(\eta(Z)) \subseteq \sigma(U^S)$. Since $\delta$ is invertible, we can write $U^S = \delta^{-1}(\eta(Z))$. Thus, $U^S$ is a measurable function of $\eta(Z)$, implying $\sigma(U^S) \subseteq \sigma(\eta(Z))$. Combining these gives $\sigma(\eta(Z)) = \sigma(U^S)$.

\textbf{Part 2: Partial Identification} 

($\Rightarrow$) Assume $\emptyset \subsetneq \sigma(\eta(Z)) \subsetneq \sigma(U^S)$. The inclusion $\sigma(\eta(Z)) \subseteq \sigma(U^S)$ implies, by Doob-Dynkin, that there exists a measurable $\delta$ such that $\eta(Z) = \delta(U^S)$ a.s. The strict inequality $\sigma(\eta(Z)) \neq \sigma(U^S)$ implies that $\sigma(U^S) \not\subseteq \sigma(\eta(Z))$. By the converse of Doob-Dynkin, this means $U^S$ cannot be written as a measurable function of $\eta(Z)$. Consequently, $\delta$ cannot be invertible (injective) a.s., since if it were, we could construct the inverse $\omega = \delta^{-1}$ to recover $U^S$, which contradicts the strict inclusion.

($\Leftarrow$) Assume $\eta(Z) = \delta(U^S)$ but $\delta$ is not invertible (specifically, no measurable inverse exists). Since $\eta(Z)$ is a function of $U^S$, we have $\sigma(\eta(Z)) \subseteq \sigma(U^S)$. Since no function $\omega$ exists s.t. $U^S = \omega(\eta(Z))$, we have $\sigma(U^S) \not\subseteq \sigma(\eta(Z))$.
Thus, $\sigma(\eta(Z)) \subsetneq \sigma(U^S)$.

\end{proof}

\thmIdentifyW*
The proof shares similarities with Theorem 2 in \citet{ahuja24multi}, but with the following 
key differences. 
First, under Setting~\ref{set:multi_env}, the latent variables are partitioned into two sets, 
$V$ and $W$, which are mutually independent. Interventions induced by the hidden variable $H$ affect 
only the distribution of $V$, and due to the independence between $V$ and $W$, the distribution of $W$ 
remains unchanged. Second, the interventions on $V$ are not restricted to single-node interventions
but may involve arbitrary multi-node interventions. 
\begin{proof}

By Theorem 1 in \citet{ahuja24multi}, there exists an invertible matrix $B\in\bbR^{d_{\text{en}}\times (p+q)}$ and a vector $c\in\bbR^{d_{\text{en}}}$ such that $m_{\text{en}}(Z) = BU + c$, where 
$U = (W^\top, V^\top)^\top$. 
Partition $B$ into blocks corresponding to $W$ and $V$, 
$B = \begin{pmatrix}
    B_W & B_D \\
    B_C & B_V
\end{pmatrix}$, where 
$B_W \coloneqq B_{:\hat{p}}^{:p}$, 
$B_C \coloneqq B_{\hat{p}:}^{:p}$, 
$B_D \coloneqq B_{:\hat{p}}^{p:}$, and  
$B_V\coloneqq B_{\hat{p}:}^{p:}$. 
Consider the first $\hat{p}$ dimension of $m_{\text{en}}(Z)$, we have 
\begin{equation*}
    m_{\text{en}}(Z)^{:\hat{p}} = B_W W + B_D V + c^{:\hat{p}}. 
\end{equation*}
Since the autoencoder satisfies the invariance constraint, for any $k_1, k_2\in[K]$, we have 
\begin{equation*}
    B_W W^{(k_1)} + B_D V^{(k_1)} = 
    B_W^{:p} W^{(k_2)} + B_D V^{(k_2)}.
\end{equation*}
With $W\indep V$, we can factorize the characteristic functions of the above as 
\begin{equation*}
    \phi_{B_W W^{(k_1)}}(t) \cdot \phi_{B_D V^{(k_1)}}(t) = 
    \phi_{B_W W^{(k_2)}}(t) \cdot \phi_{B_D V^{(k_2)}}(t).
\end{equation*}
Since $W^{(k_1)} \deq W^{(k_2)}$ and since characteristic functions are continuous and equal to $1$ at the origin, $\phi_{B_W W}(t)$ is non-zero in a neighborhood of $0$, allowing us to divide it out, we obtain 
\begin{equation*}
    B_D V^{(k_1)}\deq B_D V^{(k_2)}. 
\end{equation*}

Let $\cS$ be the covering collection from Assumption~\ref{ass:sufficient_variability}. 
Pick an arbitrary $S\in\cS$, by Assumption~\ref{ass:sufficient_variability}, there exists 
$k_1,k_2\in[K]$ such that
\begin{equation*}
    B_D^{S} V^{S, (k_1)} + B_D^{-S} V^{-S, (k_1)} \deq 
    B_D^{S} V^{S, (k_2)} + B_D^{-S} V^{-S, (k_2)},
\end{equation*}
where $B_D^{S}$ denotes the columns of $B_D$ indexed by $S$. 

Now, since $V^S \indep V^{-S}$ and $V^{-S, (k_1)}\deq V^{-S, (k_2)}$ by Assumption~\ref{ass:sufficient_variability} (i), we have 
\begin{align*}
    \phi_{B_D^{S} V^{S, (k_1)}}(t) \cdot \phi_{B_D^{-S} V^{-S, (k_1)}}(t) 
    = \phi_{B_D^{S} V^{S, (k_2)}}(t) \cdot \phi_{B_D^{-S} V^{-S, (k_2)}}(t),
\end{align*}
similar as above, we have $\phi_{B_D^{S} V^{S, (k_1)}}(t) = \phi_{B_D^{S} V^{S, (k_2)}}(t)$. 
Thus $B_D^{S} V^{S, (k_1)} \deq B_D^{S} V^{S, (k_2)}$. 
Then, by Assumption~\ref{ass:sufficient_variability} (ii), we have that $B_D^{S} = \bm{0}$.

Since this holds for all $S\in\cS$ and $\bigcup_i S_i = [q]$, we have that $B_D = \bm{0}$. 
Therefore,
\begin{equation*}
    m_{\text{en}}(Z)^{:\hat{p}} = B_W W + c',
\end{equation*}
where $c' = c^{:\hat{p}}$. 

Lastly, if $\hat{p}\geq p$, since $B$ is invertible and $B_D = \bm{0}$, $B_W$ 
must have full column rank $p$. Thus, $W$ is identified up to an affine transformation; 
if $\hat{p} < p$, then $B_W$ has rank at most $\hat{p}$, which means $W$ is identified 
up to a partial linear projection. 
\end{proof}

\corIdentifyV*
\begin{proof}
By Theorem~\ref{thm:identfy_w}, the satisfaction of the reconstruction and invariance constraints implies that there exists an invertible matrix $B \in \bbR^{\hat{d} \times (p+q)}$ and a constant vector $c$ such that $m_{\text{en}}(Z) = BU + c$, where $U \coloneqq (W^\top, V^\top)^\top$. Furthermore, the matrix $B$ has the block structure:
\begin{equation*} 
B = \begin{pmatrix}B_{W} & \bm{0} \\ B_C & B_{V}
\end{pmatrix},
\end{equation*}
where $B_W = B_{:\hat{p}}^{:p}\in \bbR^{\hat{p} \times p}$ corresponds to the identified $W$ component, $B_C \in \bbR^{(\hat{d}-\hat{p}) \times p}$ represents the potential leakage of $W$ into the second part, and $B_V = B_{\hat{p}:}^{p:(p+q)}\in \bbR^{(\hat{d}-\hat{p}) \times q}$ corresponds to $V$. Since $B$ is invertible, the diagonal blocks $B_W$ and $B_V$ must have full column rank. We have 
\begin{align*}
\widehat{W} &\coloneqq m_{\text{en}}(Z)^{:\hat{p}} = B_{W} W + c_{1} \\ 
\widehat{V} &\coloneqq m_{\text{en}}(Z)^{\hat{p}:} = B_C W + B_{V} V + c_{2}
\end{align*}
The independence constraint $\widehat{W} \indep \widehat{V}$ implies that their covariance is zero.
\begin{align*}
\bm{0} &= \cov(\widehat{W}, \widehat{V}) \\
&= \cov(B_{W} W, B_C W + B_{V} V) \\
&= B_{W} \cov(W, W) B_C^\top + B_{W} \cov(W, V) B_{V}^\top.
\end{align*}
By assumption, $W \indep V$, so $\cov(W, V) = \bm{0}$. Let $\Sigma_W \coloneqq \cov(W, W)$, which is positive definite by assumption. The equation simplifies to:
\begin{equation*}
B_{W} \Sigma_W B_C^\top = \bm{0}.
\end{equation*}
We multiply from the left by $B_{W}^\top$:
\begin{equation*}
(B_{W}^\top B_{W}) \Sigma_W B_C^\top = \bm{0}.
\end{equation*}
Since $B_{W}$ has full column rank, $B_{W}^\top B_{W}$ is invertible. Since $\Sigma_W$ is positive definite, it is also invertible. Therefore, the only solution is:
\begin{equation*}
B_C^\top = \bm{0} \implies B_C = \bm{0}.
\end{equation*}
Thus, $\widehat{V} = B_{V} V + c_{2}$, where $B_{V}$ has full column rank. This concludes the proof.
\end{proof}

\thmIdentifyWdiffeo*

\begin{proof}
Let $f$ be the ground truth mixing diffeomorphism defined in Assumption~\ref{ass:diffeo_mix}. Define the composite function $h: \bbR^{p+q} \to \bbR^{\hat{p}}$ as:$$h(w, v) \coloneqq m_{\text{en}}(f(w, v))^{:\hat{p}}.$$Since $m_{\text{en}}$ and $f$ are smooth, $h$ is smooth. Our goal is to show that $h$ does not depend on $v$, i.e., $\frac{\partial h}{\partial v} = \bm{0}$.

Let $\cS$ be the covering collection from Assumption~\ref{ass:sufficient_variability_general} (i). Pick an arbitrary subset $S \in \cS$. By Assumption~\ref{ass:sufficient_variability_general} (i), there exist environments $k_1, k_2$ such that the distribution of $V^S$ changes ($V^{S, (k_1)} \dneq V^{S, (k_2)}$), while the complement $V^{-S}$ is invariant and $W$ is invariant. Furthermore, the block $V^S$ is independent of the invariant block $(W, V^{-S})$ in these specific environments. 
The invariance constraint on the encoder output implies:
$$m_{\text{en}}(Z^{(k_1)})^{:\hat{p}} \deq m_{\text{en}}(Z^{(k_2)})^{:\hat{p}}.$$
Substituting $Z = f(W, V)$, we have:
$$h(W^{(k_1)}, V^{S, (k_1)}, V^{-S, (k_1)}) \deq h(W^{(k_2)}, V^{S, (k_2)}, V^{-S, (k_2)}).$$

The relation above represents a smooth function $h$ whose output distribution remains invariant despite the input block $V^S$ changing distribution (while the rest of the inputs $W, V^{-S}$ remain invariant and independent of $V^S$). This is exactly the precondition for Assumption~\ref{ass:sufficient_variability_general} (ii). Therefore, we conclude that $h$ must be independent of the variables in $S$:
$$\frac{\partial h}{\partial v_j}(w, v) = \bm{0} \quad \forall j \in S.$$

Since $\bigcup_{S \in \cS} S = [q]$, the condition $\frac{\partial h}{\partial v_j} = \bm{0}$ holds for all $j \in \{1, \dots, q\}$. Consequently, the gradient of $h$ with respect to the entire vector $v$ is zero everywhere.
This implies that $h(w, v)$ is constant with respect to $v$. Thus, there exists a smooth function $\psi: \bbR^p \to \bbR^{\hat{p}}$ such that:
$$h(w, v) = \psi(w).$$

Substituting back the definition of $h$:
$$m_{\text{en}}(Z)^{:\hat{p}} = \psi(W).$$

Since $m_{\text{en}}$ and $f$ are diffeomorphisms, their composition $m_{\text{en}} \circ f$ has full rank $p+q$. The Jacobian of the top $\hat{p}$ components is:
$$J = \begin{pmatrix} 
\frac{\partial h}{\partial w} & \frac{\partial h}{\partial v} 
\end{pmatrix} = \begin{pmatrix} 
\frac{\partial \psi}{\partial w} & \bm{0} 
\end{pmatrix}.$$

Case $\hat{p} \geq p$: For the mapping to preserve as much information as possible (as enforced by the reconstruction loss on the full $p+q$ space), $\psi$ must have rank $p$. Thus, $W$ is identified up to an injective smooth transformation (a diffeomorphism onto its image).

Case $\hat{p} < p$: $\psi$ has rank at most $\hat{p}$, representing a dimensionality reduction of $W$ (partial identification).
\end{proof}

\section{Experiment Details and Additional Experiments} \label{app:exp_details}

\subsection{Details of the Semi-synthetic Experiment} \label{app:details_semisyn}

\textbf{Genotype Data Preprocessing.} We extract genetic variants in the GLP1R region from the All of Us (AoU) Biobank. 
The genomic region of GLP1R was obtained from \citet{GeneCards_GLP1R}, which is based on the GRCh38 human genome assembly \citep{GRCh38}. This results in $4,830$ variants. We then split multiallelic variants into biallelic variants and combine these with single-nucleotide polymorphisms (SNPs), setting allele frequency thresholds of between $0.01$ and $0.99$. Moreover, to ensure the observations are close to i.i.d., we remove related samples according to the relatedness information provided by AoU, and we subset the East Asian population and African population defined by the predicted genetic ancestry of AoU. Then, for each of the two populations, we further filter out variants that have missing data in more than $1\%$ of the samples and
variants that deviate significantly from Hardy-Weinberg Equilibrium with p-value $10^{-6}$. We take the common remaining SNPs between the two populations, resulting in $652$ variants. 

\textbf{Semi-synthetic Data Generation.} As mentioned in Section~\ref{sec:exp_aou}, we perform Independent Component Analysis (ICA) on the pooled genotype matrix, after standardizing it. 
We take the first component, which differs the most from the two populations, as the true $V$ component, and generate another $9$ components from standard Gaussian as the $W$ component to ensure that it has a stable distribution across populations. An example of the resulting distributions of $V$ and $W$ is given in Figure~\ref{fig:aou_example_vw}. 

\begin{figure}
    \centering
    \includegraphics[width=0.75\linewidth]{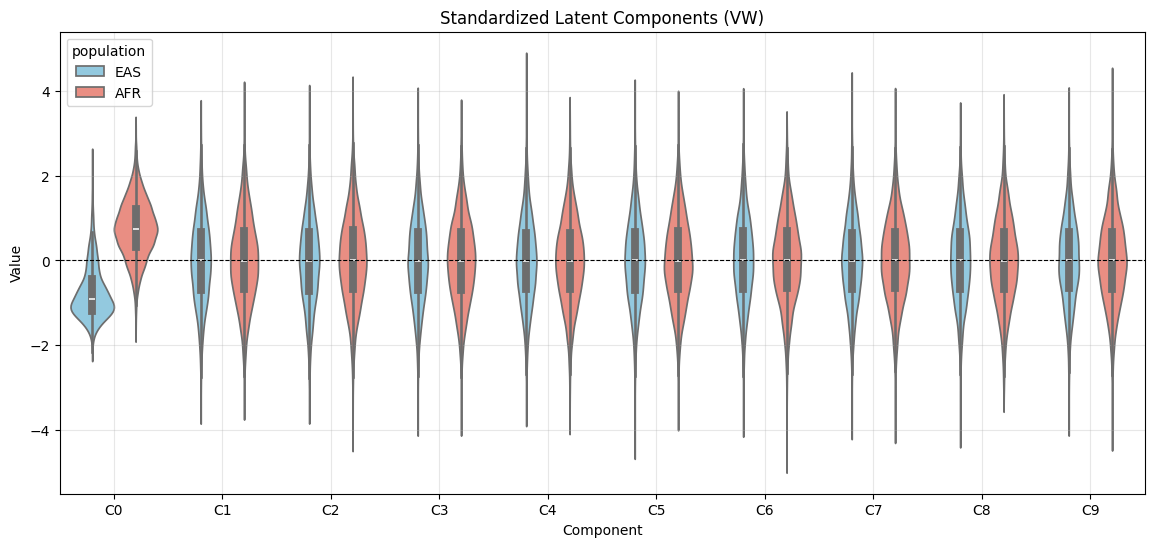}
    \caption{Distributions of the latent variables. The first component ($C0$) (obtained from real data) is used as the true $V$ and the rest of the components (simulated) are used as true $W$.}
    \label{fig:aou_example_vw}
\end{figure}

Based on $V^{(k)}$ and $W^{(k)}$ for $k\in\{1,2\}$, we simulate other variables based on the following linear SCM:
\begin{align*}
    H^{(k)} &\coloneqq V^{(k)} \\
    D^{(k)} &\coloneqq V^{(k),\top}\beta_1 + W^{(k), \top}\beta_2 + H^{(k),\top}\alpha_1 + \varepsilon_D \\
    Y^{(k)} &\coloneqq D\theta + H^{(k),\top}\alpha_2 + \varepsilon_Y,
\end{align*}
where $\varepsilon_D\sim\cN(0, I_2)$, $\varepsilon_Y\sim\cN(0,1)$, 
$\alpha_1 = (0.5, 0.5)^\top$, $\alpha_2 = 0.75$, \\
$\beta_1 = (-0.5, -0.5, -0.5, -0.5, -0.5, 1.0, 1.0, 1.0, 1.0, 1.0)^\top$,
$\beta_2 = (1.0, 1.0, 1.0, 1.0, 1.0, -0.5, -0.5, -0.5, -0.5, -0.5)^\top$, 
and $\theta = (1.0, 1.0)^\top$. 

\textbf{Model Specification.} 
\begin{itemize}
    \item \textbf{Encoder.} For all experiments in Section~\ref{sec:exp_ablation}, we employ an MLP encoder that consists of the following layers:
    \begin{itemize}
        \item Linear$(d_z, 200)$
        \item Linear$(200, \hat{p} + \hat{q})$
    \end{itemize}
    \item \textbf{Decoder}: we use an MLP decoder that consists of the following layers:
    \begin{itemize}
        \item Linear$(\hat{p} + \hat{q}, 100)$
        \item Linear$(100, d_z)$
    \end{itemize}
\end{itemize}
We use Maximum Mean Discrepancy (MMD) for the invariance loss and Hilbert-Schmidt Independence Criterion (HSIC) for the independence loss, both with a polynomial kernel of degree $2$, and fix $\lambda_1 = \lambda_2 = 1.0$. We use a batch size of $250$ and a total number of $100$ epochs. Learning rate is fixed at $10^{-3}$, weight decay is fixed at $10^{-4}$, and gradient clip is set at $1.0$. 

\subsection{Details of the Ablation Studies} \label{app:details_ablation}

\begin{notation}
    Let $I_d$ denote the diagonal matrix of dimension $d\times d$ and $\bm{1}_{m}$ denote the vector of ones of dimension $m\times 1$. 
\end{notation}

\textbf{Data-Generating Process.}

We generate data from the following linear SCM:
\begin{align*}
    H^{(k)} &\coloneqq \varepsilon_H \\
    V^{(k)} &\coloneqq \eta^{(k)} H + \varepsilon_V \\
    W^{(k)} &\coloneqq \varepsilon_W \\
    D^{(k)} &\coloneqq V^{(k),\top}\beta_1 + W^{(k), \top}\beta_2 + H^{(k),\top}\alpha_1 + \varepsilon_D \\
    Y^{(k)} &\coloneqq D\theta + H^{(k),\top}\alpha_2 + \varepsilon_Y,
\end{align*}
where $\varepsilon_H \sim\cN(0, I_2)$, $\varepsilon_V \sim\cN(0, \Sigma_V)$ and 
$\varepsilon_W\sim\cN(0,\Sigma_W)$ with $\Sigma_V = \Sigma_w = \begin{pmatrix}
    1 & 0.5 \\
    0.5 & 1
\end{pmatrix}$, $\varepsilon_D\sim\cN(0, I_2)$, $\varepsilon_Y\sim\cN(0,1)$, 
$\eta^{(1)} = I_2$, $\eta^{(2)} = 2\cdot I_2$, 
$\alpha_1 = \alpha_2 = \bm{1}_2$, 
$\beta_1 = \beta_2 = \bm{1}_{2}$, and $\theta = 1.0$. 

Then $Z^{(k)}$ is generated from $V^{(k)}$ and $W^{(k)}$, $Z^{(k)} \coloneqq f(W^{(k)}, V^{(k)})$, based on one of the following \emph{mixing functions}:
\begin{itemize}
    \item Polynomial of degree $1$-$3$: Injective polynomial function as defined in  Definition~\ref{def:injective_poly}, implemented by first mapping the input to the polynomial terms of the given degree, then transform them by a single linear layer neural network. 
    \item Invertible MLP: Based on a 2-layer MLP with leaky-ReLU activation function such that the network is approximately invertible. Implementation is taken from the github repository of \citet{vonkugelgen2021self}. 
\end{itemize}

We fix the seed for all randomness such as weight initialization and the weights in invertible MLP mixing, and we use a sequence of seeds for data generation ($20$ datasets in each experiment setting). Each dataset contains $10,000$ training samples and $2,000$ validation samples. 

\textbf{Model Specification.}
\begin{itemize}
    \item \textbf{Encoder.} For all experiments in Section~\ref{sec:exp_ablation}, we employ an MLP encoder consists of the following layers:
    \begin{itemize}
        \item Linear$(d_z, 100)$
        \item LayerNorm
        \item ReLU
        \item Linear$(100, 100)$
        \item LayerNorm
        \item ReLU
        \item Linear$(100, \hat{p} + \hat{q})$
    \end{itemize}
    \item \textbf{Decoder}: For experiments with polynomial mixing, we use a polynomial decoder of the mixing degree, which is implemented by first mapping the input to the polynomial terms of the given degree, then transform them by a single linear layer neural network; for experiments with invertible MLP mixing, we use a MLP decoder consists of the following layers:
    \begin{itemize}
        \item Linear$(\hat{p} + \hat{q}, 100)$
        \item ReLU
        \item Linear$(100,100)$
        \item ReLU
        \item Linear$(100, d_z)$
    \end{itemize}
\end{itemize}

\textbf{Loss Functions.} The loss function is a weighted sum of the following three terms: 
\begin{itemize}
    \item Reconstruction loss $\cL_{\text{rec}}$: Mean Squared Error (MSE).
    \item Invariance loss $\cL_{\text{inv}}$: MMD with polynomial kernel of degree $2$.   
    \item Independence loss $\cL_{\text{ind}}$: HSIC with polynomial kernel of degree $2$. 
\end{itemize}
To prevent the representations from collapsing, we add a small penalty on the log determinant of the covariance matrix of the representations ($m_{\text{en}}(Z)$). 

\textbf{Hyperparameter in the Loss Function.} For each dataset, we consider the following hyperparameter grid:
\begin{itemize}
    \item $\lambda_1 \in \{1, 5, 10\}$
    \item $\lambda_2 \in \{0, 1, 5, 10\}$
\end{itemize}
which result in $12$ combinations. We choose the hyperparameters based on the total loss on the validation set. For experiments without independence loss in \textbf{With and without independence loss}, we select the best hyperparameter combination with $\lambda_2 = 0$; otherwise we select from $\lambda_2 \neq 0$. 

\textbf{Other Hyperparameters.} We usethe Adam optimizer with the following hyperparameters:
\begin{itemize}
    \item Batch size: $500$
    \item Total epochs: $400$
    \item Learning rate: $10^{-3}$
    \item Weight decay: $10^{-4}$
    \item Gradient clip: $1.0$
\end{itemize}

\subsection{Additional Ablation Studies} \label{app:additional_ablation}

\begin{figure}[!htb]
\centering
\includegraphics[width=0.49\linewidth]{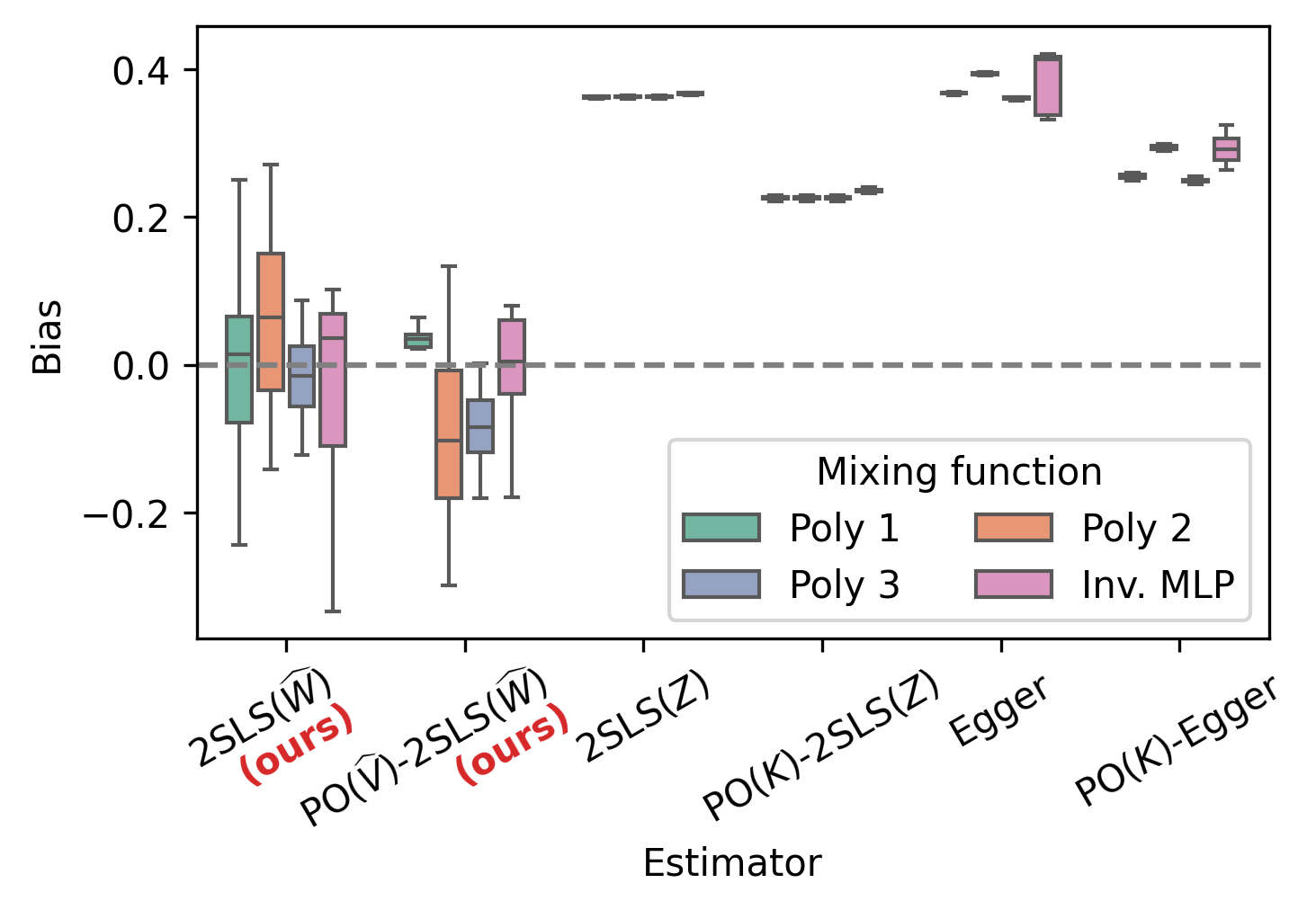}
\raisebox{0.7cm}{\includegraphics[width=0.45\linewidth]{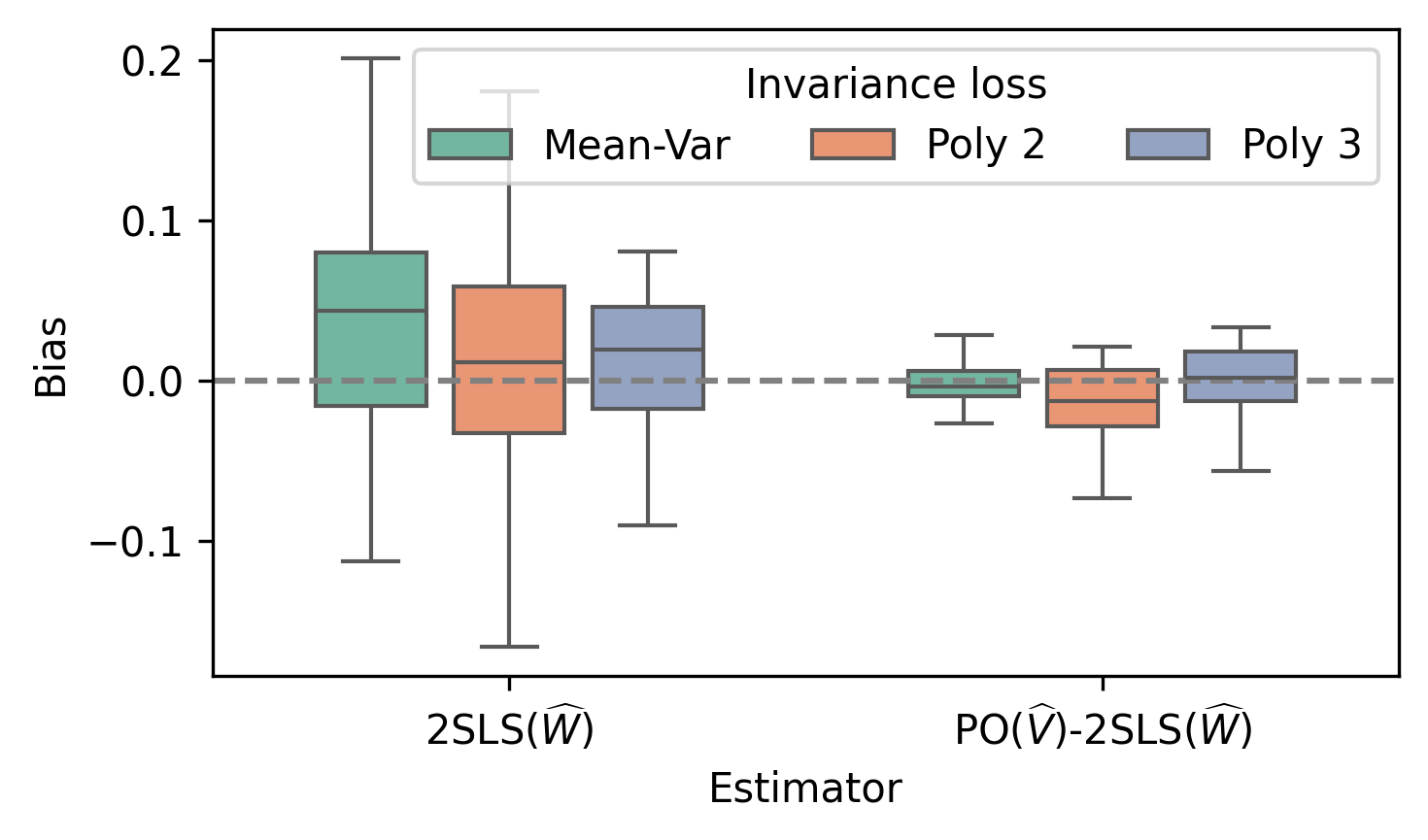}}
\caption{Left: Estimation bias in a $3$-population experiment given different mixing functions. Right: Estimation bias when using different invariance losses given a polynomial mixing of degree $3$.}
\label{fig:additional_exp}
\end{figure}

\textbf{More than two populations.} We illustrate the case where we have three populations where 
the data is generated such that the distribution of $V^1$ differs between populations one and two, 
and the distribution of $V^2$ differs in populations one and three. 
In this case, compared to the data-generating process in Appendix~\ref{app:details_ablation}, we have 
$\eta^{(1)} = I_2$, $\eta^{(2} = \begin{pmatrix}
    2.0 & 0.0 \\
    0.0 & 1.0
\end{pmatrix}$, and 
$\eta^{(3} = \begin{pmatrix}
    1.0 & 0.0 \\
    0.0 & 2.0
\end{pmatrix}$. 
The result is presented in Figure~\ref{fig:additional_exp} (left), where we compare with other methods as in Section~\ref{sec:exp_ablation}. We see that the results are similar to Figure~\ref{fig:polymix}, and Egger estimators even show a larger variance. 

\textbf{Different Loss functions} Other than MMD with a polynomial kernel of degree $2$, 
we also conducted experiments with other loss functions for invariance. These include a simple 
``Mean-Var" loss, which is the sum of L2 norm of the difference in mean and the difference in variance of two samples, and MMD with a polynomial kernel of degree $3$. The estimation bias is reported in 
Figure~\ref{fig:additional_exp} (right). We see that in this case, the ACE biases are comparable 
across different invariance losses, for the PO-2SLS estimator, the simple Mean-Var loss even 
results in the smallest variance. 

\end{document}